\pdfoutput=1
\documentclass[11pt]{article}

\usepackage[letterpaper,margin=1in]{geometry}
\usepackage{lipsum}

\usepackage[utf8]{inputenc} %
\usepackage[T1]{fontenc}    %

\usepackage{url}            %
\usepackage{booktabs}       %
\usepackage{nicefrac}       %
\usepackage{microtype}      %
\usepackage{enumitem}
\usepackage{dsfont}
\usepackage{multicol}
\usepackage{xcolor}
\usepackage{mdframed}
\usepackage{multirow}

\usepackage{amsthm,amsfonts,amsmath,amssymb,epsfig,color,float,graphicx,verbatim, enumitem}

\usepackage{algorithmicx}
\usepackage[noend]{algpseudocode}
\usepackage{algorithm}

\usepackage{mathtools}
\usepackage{bbm}

\usepackage{thm-restate}

\definecolor{green}{rgb}{0.0, 0.5, 0.0}
\usepackage[colorlinks,citecolor=blue,linkcolor=magenta,bookmarks=true,hypertexnames=false]{hyperref}
\usepackage[nameinlink,capitalize]{cleveref}

\crefformat{equation}{(#2#1#3)}
\crefname{lemma}{lemma}{lemmata}
\crefname{claim}{claim}{claims}
\crefname{theorem}{theorem}{theorems}
\crefname{proposition}{proposition}{propositions}
\crefname{corollary}{corollary}{corollaries}
\crefname{claim}{claim}{claims}
\crefname{remark}{remark}{remarks}
\crefname{definition}{definition}{definitions}
\crefname{problem}{problem}{problems}
\crefname{fact}{fact}{facts}
\crefname{question}{question}{questions}
\crefname{condition}{condition}{conditions}
\crefname{algorithm}{algorithm}{algorithms}
\crefname{assumption}{assumption}{assumptions}
\crefname{notation}{notation}{notation}
\crefname{cond}{Condition}{Conditions}

  {%
   \par\noindent{\bfseries\upshape Proof Sketch\ }%
  }%
  {\qed}

\newtheorem{theorem}{Theorem}[section]
\newtheorem{lemma}[theorem]{Lemma}
\newtheorem{proposition}[theorem]{Proposition}
\newtheorem{corollary}[theorem]{Corollary}
\newtheorem{claim}[theorem]{Claim}

\newtheorem{definition}[theorem]{Definition}

\newtheorem{fact}[theorem]{Fact}

\theoremstyle{definition}

\newtheorem{condition}[theorem]{Condition}

\newtheorem{remark}[theorem]{Remark}

\newcommand{\eps}{\varepsilon}

\newcommand{\dtv}{D_\mathrm{TV}}
\newcommand{\Ind}{\mathds{1}}
\newcommand{\1}{\Ind}

\renewcommand{\Pr}{\operatorname*{\mathbb{P}}}
\newcommand{\Var}{\operatorname*{\mathrm{Var}}}

\newcommand{\E}{\operatorname*{\mathbb{E}}}

\newcommand{\poly}{\operatorname*{\mathrm{poly}}}

\newcommand{\Bernoulli}{\mathrm{Ber}}

\newcommand{\DTV}{D_\mathrm{TV}}

\renewcommand{\vec}[1]{\boldsymbol{\mathbf{#1}}}

\renewcommand{\d}{\mathrm{d}}
\renewcommand{\tilde}{\widetilde}
\renewcommand{\hat}{\widehat}

\def\R{\mathbb R}

\def\N{\mathbb N}

\def\Z{\mathbb Z}

\def\sgn{\mathrm{sgn}}

\newcommand{\cA}{\mathcal{A}}

\newcommand{\cC}{\mathcal{C}}

\newcommand{\cI}{\mathcal{I}}

\newcommand{\cN}{\mathcal{N}}

\newcommand{\cS}{\mathcal{S}}

\newcommand{\cU}{\mathcal{U}}

\newcommand{\cX}{\mathcal{X}}

\newcommand{\op}{\textnormal{op}}
\newcommand{\fr}{\textnormal{F}}

\newcommand{\normal}{\mathcal{N}}
\def\d{\mathrm{d}}
\def\proj{\mathrm{Proj}}

\DeclarePairedDelimiter\abs{\lvert}{\rvert}
\let\vec\mathbf

\newcommand{\multix}{x^{(1)},\ldots,x^{(n)}}

\def\colorful{0}

\ifnum\colorful=1
\newcommand{\inote}[1]{\footnote{{\bf [Ilias: {#1}\bf ] }}}
\newcommand{\dnote}[1]{\footnote{{\bf [Daniel: {#1}\bf ] }}}
\newcommand{\tnote}[1]{\footnote{{\bf [Thanasis: {#1}\bf ] }}}

\else

\newcommand{\inote}[1]{}
\newcommand{\dnote}[1]{}
\newcommand{\tnote}[1]{}

\fi

\allowdisplaybreaks

\title{High-Dimensional Gaussian Mean Estimation under Realizable Contamination}

\author{
Ilias Diakonikolas\thanks{Supported by NSF Medium Award CCF-2107079, ONR award number N00014-25-1-2268, 
and an H.I. Romnes Faculty Fellowship.}\\
University of Wisconsin-Madison\\
{\tt ilias@cs.wisc.edu}\\
\and
Daniel M. Kane\thanks{Supported by NSF Medium Award CCF-2107547.}\\
University of California, San Diego\\
{\tt dakane@cs.ucsd.edu}
\and
Thanasis Pittas\thanks{Supported by NSF Medium Award CCF-2107079.}\\
University of Wisconsin-Madison\\
{\tt pittas@wisc.edu}\\
}
\date{\today}

\begin{document}

\maketitle

\begin{abstract}%
We study mean estimation for a Gaussian distribution with identity covariance in $\R^d$ under a missing data scheme termed \emph{realizable $\eps$-contamination} model. In this model an adversary can choose a function $r(x)$ between 0 and $\eps$ and each sample $x$ goes missing with probability $r(x)$.
Recent work \cite{MVB+2024} proposed this model as an intermediate-strength setting between Missing Completely At Random (MCAR)---where missingness is independent of the data---and Missing Not At Random (MNAR)---where missingness may depend arbitrarily on the sample values and can lead to non-identifiability issues. That work established information-theoretic upper and lower bounds for mean estimation 
in the realizable contamination model. Their proposed estimators 
incur runtime exponential in the dimension, leaving open the possibility of computationally efficient algorithms in high dimensions.
In this work, we establish an information–computation gap 
in the Statistical Query model (and, as a corollary, for Low-Degree Polynomials and PTF tests), showing that algorithms must either use substantially
more samples than information-theoretically necessary or incur exponential runtime. 
We complement our SQ lower bound with an algorithm whose sample–time tradeoff 
nearly matches our lower bound. Together, these results qualitatively characterize the complexity of Gaussian mean estimation under $\eps$-realizable contamination.\looseness=-1
\end{abstract}

\setcounter{page}{0}

\thispagestyle{empty}

\newpage

\section{Introduction}\label{sec:intro}

The fundamental assumption underlying much of classical statistics is that 
datasets consist of i.i.d. samples drawn independently 
from the distribution we aim to learn. In practice, however, this assumption is often violated. A common violation arises when the observations are incomplete. 
This can occur for a variety of reasons, 
for example due to data collection via crowdsourcing \cite{vuurens2011much} or peer grading \cite{PHC+2013,KWL+2013}, 
and has led to the development of a broad literature studying 
estimators that are robust to missing data (see, e.g., \cite{Rub1976a,Tsi2006,little2019statistical} for standard references). 

Different kinds of missingness patterns can be classified based on their nature. The simplest and most benign form of missingness is known as ``Missing Completely at Random'' (MCAR), meaning that the mechanism causing the missingness 
is independent of the data itself. A large body of work studies statistical estimation and inference under MCAR assumptions. Examples include 
sparse linear regression \cite{LW2012,belloni2017linear}, classification \cite{tony2019high,SBC2024}, PCA \cite{elsener2019sparse,zhu2022high,yan2024inference}, covariance and precision matrix estimation \cite{lounici2014high,loh2018high} and changepoint estimation \cite{xie2012change,follain2022high}.

However, the MCAR assumption fails to capture many settings of interest 
where missingness is {\em systematic}. For example, individuals with depression are more likely to submit incomplete questionnaires \cite{carreras2021missing}, and data may be missing for patients who discontinue treatment or go off protocol due to poor tolerability \cite{little2012prevention}. Other commonly studied missingness mechanisms include MAR, in which the missigness dependence on the data values is only through the observed portion of the data \cite{LR2019,seaman2013meant,farewell2022missing}, 
and MNAR, where the missigness may depend in any way on the data \cite{Rob1997,RR1997,scharfstein1999adjusting,shpitser2015missing,ALJY2020,diakonikolas2025linear}. While these models are more expressive, the resulting statistical guarantees are sometimes weak, for example failing to ensure identifiability of the target parameters \cite{MVB+2024}. 

Recent work~\cite{MVB+2024}, proposed and studied 
a different model---which they termed \emph{realizable contamination}; see \Cref{def:cont_model}. This missingness model is not MCAR, 
but it is milder than MAR and MNAR: the missingness depends on the data, but in a more structured %
manner. As we explain below, the realizable contamination model 
can also be viewed as an analogue of Massart noise~\cite{massart2006risk}---a widely studied label-corruption model in supervised learning that lies between purely random and adversarial corruptions---in the context of unsupervised setting.

\begin{definition}[Realizable $\eps$-contamination model]\label{def:cont_model}
    Let $\eps \in (0,1)$ be a contamination parameter.
    Let $P$ be a distribution on a domain $\cX$ with probability density function (pdf) $p:\cX\to \R_+$. An $\eps$-corrupted version $\tilde P$ of $P$ is any distribution that can be obtained as follows: First, an adversary chooses a function $f:\cX\to \R_+$ such that $(1-\eps)p(x) \leq f(x) \leq p(x)$. $\tilde P$ is then defined to be the distribution whose samples are generated as follows:
    \begin{itemize} 
        \item With probability $\int_{\cX} f(x) \d x$ the sample is drawn from the distribution with pdf $f(x)/\int_{\cX} f(x) \d x$.
        \item With probability $1 - \int_{\cX} f(x) \d x$ the sample is set to the special symbol $\perp$.
    \end{itemize}
\end{definition}

\cite{MVB+2024} define this model using a somewhat different formalism that 
is equivalent to \Cref{def:cont_model}. The equivalence of the definitions is discussed in \Cref{app:intro}. Below, we discuss connections between \Cref{def:cont_model} and preexisting concepts in the statistics and ML literature.

First, the realizable contamination model shares some similarities 
with Huber's contamination model from classical robust statistics \cite{Tuk60,Hub64}. In Huber's model, the observed samples are drawn i.i.d.\ 
from a mixture that with probability $(1-\eps)$ outputs a sample from the inlier (clean) distribution and with probability $\eps$ ouputs a sample from an arbitrary outlier distribution. As explained in \Cref{app:intro}, the realizable contamination model of \Cref{def:cont_model} can equivalently be described as a mixture where, with probability $1-\eps$, a sample is drawn from $P$, and otherwise from an MNAR version of $P$. Thus, the inlier component is the same in both models. That said, Huber's model is more powerful due to its ability to introduce arbitrary outlier samples.

Second, the model of \Cref{def:cont_model} can be viewed as an unsupervised analogue of \emph{Massart} noise \cite{massart2006risk}. In supervised learning, Random Classification Noise~\cite{AL88} and adversarial noise represent two extremes, and Massart noise was introduced as a realistic intermediate model. There, the label of $x$ is flipped with probability $\eta(x)$, with 
$\int \eta(x) dx$ equal to the total corruption rate. As shown in \Cref{app:intro}, \Cref{def:cont_model} admits an equivalent description: a sample $x \sim P$ is discarded with probability $1 - f(x)/p(x)$, closely mirroring the Massart noise model.

Lastly, as observed in \cite{MVB+2024}, realizable $\eps$-contamination generalizes several previous approaches for studying restricted forms of MNAR, including certain biased sampling models \cite{vardi1985empirical,gill1988large,bickel1991large,aronow2013interval,sahoo2022learning}, related restrictions known as sensitivity conditions in the causal inference literature \cite{rosenbaum1987sensitivity,zhao2019sensitivity}, and the missingness model used in the truncated statistics literature \cite{DGTZ2018,KTZ2019,DKPZ2024}. The latter can be viewed, at a high level, as corresponding to \Cref{def:cont_model} with $\eps = 1$, albeit with important differences between the two models. A more detailed discussion of related work appears in \Cref{app:intro}.

In this paper, we study the complexity of 
arguably the most fundamental statistical task 
in the presence of realizable $\eps$-contamination: estimating the mean of a multivariate Gaussian with identity covariance. Specifically, given access to 
$\eps$-corrupted samples from $P = \cN(\mu,  I_d)$ on $\R^d$  
where the mean vector $\mu$ 
is unknown, and a desired accuracy $\delta$, 
the goal is to compute an estimate $\hat{\mu}$ that lies within Euclidean distance $\delta$ of the true mean $\mu$.
The only known results for this problem are information-theoretic. \cite{MVB+2024} established matching upper and lower bounds on the task’s sample complexity. For high constant probability of success, it is
\begin{align}\label{eq:sample_compl}
n = \Theta\left(\frac{d}{\delta^2(1-\eps)}\right) + \exp\left( \Theta\left( \frac{\log(1+\tfrac{\eps}{1-\eps})}{\delta} \right)^2 \right).
\end{align}
We note that the first term in \eqref{eq:sample_compl} corresponds to the standard sample complexity for Gaussian mean estimation from the $n(1-\eps)$ clean samples, while the second term captures the additional cost due to realizable $\eps$-contamination. Remarkably, these results hold for all $\eps, \delta \in (0,1)$, highlighting two important differences from Huber’s contamination model (and other robust statistics/missing data models). First, $\delta \to 0$ implies consistent estimation is possible; that is, one can achieve arbitrarily small error, whereas under Huber’s model, estimation is only possible for $\delta \gg \eps$. Second,  
estimation remains possible even for $\eps \ge 1/2$, i.e., 
when the majority of samples are corrupted.

For the sake of completeness, the Appendix provides a self-contained 
and more concise proof of the sample complexity bounds, 
albeit with slightly weaker guarantees: \Cref{app:sample_lb} establishes a lower bound in the one-dimensional case, while \Cref{app:sample_ub} presents 
a (computationally inefficient) algorithm 
using $\tfrac{d}{\eps^2}\exp((\log(1+\eps/(1-\eps))/\delta)^2)$ samples.

Although the information-theoretic aspects of our problem are well-understood, 
much less is known about the computational aspects (the focus of this paper). 
Specifically, essentially the only known multivariate estimator 
is based on computing a cover of the unit sphere with $2^{\Theta(d)}$ directions and applying the one-dimensional estimator to the projections of the samples along each direction. This has runtime $2^{\Theta(d)} \poly(n,d)$ and therefore begs  the following questions:
\begin{center}
\emph{Is there a sample (near-)optimal and polynomial-time mean estimator 
in the presence of realizable contamination? More broadly, what is the computational complexity of this task?}
\end{center}
In this paper we provide strong formal evidence (in the form of Statistical Query/Low-Degree Polynomial or PTF lower bounds) that the problem exhibits an \emph{information–computation gap}---meaning that either the runtime or sample size required is inherently large. 
Second, we give an algorithm whose sample-time tradeoff nearly matches 
our lower bound. Together, these results qualitatively characterize the complexity of Gaussian mean estimation with realizable contamination.

\subsection{Our Results}

We begin with our first main result: an information-computation gap for Gaussian mean estimation in the $\eps$-realizable contamination model. As is typical for such gaps, rather than proving them unconditionally, one usually establishes them under complexity-theoretic hardness assumptions or for restricted (yet natural) families of algorithms. We state the result for the family of Statistical Query (SQ) algorithms below (and discuss other models later on). Before stating the result, we provide a brief summary of SQ.

\paragraph{SQ Model Basics}
The model, introduced by \cite{Kearns:98} and extensively studied since, 
see, e.g.,  \cite{FGR+13}, considers algorithms that, 
instead of drawing individual samples from the target distribution, 
have indirect access to the distribution using the following oracle:
\begin{definition}[STAT Oracle]
    Let $D$ be a distribution on $\R^d$. A statistical query is a bounded function $f: \R^d \to [-1,1]$. For $\tau > 0$, the $\mathrm{STAT}(\tau)$ oracle responds to the query $f$ with a value $v$ such that $|v - \E_{X \sim D}[f(X)]| \leq \tau$. We call $\tau$ the tolerance of the statistical query.
\end{definition}

\noindent An \emph{SQ lower bound} for a learning problem is an unconditional statement 
that any SQ algorithm for the problem either needs to perform a large number $q$ of 
queries, or at least one query with very small tolerance $\tau$. 
Note that, by Hoeffding-Chernoff bounds, a query of tolerance $\tau$ 
is implementable by non-SQ algorithms by drawing $1/\tau^2$ 
samples and averaging them. Thus, an SQ lower bound intuitively 
serves as a tradeoff between runtime of $\Omega(q)$ and sample complexity of 
$\Omega(1/\tau)$.

\vspace{11pt}
Our SQ lower bound for Gaussian mean estimation under $\eps$-realizable contamination shows that any SQ algorithm must either make 
an exponential number of queries or make at least one query with tolerance $d^{-\tilde{\Omega}\left(\frac{\log(1+\eps/(1-\eps))}{\delta}\right)^2}$ (corresponding to a sample complexity of $d^{\tilde{\Omega}\left(\frac{\log(1+\eps/(1-\eps))}{\delta}\right)^2}$ for sample-based algorithms). The lower bound is established for a testing version of the problem: distinguishing between two means that differ by $\delta$ in $\ell_2$ distance. Since any estimator with accuracy $\delta/2$ solves this testing problem, the SQ hardness directly carries over to the estimation setting.

\begin{restatable}[SQ lower bound]{theorem}{SQLB}\label{thm:SQ}
There exists a sufficiently small absolute constant $c>0$ such that the following holds for all $\eps \in (0,1)$ and $\delta \in (0,c\eps)$.
    Define $m = \lfloor  c \gamma^2 / \log \gamma \rfloor$ for $\gamma:= \frac{1}{\delta} \log\left( 1 + \tfrac{\eps/2}{1-\eps/2} 
        \right)$. For any dimension $d \geq (m \log d)^2$ we have the following:
    Any SQ algorithm that distinguishes between $\cN(0,  I_d)$ and $\cN(\delta   u ,  I_d)$ for $  u$ being a unit vector, under the contamination model of \Cref{def:cont_model} needs either $2^{d^{\Omega(1)}}$ queries or at least one query with $d^{-m/16}$ tolerance.\looseness=-1
\end{restatable}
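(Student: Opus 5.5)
The plan is to use the standard non-Gaussian component analysis (NGCA) framework for SQ lower bounds (Diakonikolas--Kane--Stewart). The first step is a one-dimensional construction. We build two univariate distributions $A_0$ and $A_1$. Each must be an $\eps$-corrupted version of a Gaussian in the sense of \Cref{def:cont_model}: $A_0$ a corruption of $\cN(0,1)$ and $A_1$ a corruption of $\cN(\delta,1)$. Crucially, they should be identical as distributions over $\R\cup\{\perp\}$, so that the probability of $\perp$ agrees and the sub-densities $f_0,f_1$ coincide. This collapses the two hypotheses into a single "planted" distribution, and everything then hinges on the moment-matching properties of that common sub-density.

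Concretely, I would look for a sub-density $f$ on $\R$ with the following properties.
\begin{enumerate}
\item It satisfies $(1-\eps/2)\phi(x)\le f(x)\le \phi(x)$ and $(1-\eps/2)\phi(x-\delta)\le f(x)\le\phi(x-\delta)$ wherever the likelihood ratio allows. The ratio $\phi(x-\delta)/\phi(x)=e^{\delta x-\delta^2/2}$ stays within $[1-\eps/2,(1-\eps/2)^{-1}]$ only for $|x|\lesssim \log(1+\frac{\eps/2}{1-\eps/2})/\delta=\gamma$. Outside this window, we instead use the slack from $\eps$ versus $\eps/2$ to handle the tails, which carry Gaussian mass $e^{-\Omega(\gamma^2)}$.
\item After normalization, its first $m$ moments match those of $\cN(0,1)$, up to error small enough for the chi-square bound.
\end{enumerate}
To get these, start from $g(x)=\min(\phi(x),\phi(x-\delta))$ scaled appropriately. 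Then add a small correction supported in $|x|\le\gamma$, chosen as a low-degree polynomial times $\phi$ (or via LP duality / Gaussian quadrature), so that moments $1,\dots,m$ match. The correction must be pointwise at most an $\eps/4$ fraction of $\phi$ inside the window. This is feasible because the discrepancy to fix lives in tails of mass $e^{-\Omega(\gamma^2)}$. Meanwhile a degree-$m$ polynomial correction bounded on $[-\gamma,\gamma]$ costs at most roughly $\gamma^{O(m)}$ times the moment error, and with $m=c\gamma^2/\log\gamma$ we have $\gamma^{O(m)}e^{-\Omega(\gamma^2)}\ll 1$. This is exactly where the choice $m\approx c\gamma^2/\log\gamma$ comes from.

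Next, we lift to high dimensions. For a unit vector $v$, let $P_v$ be the distribution that, with probability $\Pr[\perp]$, outputs $\perp$. Otherwise it outputs $x$ whose projection on $v$ has the normalized density of $f$ and whose orthogonal part is $\cN(0,I_{d-1})$. This $P_v$ is simultaneously a valid corruption of $\cN(0,I_d)$ and of $\cN(\delta v,I_d)$, since $\phi_d(x)=\phi(v\cdot x)\phi_{d-1}(x_\perp)$. Hence any tester that distinguishes the mean-$0$ case from the mean-$\delta u$ case must in particular distinguish $\{P_v\}$ from the $v$-independent reference (the same $\perp$ mass plus $\cN(0,I_d)$). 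That reference is a legitimate corruption of $\cN(0,I_d)$, because $f\ge(1-\eps)\phi$ holds after the normalization slack. The $\perp$ atom is harmless: it has the same mass under both hypotheses, so correlations decompose onto the continuous part.

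Finally, we apply the standard NGCA SQ machinery. Take a set of $2^{d^{\Omega(1)}}$ nearly orthogonal unit vectors with pairwise inner products at most $d^{-1/2+o(1)}$ (available since $d\ge(m\log d)^2$). Matching $m$ moments gives pairwise correlations $|\chi(P_u,P_v)|\le |u\cdot v|^{m+1}\chi^2(A,\cN(0,1))\le d^{-\Omega(m)}$. The chi-square term must be bounded by a constant, or at least by $d^{o(m)}$, which follows from $f\le \max(\phi,\phi(\cdot-\delta))$. The SQ dimension lemma then yields the bound: $2^{d^{\Omega(1)}}$ queries or tolerance $d^{-m/16}$. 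The main obstacle is the one-dimensional moment-matching construction with the pointwise sandwich constraints, namely controlling the polynomial correction's sup norm on $[-\gamma,\gamma]$ relative to $\phi$. For that step I would first try Chebyshev/Hermite extremal bounds on the window.
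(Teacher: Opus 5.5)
Your overall architecture matches the paper's. The null is $\cN(0,I_d)$ with $\perp$ inserted completely at random, the visible one-dimensional density equals a multiple of $\phi$ on a window of half-width about $\gamma$, a small polynomial correction removes the residual tail-induced moment error, and $m\approx c\gamma^2/\log\gamma$. The gap is that the one-dimensional object you aim for does not exist.

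No sub-density $f$ can be a realizable $\eps$-corruption of both $\cN(0,1)$ and $\cN(\delta,1)$. For $x>\frac{1}{\delta}\log\frac{1}{1-\eps}+\frac{\delta}{2}$ one has $(1-\eps)\phi(x-\delta)>\phi(x)$, so the constraints $f\le\phi$ and $f\ge(1-\eps)\phi(\cdot-\delta)$ conflict in the right tail, and symmetrically in the left tail. Trading between $\eps$ and $\eps/2$ cannot help, since even the full budget fails there. Nor is this a technicality. A single distribution on $\R\cup\{\perp\}$ that is a valid corruption of both Gaussians would make estimation to accuracy $\delta/3$ impossible with any number of samples, contradicting the finite sample complexity \eqref{eq:sample_compl}. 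So your claims that $A_0$ and $A_1$ can be identical, and that $P_v$ is simultaneously a corruption of $\cN(0,I_d)$ and of $\cN(\delta v,I_d)$, are false.

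Your concrete starting point also fails, in two ways.
\begin{itemize}
\item It is not a valid corruption. For any constant $\lambda$, the function $\lambda\min(\phi,\phi(\cdot-\delta))$ equals $\lambda\phi(x)$ for $x>\delta/2$. This falls below $(1-\eps)\phi(x-\delta)$ for large $x$, so it is not even a valid corruption of $\cN(\delta,1)$.
\item Its moment error is not a tail effect. The function is symmetric about $\delta/2$, so its normalization has mean $\delta/2$. The discrepancy you must correct is therefore $\Theta(\delta)$ already in the first moment and comes from the bulk, not from tails of mass $e^{-\Omega(\gamma^2)}$. The premise behind your estimate $\gamma^{O(m)}\times(\text{moment error})\ll\eps$ is lost.
\end{itemize}

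The fix is to drop any validity requirement with respect to $\cN(0,1)$. Your final NGCA step only uses two facts: $P_v$ is a corruption of $\cN(\delta v,I_d)$, and the reference is a corruption of $\cN(0,I_d)$. The latter holds simply because the reference is deletion at rate $1-\int f\le\eps$, not because of any inequality $f\ge(1-\eps)\phi$.

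Concretely, as in \Cref{lem:match_inside_interval}, set $f=(1-\eps/2)\phi$ on $[\delta/2-\gamma,\delta/2+\gamma]$, where this lies inside $[(1-\eps)\phi(\cdot-\delta),\phi(\cdot-\delta)]$. In the tails, choose $f$ anywhere in that sandwich, tuned by an intermediate-value argument so that $\int f=1-\eps/2$ exactly. Only then is the residual moment error purely a tail effect.

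Finally, the correction cannot be spread over the whole window at pointwise size $\eps\phi/4$. The slack vanishes at the left endpoint, where $(1-\eps/2)\phi=\phi(\cdot-\delta)$. The paper instead places it on $[-1,1]$, where $\delta\ll\eps$ gives $\Omega(\eps)$ slack, and bounds it through a Legendre expansion (\Cref{lem:correct_moments}).
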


\noindent The above result is based on framing our problem as a special case of Non-Gaussian Component Analysis (NGCA), a general testing problem that is known to be hard in many restricted models of computation, beyond the SQ model, including low-degree polynomial methods \cite{BBH+2021}, PTFs \cite{diakonikolas2025ptf} and the Sum-of-Squares framework \cite{diakonikolas2024sum}. As such, we highlight that qualitatively similar hardness results to those in \Cref{thm:SQ} also hold in these models. The formal statements are deferred to \Cref{app:different_models_hardnesss}.

Our second main result presents an algorithm whose leading term in the sample complexity is $d^{O\left(\log(1+\tfrac{\eps}{1-\eps})/\delta\right)^2}$, with a runtime that is nearly polynomial in $n$ and $d$. In light of our SQ lower bound, this conceptually means that (up to certain factors that we explain below) our algorithm achieves the optimal tradeoff between information and computation.

\begin{restatable}[Algorithmic result]{theorem}{EFFICIENTALG}\label{thm:spectral_alg}
    Let $\eps \in (0,1)$ and $\delta \leq \eps$ be parameters.\footnote{If $\delta \gg \eps$, pre-existing algorithms from robust statistics (see, e.g., \cite{DK2023}) obtain error $O(\delta)$.\looseness=-1}
        There exists an algorithm that takes as input samples from an $\eps$-corrupted version of $\cN(\mu,  I_d)$, the parameters $\eps,\delta$ and returns $\hat \mu \in \R^d$ such that $\|\hat{\mu} - \mu\|_2 \leq \delta$ with probability at least $0.9$. Moreover, the sample complexity of the algorithm is $n = \frac{(kd)^{O(k)}}{\eps^2}$ where $k:=\left(\frac{1}{\delta}\log(1+\frac{\eps}{1-\eps})\right)^2$ and the runtime of the algorithm is $\exp(e^{\tilde O(k)}/\eps^2)\poly(n,d)$.
    \end{restatable}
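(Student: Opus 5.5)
The approach is a reduction to low-dimensional estimation. We find a subspace $V$ of dimension $\tilde O(\cdot)$ (depending only on $k$ and $\eps$, not on $d$) that nearly contains the mean. We then run the inefficient cover-based estimator of \Cref{app:sample_ub} inside $V$, which costs $\exp(\dim V)$ time, while the orthogonal component is handled by simple averaging.

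\textbf{Moments along a direction.} The key structural fact concerns the conditional distribution $Q$ of the non-$\perp$ samples, which has density proportional to $f$, with $(1-\eps)p \le f \le p$. Fix a unit vector $v$ with $\langle v,\mu\rangle$ small. Then the first $O(k)$ Hermite moments of the projection $\langle v, x\rangle$ under $Q$ are constrained: $f/p$ lies in $[1-\eps,1]$, so the projected density is a bounded reweighting of a Gaussian. Conversely, the one-dimensional lower bound (\Cref{app:sample_lb}) suggests a dichotomy. An adversary can hide a shift $\delta$ along $v$ only by matching roughly $k$ moments; if the shift is larger, some Hermite moment of degree at most $k$ deviates by at least $\eta := (k)^{-O(k)}\eps$-ish. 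I would prove this quantitative converse via a one-dimensional polynomial approximation argument: approximate the likelihood ratio $\log$-bounded function by degree-$k$ polynomials in $L^2(\cN)$. This is where $\log(1+\eps/(1-\eps))/\delta$ enters, since $\log(f/p)$ has range $\log\frac{1}{1-\eps}$ and shifting by $\delta$ requires oscillation scale $\delta$.

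\textbf{Algorithm.} First, compute the empirical mean $\bar\mu$ of the non-$\perp$ samples. Its error is at most $O(\eps)$, since $Q$ is a bounded reweighting of $\cN(\mu,I)$, and we recentre at $\bar\mu$. For each degree $t\le k$, estimate the empirical Hermite tensor $T_t = \E_Q[H_t(x-\bar\mu)]$, flattened to a $d\times d^{t-1}$ matrix, using $(kd)^{O(k)}/\eps^2$ samples so that every entry concentrates to within $\eta$ in spectral/Frobenius norm. Let $V$ be the span of the top singular vectors with singular value at least $\eta/2$, together with $\bar\mu$. The Frobenius bound $\|T_t\|_F\le e^{O(t)}$ (from the boundedness of $f/p$) limits $\dim V \le e^{\tilde O(k)}/\eps^2$.

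\textbf{Guarantee and hardest step.} For any unit $v\perp V$, all degree $\le k$ directional moments are small. By the dichotomy, $|\langle v,\mu-\bar\mu\rangle|\lesssim \delta$ along $V^\perp$, and we actually need this in $\ell_2$. To get it, include $\bar\mu$'s projection and argue along the single direction of $\mathrm{Proj}_{V^\perp}(\mu-\bar\mu)$. Inside $V$ we run the cover estimator on projected samples; the cover has size $\exp(\dim V)$, giving runtime $\exp(e^{\tilde O(k)}/\eps^2)\poly(n,d)$, with samples $\dim V\cdot e^{O(k)}/\eps^2$, dominated by the tensor estimation. The main obstacle is the quantitative one-dimensional dichotomy, i.e.\ proving that a mean shift of size $\delta$ in a realizable contamination forces a degree-$O(k)$ Hermite moment to be at least $k^{-O(k)}$ in magnitude. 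I would attempt it by duality: if all low moments vanish, then the realizable family contains an $\cN(\delta,1)$-reweighting moment-matching $\cN(0,1)$ reweightings. Its nonexistence, beyond the construction of \Cref{app:sample_lb}, would follow from bounding $\int (g_1-g_2)\,x$ using polynomial approximation of $x\mapsto \log$-ratio on $[-\sqrt{k},\sqrt{k}]$ plus Gaussian tails.
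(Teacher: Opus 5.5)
Your architecture is the paper's: recentre at a rough estimate, threshold the singular values of the flattened empirical Hermite tensors, bound $\dim V$ through $\|T_t\|_2$, run the cover-based estimator on the projected data, and apply the directional certificate to the unit vector along $\proj_{V^\perp}(\mu-\bar\mu)$. The genuine gap is the step you flag yourself. You need a one-dimensional certificate: if $a:=v^\top(\mu-\bar\mu)$ has $|a|>\delta/2$, then the visible law of $v^\top x$ has $|\E[h_t(v^\top x)]|\ge\eta$ for some $1\le t\le k$. This is never proved, and the route you sketch would not prove it.

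The adversary's weight $f/p$ is an arbitrary measurable function with values in $[1-\eps,1]$, so there is no low-degree structure to approximate. Moreover, the paper's SQ construction produces a realizable corruption of $\cN(\delta,1)$ (for $\delta\ll\eps$) that matches the first $\Omega(\gamma^2/\log\gamma)$ moments of $\cN(0,1)$ exactly. So any certificate must be essentially sharp in the degree, and it must use the pointwise one-sided constraints $(1-\eps)p\le f\le p$ together with $Z:=\int f\le 1$, not mere closeness of densities. Only the implication ``large shift $\Rightarrow$ some large moment'' is needed. The converse you start from (small shift $\Rightarrow$ constrained moments) is false but irrelevant, since directions with large moments and no shift merely enlarge $V$.

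The missing idea is to test against a sign-definite power of $x$. Marginalizing $f$ preserves the sandwich, so the visible law of $v^\top x$ is $f/Z$ with $(1-\eps)\phi(x-a)\le f(x)\le\phi(x-a)$; take $a>0$. For odd $k$, use the lower constraint where $x^k>0$ and the upper one where $x^k<0$:
\[
\int_{\R} x^k f(x)\,\d x\;\ge\;e^{-a^2/2}\int_0^\infty x^k\phi(x)\bigl[(1-\eps)e^{ax}-e^{-ax}\bigr]\,\d x .
\]
The bracket is at least $-\eps$ for all $x\ge 0$, and at least $\eps$ once $e^{ax}\ge\frac{1}{1-\eps}$. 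All but a $2^{-\Omega(k)}$ fraction of $\int_0^\infty x^k\phi(x)\,\d x$ comes from $x\ge\sqrt{k}/2$. Hence for $k\ge C\bigl(\frac{1}{\delta}\log\frac{1}{1-\eps}\bigr)^2$ the right-hand side is $\gtrsim\eps\,e^{-a^2/2}\,\E|z|^k$.

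Since $Z\le 1$ and $\E_{z\sim\cN(0,1)}[z^k]=0$, expand $x^k=\sum_{t\le k}c_t h_t(x)$ with $\sum_t c_t^2=\E[z^{2k}]$ and apply Cauchy--Schwarz. This gives some $1\le t\le k$ with $|\E[h_t(v^\top x)]|\ge\eps\,e^{-a^2/2}k^{-O(k)}$. That is your $\eta$ up to a $(1-\eps)^{O(1)}$ factor, which is absorbed in $e^{\tilde O(k)}$.

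The paper argues with an even power instead: $x^k\ge 0$ gives $\E_{P'}[x^k]\ge(1-\eps)\E_{\cN(a,1)}[x^k]$. Be aware that this version needs $(1-\eps)\cosh(\sqrt{k}\,a)>1$, because $\E_{\cN(a,1)}[x^k]\le(k-1)!!\cosh(\sqrt{k}\,a)$. For small $\eps$ that requires $k\gtrsim\eps/\delta^2$, whereas the theorem's $k$ is $\asymp(\eps/\delta)^2$ there. The factor $(1-\eps)$ loss is real: an adversary deleting at rate $\eps$ on $\{|x|\ge\sqrt{k}/2\}$ achieves $\E_{P'}[x^k]\approx(1-\eps)\E_{\cN(a,1)}[x^k]$. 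Also make the concentration error a constant fraction of the threshold. With error $\eta$ and threshold $\eta/2$, as you wrote, there is no margin.

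Separately, your warm-start bound is false for $\eps$ near $1$. Keep every point with $u^\top(x-\mu)>s$ and delete each other point with probability $\eps$, where $\Pr[u^\top(x-\mu)>s]=1-\eps$. This shifts the visible mean by $\Theta(\sqrt{\log\frac{1}{1-\eps}})$, not $O(\eps)$. What is true is that the visible density is at most $p/(1-\eps)$. So the bias in any direction is at most $\phi(t)/(1-\eps)$ with $\Pr_{z\sim\cN(0,1)}[z>t]=1-\eps$, which is $O(1+\sqrt{\log\frac{1}{1-\eps}})$, and the empirical mean attains this with $O(d/(1-\eps))$ samples.

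That is the same accuracy the paper obtains from list decoding plus the tournament. So your empirical-mean warm start is a legitimate simplification of that step, provided you replace $\|T_t\|_2\le e^{O(t)}$ by $\|T_t\|_2\le\frac{1}{1-\eps}e^{O(t(1+\log\frac{1}{1-\eps}))}$, as in the paper's tensor-norm lemma.
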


Up to the $k^{O(k)}/\eps^2$ factor (which is independent of the dimension), the sample complexity 
achieved by our algorithm matches our SQ lower bound. 
The runtime of our algorithm is polynomial in $n$ and $d$, 
up to the term $\exp(e^{\tilde O(k)}/\eps^2)$ (which is also dimension-independent).

\subsection{Our Techniques} \label{sec:techniques}

\paragraph{SQ lower bound} Our SQ lower bound builds on the framework of \cite{DKS17}, which establishes the following.
If $A$ is a one-dimensional distribution whose first $m$ moments match those of $\cN(0,1)$, then distinguishing between $\cN(0,I)$ and the $d$-dimensional distribution that agrees with $A$ along an unknown direction and is standard Gaussian in all orthogonal directions requires either $q = 2^{d^{\Omega(1)}}$ queries or tolerance $\tau < d^{-\Omega(m)}$ in the SQ model.
We show that the robust mean estimation problem we consider can be cast in this form. The main challenge is to design an adversary, as in \Cref{def:cont_model}, that corrupts $\cN(\delta,1)$ so that it matches
$m = \tilde \Omega(\log(1+\tfrac{\eps}{1-\eps})/\delta)^2$ moments of $\cN(0,1)$. We accomplish this via a two-step approach.

First, we construct a function $f$ for the adversary in \Cref{def:cont_model} that uses $\eps/2$ of the corruption budget to match the standard Gaussian over a large interval $x \in [-B,B]$, where
$B = \log(1+\tfrac{\eps}{1-\eps})/\delta$. This relies on the observation that the probability density functions of two unit-variance Gaussians whose means differ by $\delta$ are within a $(1 \pm \eps)$ multiplicative factor of each other, except in the $B$-tails.

After this step, the moments do not exactly match those of $\cN(0,1)$ due to discrepancies outside the interval $[-B,B]$. However, for a small constant $c$, the difference in the first $cB^2$ moments is exponentially small because the mass in the tails decays exponentially for Gaussians. Consequently, by adding a very small polynomial $p(x)$ to $f(x)$ on $[-1,1]$, we can eliminate these remaining discrepancies. By imposing the moment-matching constraints and expanding the polynomial $p(x)$ in the basis of Legendre polynomials, we show that such a polynomial indeed $p(\cdot)$ exists.
Adding this polynomial correction corresponds to censoring each point $x$ with probability
$\eps/4 + p(x)(1-\eps/4)\1(|x|<1)/\phi(x)$, where $\phi$ denotes the standard Gaussian pdf.\looseness=-1

\paragraph{Algorithmic result} As discussed in the introduction, from an information-theoretic perspective, a number of samples as shown in \Cref{eq:sample_compl} suffices to estimate the mean up to error $\delta$. The algorithm achieving this guarantee relies on the fact that two Gaussians whose means differ by $\delta$ have probability density functions that are within a multiplicative factor of $1 \pm \eps$ of each other, except in the  $(\log(1+\tfrac{\eps}{1-\eps})/\delta)$-tails. Consequently, by accurately estimating the densities in these tails using $\exp((\log(1+\tfrac{\eps}{1-\eps})/\delta)^2)$ samples, one can rule out any candidate mean $\mu'$ that lies at distance more than $\delta$ from the true mean $\mu$. This yields a one-dimensional estimator; (also see \Cref{app:sample_ub} for more details).
A multivariate extension can be obtained by repeating this procedure along every direction in a fine cover of the unit sphere, thereby learning an $\ell_2$-approximation of $\mu$. However, such a cover must have $2^{\Theta(d)}$ size, rendering the resulting algorithm computationally infeasible.

For computational efficiency, rather than computing tails, we rely on moments as a substitute. Our approach is motivated by the following observation. Let
$k > (\log(1+\frac{\eps}{1-\eps})/\delta)^2$. Then, for two unit-variance Gaussians whose means differ by $\delta$ (e.g., $\cN(0,1)$ and $\cN(\delta,1)$), there must exist a moment of order $t \in [k]$ that differs by at least $\eps$ (\Cref{cor:non-matching}). This structural result motivates the following algorithm: First, we compute a rough estimate $\hat{\mu}$ of the true mean $\mu$ and translate all samples via the transformation $x \gets x - \hat{\mu}$. After this preprocessing step, the samples behave as if they were drawn from a distribution with mean $\mu - \hat{\mu}$, and the remaining goal is to estimate this difference to error $\delta$. For each $t \in [k]$, we compute the order-$t$ moment tensor $T_t$. By the structural result of \Cref{cor:non-matching} we can certify that for all  $v$ such that $\langle v^{\otimes t}, T_t \rangle$ is small below a certain threshold $\eta$ it should hold $|v^\top(\mu-\hat{\mu})| \leq \delta/2$ (\Cref{cl:mean_certification}).
It therefore suffices to restrict attention to the subspace $V$ corresponding to directions with large moments, and to apply the inefficient estimator described above only within this subspace to estimate $\mu-\hat{\mu}$ up to error $\delta/2$. By the triangle inequality, the resulting estimate has error at most $\delta$. The runtime of this approach is $2^{O(\dim(V))}$. Crucially, we show that $\dim(V)$ is small—bounded solely as a function of $\eps$ and $\delta$, with no dependence on $d$. This follows from the fact that the $\ell_2$-norm of the full moment tensor is bounded in terms of $\eps,\delta$, a property that continues to hold under our corruption model (cf.~\Cref{lem:tensor_bound}). A small norm implies that only few eigenvalues can be large, which in turn bounds $\dim(V)$.

There are two technical complications. First, identifying the subspace $V$ is computationally hard: even finding a single direction $v$ with a large projection of a degree-4 moment tensor is intractable. To circumvent this, instead of searching for directions $v$ with small $\langle v^{\otimes t}, T_t \rangle$, we flatten $T_t$ into a matrix $M \in \R^{d} \times \R^{d^{t-1}}$ and define $V$ as the span of singular vectors of $M$ with singular values exceeding $\eta$, which is computable in polynomial time. This relaxation suffices since, for any $v \in V$, $\langle v^{\otimes t}, T_t \rangle \leq  \sup_{ u \in \R^{d^{t-1}}: \| u \|_2=1  } \langle v,   M u \rangle \leq \delta$.

The second challenge concerns the initial rough estimate $\hat{\mu}$. While the robust statistics literature offers many black-box estimators, most assume stronger contamination models and require $\eps < 1/2$, failing when a majority of samples are corrupted. To our knowledge, no polynomial-time rough estimator exists for \Cref{def:cont_model} when $\eps > 1/2$. We therefore use a list-decodable estimator, which tolerates a majority of outliers but, due to non-identifiability, outputs a list of candidate means, one of which is close to $\mu$, but all others could be arbitrarily inaccurate. Fortunately, there is a tournament-based procedure known in the literature that we can use to identify an element of the list that has error comparable to the best among all of them. This enables us to end up with a single warm start vector and completes the proof sketch of the algorithm.

\section{Preliminaries}\label{sec:prelims}

\paragraph{Basic Notation}  We use $\mathbb{Z}_+$ for the set of positive integers. We denote $[n]=\{1,\ldots,n\}$.
For a vector $x$ we denote by $\| x \|_2$ its  Euclidean norm. 
Let $  I_d$  denote the $d\times d$ identity matrix (omitting the subscript when it is clear from the context). 
 We use  $\top$ for the transpose of matrices and vectors.
 For a tensor $T$, we define by $\|T\|_2 = \sqrt{\sum_{i}T_i^2}$ the $\ell_2$ or (Frobenius) norm.
We use $a\lesssim b$ to denote that there exists an absolute universal constant $C>0$ (independent of the variables or parameters on which $a$ and $b$ depend) such that $a\le Cb$. In our notation $a = O(b)$ has the same meaning as $a \lesssim b$ (similarly for $\Omega(\cdot)$ notation) We use $\tilde O$ and $\tilde \Omega$ to hide polylogarithmic factors in the argument.

\subsection{Non-Gaussian Component Analysis (NGCA)} 

We give a brief background on the SQ hardness of the Non-Gaussian Component Analysis problem (NGCA). First, the testing version of NGCA is defined as distinguishing between a standard Gaussian and a Gaussian that has a non-Gaussian component planted in an unknown direction,  defined below.

\begin{definition}[Hidden direction distribution]\label{def:hidden_dir_dist}
    Let $A$ be a distribution on $\R$.
    For a unit vector $v$, we denote by $P_{A,v}$ the distribution with the density $P_{A,v}(x) := A(v^\top x) \phi_{\perp v}(x)$, where $\phi_{\perp v}(x) = \exp\left(-\|x - (v^\top x)v\|_2^2/2\right)/(2\pi)^{(d-1)/2}$, 
i.e., the distribution that coincides with $A$ on the direction $v$ and is standard Gaussian in every orthogonal direction.
\end{definition}

\begin{restatable}[Non-Gaussian Component Analysis (NGCA)]{problem}{NGCA}\label{prob:generic_hypothesis_testing}
    Let $A$ be a distribution on $\R$ and $P_{A,v}$ the distribution from \Cref{def:hidden_dir_dist}.
    We define the following hypothesis testing problem:
\begin{itemize}  
    \item $H_0$: The data distribution is $\cN(0,I_d)$.
    \item $H_1$: The data distribution is $P_{A,v}$, for some vector $v \in \cS^{d-1}$ in the unit sphere.
\end{itemize}
\end{restatable}

A known result is that NGCA is hard in the SQ model if $A$ matches a lot of moments with the standard Gaussian. 
Here we use the statement from Theorem 1.5 in \cite{diakonikolas2024sq} using $\lambda = 1/2$ and $c = (1-\lambda)/8 = 1/16$ and $\nu=0$. Hardness results for other models, beyond SQ are summarized in \Cref{app:different_models_hardnesss}.
\begin{condition}[Moment matching condition]\label{cond:moment_matching}
$\E_{x \sim A}[x^i] - \E_{x \sim \cN(0,1)}[x^i] = 0$ for all $i \in [m]$.
\end{condition}
\begin{proposition}[Theorem 1.5 in \cite{diakonikolas2024sq}]\label{prop:SQhardness_NGCA}
    Let $d,m$ be positive integers with $d \geq (m\log d)^{2}$.
    Any SQ algorithm that solves \Cref{prob:generic_hypothesis_testing} for a distribution $A$ satisfying \Cref{cond:moment_matching}
    requires either $2^{d^{\Omega(1)}}$ many queries or at least one query with accuracy $d^{-m/16}$.
\end{proposition}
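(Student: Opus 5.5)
The plan is to use the standard statistical-dimension argument for Non-Gaussian Component Analysis, with one change: we never assume that $A$ has finite $\chi^2$-divergence from $\cN(0,1)$, because \Cref{prop:SQhardness_NGCA} assumes only \Cref{cond:moment_matching}.

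\textbf{Reduction to a per-query concentration statement.} Fix a query $f:\R^d\to[-1,1]$ and draw $v$ uniformly from $\cS^{d-1}$. The goal is to show that $\bigl|\E_{P_{A,v}}[f]-\E_{\cN(0,I_d)}[f]\bigr|\le d^{-m/16}$ except with probability $2^{-d^{\Omega(1)}}$ over $v$. Given this, an adversarial oracle answers every query with $\E_{\cN(0,I_d)}[f]$. By a union bound, if the algorithm makes fewer than $2^{d^{\Omega(1)}}$ queries, all of tolerance at least $d^{-m/16}$, then some $v$ remains consistent with every answer under $H_1$ as well as under $H_0$. The algorithm therefore cannot distinguish the two hypotheses.

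\textbf{Bounding the deviation for a fixed query.} Write $g_v(t)=\E_{z\sim\cN(0,I_d)}\bigl[f\bigl(tv+z-(v^\top z)v\bigr)\bigr]$. This is the average of $f$ over the Gaussian directions orthogonal to $v$, so $|g_v|\le1$ and $\E_{P_{A,v}}[f]=\E_{t\sim A}[g_v(t)]$.

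1. Expand $f=\sum_k f_k$ in multivariate Hermite tensors. Then $g_v(t)=\sum_k \langle f_k, v^{\otimes k}\rangle\, h_k(t)$, where $h_k$ are the normalized univariate Hermite polynomials.
2. The $k=0$ term is $\E_{\cN(0,I_d)}[f]$. For $1\le k\le m$, the terms integrate to zero against $A$ by \Cref{cond:moment_matching}. So only the degree $>m$ tail matters.
3. On the sphere, $\E_v\langle f_k,v^{\otimes k}\rangle^2\lesssim \|f_k\|_2^2\,(k/d)^{k/2}$, and $\sum_k\|f_k\|_2^2\le1$. Hence for $k>m$ each coefficient is typically of size $d^{-\Omega(m)}$.
4. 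Concentration on the sphere (Lipschitz or hypercontractive bounds for low-degree polynomials in $v$) upgrades this typical bound to an exponentially small failure probability. The hypothesis $d\ge(m\log d)^2$ is what lets the $k^{k/2}$ factors and the union bound over degrees be absorbed while keeping a $d^{-m/16}$ rate.

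\textbf{Main obstacle: an arbitrary $A$.} Step 2 requires summing $\sum_{k>m}\langle f_k,v^{\otimes k}\rangle\,\E_A[h_k]$. For a general $A$ the quantities $\E_A[h_k]$ can grow without bound, so the series need not converge, and no $\chi^2$ bound is available to control it. My first attempt will be a smoothing step with parameter $\lambda=1/2$.

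1. Compare $P_{A,v}$ with the Ornstein--Uhlenbeck-smoothed version $U_\lambda P_{A,v}$. This version is again a hidden-direction distribution $P_{A',v}$, where $A'=\sqrt{1-\lambda^2}A+\lambda\,\cN(0,1)$ has its Hermite coefficients damped by $(1-\lambda^2)^{k/2}$.
2. $A'$ still matches the first $m$ moments of $\cN(0,1)$, and for it the tail sum converges geometrically.
3. Control the error of replacing $A$ by $A'$ using the adjoint identity $\E_{U_\lambda P}[f]=\E_P[U_\lambda f]$. Since $U_\lambda f$ is again bounded, the argument reduces to showing that a random direction $v$ cannot correlate with $f$ differently from a slightly rotated direction $\sqrt{1-\lambda^2}\,v+\lambda w$.
4. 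Averaging over the random perturbation $w$ should cost only a constant factor in the exponent. This is the plausible origin of the constant $(1-\lambda)/8=1/16$ in the tolerance $d^{-m/16}$.

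If making this rotation-versus-smoothing comparison rigorous proves delicate, my fallback is to truncate $A$ to $[-d^{c},d^{c}]$, rematch its moments with a small polynomial correction, and pay for the truncation error in total variation.
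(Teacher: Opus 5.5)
\textbf{There is a genuine gap.} The paper does not prove this proposition; it imports it from \cite{diakonikolas2024sq}. The whole contribution of that result is removing the finite-$\chi^2$ hypothesis of the classical NGCA bound of \cite{DKS17}. You correctly flag this as the main obstacle, but you leave it unresolved, and your argument does not go through without that hypothesis. The identity $\E_{P_{A,v}}[f]-\E_{\cN(0,I_d)}[f]=\sum_{k>m}\langle f_k,v^{\otimes k}\rangle\,\E_A[h_k]$ in your step 2 is a Parseval-type pairing of $g_v\in L^2(\cN(0,1))$ against $A$. The coefficients $\langle f_k,v^{\otimes k}\rangle$ are only square-summable, so Cauchy--Schwarz controls the series exactly when $\sum_k\E_A[h_k]^2=1+\chi^2(A,\cN(0,1))<\infty$. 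Under \Cref{cond:moment_matching} alone this fails in several ways. $\E_A[h_k]$ can be infinite or undefined for every $k>m$ (no $(m+1)$-st moment). It can grow like $\sqrt{k!}$ (exponential tails). Even for compactly supported $A$ it fails to be square-summable once $A$ has an atom $t_0$, since $\sum_k h_k(t_0)^2=\infty$. Your step 4 inherits the same problem: without this expansion, $v\mapsto\E_{P_{A,v}}[f]$ is not a controlled sum of low-degree polynomials in $v$. For atomic $A$, $P_{A,v}$ and $P_{A,v'}$ are mutually singular whenever $v\neq\pm v'$, so this map need not even be continuous.

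Neither proposed repair supplies the missing idea. In the smoothing step, $\E_{A'}[h_k]=(1-\lambda^2)^{k/2}\E_A[h_k]$. A geometric factor does not tame $\sqrt{k!}$ growth or undefined moments, so ``the tail sum converges geometrically'' is false unless $A$ has first been truncated; after truncation to $[-T,T]$ you do get $|\E_A[h_k]|=O(e^{T^2/4})$. More seriously, the replacement error is $\E_{P_{A,v}}[f]-\E_{P_{A',v}}[f]=\E_{P_{A,v}}[f-U_\rho f]$ with $\rho=\sqrt{1-\lambda^2}$. This is the expectation of another bounded query under the same unknown $P_{A,v}$; that query has zero Gaussian mean and its Hermite weight is suppressed only at low degrees. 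Showing it is small for most $v$ is the original problem again. So the ``rotation versus smoothing'' step is circular as stated, and ``should cost only a constant factor'' is unsupported. The truncation fallback controls tails but not singularity: a truncated, moment-corrected $A$ with atoms still has $\chi^2=\infty$. What you have is therefore at best the classical finite-$\chi^2$ argument, without the new ingredient of \cite{diakonikolas2024sq}.

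That finite-$\chi^2$ version does suffice for this paper. Every visible-sample law of an $\eps$-corruption of $\cN(\delta,1)$ has density at most $\phi(x-\delta)/(1-\eps)$, so $\chi^2(A,\cN(0,1))\le e^{\delta^2}/(1-\eps)^2-1$. For the $A$ of \Cref{lem:correct_moments}, where $\int f=1-\eps/2$, this is at most $4e^{\delta^2}-1=O(1)$. The theorem of \cite{DKS17} then already gives \Cref{thm:SQ} with the same qualitative tradeoff.
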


\section{Statistical Query Lower Bound: Proof of \Cref{thm:SQ}}\label{sec:SQ}

In order to show \Cref{thm:SQ}, we will prove the following moment-matching proposition.

\begin{proposition}[Moment Matching]\label{prop:moment_matching}
There exists a sufficiently small absolute constant $c>0$ such that the following holds.
    For every $\eps \in (0,1)$ and $\delta \in (0,c\eps)$,
    there exists a distribution $A$ on $\R$ such that  the following statements are satisfied for $m=\lfloor c \gamma^2/\log \gamma\rfloor$ where $\gamma:= \frac{1}{\delta} \log\left( 1 + \tfrac{\eps/2}{1-\eps/2} 
        \right)$:
    \begin{itemize}
        \item $A$ is the conditional distribution on the visible (non-deleted) samples of an $\eps$-corrupted version of $\cN(\delta,1)$ according to \Cref{def:cont_model}.
        \item It holds $\E_{x \sim A}[x^i] = \E_{x \sim \cN(0,1)}[x^i]$ for $i = 1,\ldots, m$.
    \end{itemize}
\end{proposition}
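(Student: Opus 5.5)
Let $\phi$ denote the $\cN(0,1)$ density and $\phi_\delta(x)=\phi(x-\delta)$ the $\cN(\delta,1)$ density. We build the adversary's function $f$ with $(1-\eps)\phi_\delta\le f\le \phi_\delta$ so that $A=f/\int f$ has the moments of $\phi$ up to order $m$. The construction has two pieces: a bulk piece that makes $f$ proportional to $\phi$ on a large interval $[-B,B]$, and a small polynomial correction supported on $[-1,1]$.

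\emph{Step 1 (bulk matching).} The ratio is $\phi(x)/\phi_\delta(x)=e^{-\delta x+\delta^2/2}$. Set $\alpha:=1-\eps/2$, so that $\log(1/\alpha)=\log(1+\tfrac{\eps/2}{1-\eps/2})=\delta\gamma$. For $|x|\le B:=\gamma-\delta$ (up to constants) we have $\alpha\phi(x)\in[(1-\eps/2)\phi_\delta(x)\cdot e^{-O(\delta)},\phi_\delta(x)]$. Choosing the scaling slightly smaller, say $\alpha'\phi$ with $\alpha'=(1-\eps/2)(1-\eps/8)$, gives room, using $\delta<c\eps$. Define
\[
f_0(x)=\alpha'\phi(x)\ \text{ for } |x|\le B, \qquad f_0(x)=\text{clip}\big(\alpha'\phi(x),(1-\eps)\phi_\delta(x),\phi_\delta(x)\big)\ \text{ otherwise}.
\]
Then $f_0$ is feasible, and it is also feasible with slack $\Omega(\eps)\phi_\delta$ on $[-1,1]$. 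The defect is $g:=f_0-\alpha'\phi$, supported on $|x|>B$ with $|g|\le 2\phi+2\phi_\delta$. Hence for every $i\le m$,
\[
\Big|\int x^i g\Big|\lesssim \int_{|x|>B}|x|^i e^{-(|x|-1)^2/2}\,dx\le e^{-B^2/4}
\]
whenever $i\le cB^2$. This includes $i=0$, which handles the mass normalization.

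\emph{Step 2 (polynomial correction).} We seek a polynomial $p$ of degree at most $m$ and set $f=f_0-p\cdot\1_{[-1,1]}$. The requirement is $\int_{-1}^1 x^i p(x)\,dx=\int x^i g=:b_i$ for $i=0,\dots,m$. Then $\int x^i f=\alpha'\int x^i\phi$ for all $i\le m$, and after normalizing, $A$ matches the moments of $\cN(0,1)$ (the $i=0$ equation forces $\int f=\alpha'$).

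Expand $p=\sum_j a_jP_j$ in Legendre polynomials. The system becomes triangular in the monomial-to-Legendre change of basis, and its inverse has entries at most $2^{O(m)}$; this follows since $x^i=\sum_j c_{ij}P_j$ with $|c_{ij}|\le1$ and $\|P_j\|_{L^2[-1,1]}^2=2/(2j+1)$, or by Markov-type bounds. Consequently $\sup_{[-1,1]}|p|\le 2^{O(m)}\max_i|b_i|\le 2^{O(m)}e^{-B^2/4}$, using $|P_j|\le1$.

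\emph{Step 3 (feasibility).} We need $|p|\le \Omega(\eps)\phi_\delta$ on $[-1,1]$, i.e. $2^{O(m)}e^{-B^2/4}\lesssim\eps$. Since $m\le c\gamma^2/\log\gamma$ and $B\approx\gamma$, the left side is $e^{-\Omega(\gamma^2)}$. Moreover $\gamma\gtrsim \eps/\delta\ge1/c$ is large while $\eps\gtrsim\delta\gamma$, so the inequality holds for small $c$; the $\log\gamma$ slack in $m$ absorbs any polynomial losses in $\gamma$.

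The main obstacle is Step 2. We need a clean quantitative bound on the polynomial solving the truncated moment problem on $[-1,1]$, uniformly controlled by $2^{O(m)}$ (or even $m^{O(m)}$, which the $\log\gamma$ factor in $m$ permits), together with moment tail bounds valid simultaneously for all $i\le m$. I would first try the direct Legendre inversion. If its constants are awkward, I would accept an $m^{O(m)}$ bound and use $m\log m\ll B^2$.
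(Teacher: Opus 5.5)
Your plan follows the paper's two-step scheme: make the corrupted density proportional to $\phi$ on a window of width $\asymp\gamma$, then cancel the leftover moments with a small Legendre-polynomial correction on $[-1,1]$. Clipping, and including $i=0$ in the correction system, is a fine substitute for the paper's continuity argument that fixes the mass.

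The genuine gap is in Step~3. The inequality $2^{O(m)}e^{-B^2/4}\lesssim\eps$ does not follow from $\delta<c\eps$, which only gives $\gamma\gtrsim 1/c$ and not $\gamma^2\gtrsim\log(1/\eps)$. Take $\delta=c\eps/2$ and let $\eps\to0$. Then $\gamma\approx 1/c$, so $B$ and $m$ stay bounded. Your bound on $\sup_{[-1,1]}|p|$ is then a fixed positive number, which eventually exceeds the $\Omega(\eps)$ slack. The observation $\eps\gtrsim\delta\gamma$ points the wrong way; you would need a lower bound on $\delta$. The cause is the crude estimate $|g|\le 2\phi+2\phi_\delta$ in Step~1, which discards the fact that the defect is small. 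The paper's write-up makes the same leap when it asserts that $\delta\ll\eps$ makes $\beta^2-\log(1/\eps)$ comparable to $\gamma^2$.

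To repair it, note that clipping only acts beyond the crossing points $B_\pm$. The ratio $\alpha'\phi/\phi_\delta=\alpha'e^{-\delta x+\delta^2/2}$ changes by a factor $e^{\delta s}$ over a distance $s$. Concretely, for $x>B_+$ you have $g=(1-\eps)\phi_\delta\bigl(1-e^{-\delta(x-B_+)}\bigr)$, and for $x<-B_-$ you have $|g|=\alpha'\phi\bigl(1-e^{-\delta(|x|-B_-)}\bigr)$. Hence $|g(x)|\le\delta|x|\,(\phi(x)+\phi_\delta(x))\,\1(|x|>B)$. Carrying this factor $\delta$ through gives $\sup_{[-1,1]}|p|\le\delta\,4^m m^{O(m)}e^{-\Omega(\gamma^2)}\le\delta<c\eps$, which fits in the slack for every $\eps$.

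There are two smaller issues.
\begin{itemize}
\item The tail bound $\int_{|x|>B}|x|^ie^{-(|x|-1)^2/2}\le e^{-B^2/4}$ fails for $i$ as large as $cB^2$: at $x=B$ the integrand is $e^{cB^2\log B-(B-1)^2/2}$. It holds for $i\lesssim B^2/\log B$, which is what your $m$ actually satisfies.
\item In Step~2, bounded coefficients of $x^i$ in the Legendre basis do not by themselves bound the inverse of a triangular system whose diagonal is exponentially small. What you need is the bound $4^j$ on the monomial coefficients of $P_j$. The paper avoids the inversion altogether. Since the moment conditions hold for every polynomial of degree $\le m$, you get $a_j=\frac{2j+1}{2}\int_{-1}^1P_jp=\frac{2j+1}{2}\int P_j\,g$. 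Then $|P_j(x)|\le(4|x|)^j$ for $|x|\ge1$ bounds this tail integral directly.
\end{itemize}
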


\noindent We briefly explain how \Cref{prop:moment_matching} yields \Cref{thm:SQ}. First note that the conclusion of \Cref{prop:moment_matching} is phrased in terms of the conditional distribution on the visible samples. However, in \Cref{thm:SQ} we are interested in the hypothesis testing between the corrupted versions of $\cN(0,1)$ and $\cN(\mu,\delta)$, i.e., the entire distributions on both visible and non-visible samples. Although this might seem like a discrepancy, it suffices to focus on the conditional distributions on the visible samples since one may also randomly delete an $\eps$-fraction of points from the null hypothesis, so that the number of deleted points matches the corresponding number in the alternative hypothesis. 

For $v$ being the unit vector in the direction of $  \mu$, let $P_{A,  v}$ be the distribution as in \Cref{prob:generic_hypothesis_testing}. Then, $P_{A,  v}$ is an $\eps$-corrupted version of $\cN(\delta   v ,  I)$, and the hypothesis testing problem of \Cref{thm:SQ} is an instance of non-Gaussian component analysis testing with $A$ as the hidden direction distribution. An application of \Cref{prop:SQhardness_NGCA} with  $m=\lfloor c \gamma^2/\log \gamma\rfloor$ then yields \Cref{thm:SQ}. Finally, note that \Cref{thm:SQ} shows hardness of distinguishing between the ground truth $\mu=0$ and $\|\mu\|_2 \geq \delta$. This immediately implies hardness of the problem of estimating $\mu$ up to error $\delta/2$. This is because if one has an estimator it can run it to obtain $\hat \mu$ with $\| \hat \mu - \mu \|_2 \leq \delta/2$ and reject the null hypothesis iff $\|\hat \mu\|_2 > \delta/2$.

\subsection{Proof of \Cref{prop:moment_matching}}

    Denote by $\phi$ the pdf of $\cN(0,1)$. Following the \Cref{def:cont_model} we need to find a function $f:\R \to \R_+$ such that $(1-\eps)\phi(x-\delta) \leq f(x) \leq \phi(x - \delta)$ and the distribution with pdf $f(x)/\int_\R f(x) \d x$ matches the first $m$ moments with $\cN(0,1)$. The argument to do so consists of two parts:
    \begin{enumerate}[leftmargin=*, nosep]
        \item We will show that there exists $g:\R \to \R_+$ such that $(1-\eps)\phi(x-\delta) \leq g(x) \leq \phi(x - \delta)$, $\int_{x \in \R} g(x) \d x = 1-\eps/2$ and $\frac{g(x)}{1-\eps/2}=\phi(x)$ for all $x \in [-B,B]$ for $B:= \tfrac{1}{\delta}\log\left( 1 + \frac{\eps/2}{1-\eps/2} \right)$. This means that $g$ defines an $\eps$-corrupted version of $\cN(\delta,1)$ that matches exactly $\cN(0,1)$ in the entire range $[-B,B]$. Due to the mismatch outside this interval, the first $m$ moments will not match exactly, but will differ by only a small amount.
        \item In order to correct the moments, we will find an appropriate polynomial $p(x)$ to add to the function $g$ from the previous step in $[-1,1]$ so that (i) the distribution with pdf proportional to $f(x) := g(x) + p(x)\1(|x| \leq 1)$ now matches the $m$ first moments with $\cN(0,1)$ exactly and (ii) $f$ satisfies $(1-\eps)\phi(x-\delta) \leq f(x) \leq \phi(x - \delta)$, i.e., it is still a valid $\eps$-corruption of $\cN(\delta,1)$.
    \end{enumerate}

\smallskip

\noindent    We now present the proofs of the two steps in \Cref{lem:match_inside_interval} and \Cref{lem:correct_moments}, respectively.
    \begin{restatable}{lemma}{APPROXMATCH}\label{lem:match_inside_interval}
    Denote by $\phi(x)$ the pdf of $\cN(0,1)$. For any $\eps,\delta \in (0,1)$
        there exists a function $g:\R \to \R_+$ such that $(1-\eps)\phi(x-\delta) \leq g(x) \leq \phi(x - \delta)$, $\int_{x \in \R} g(x) \d x = 1-\eps/2$ and $\frac{g(x)}{1-\eps/2}=\phi(x)$ for all $x \in [-B+\delta/2,B+\delta/2]$ where $B:= \tfrac{1}{\delta}\log\left( 1 + \frac{\eps/2}{1-\eps/2} \right)$.
    \end{restatable}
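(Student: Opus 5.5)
The plan is to write down $g$ explicitly as a piecewise rescaling of the two Gaussian densities. Set $\alpha := 1-\eps/2$ and $I := [-B+\delta/2,\,B+\delta/2]$, and define
\[
g(x) := \alpha\,\phi(x) \ \text{ for } x \in I, \qquad g(x) := \alpha\,\phi(x-\delta) \ \text{ for } x \notin I .
\]
With this choice the identity $g(x)/(1-\eps/2)=\phi(x)$ on $I$ holds by construction. What remains is the pointwise sandwich $(1-\eps)\phi(x-\delta)\le g(x)\le \phi(x-\delta)$ and the normalization $\int_\R g = 1-\eps/2$.

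\emph{Pointwise bounds.} Outside $I$ we have $g=\alpha\phi(x-\delta)$, and $1-\eps\le \alpha\le 1$, so the sandwich is immediate there. Inside $I$ I will use the likelihood ratio
\[
\phi(x)/\phi(x-\delta)=\exp\bigl(-\delta(x-\delta/2)\bigr).
\]
For $x\in I$ we have $|x-\delta/2|\le B$, so this ratio lies in $[e^{-\delta B},e^{\delta B}]$. By the definition of $B$,
\[
e^{\delta B} = 1+\frac{\eps/2}{1-\eps/2} = \frac{1}{1-\eps/2} = \alpha^{-1},
\]
so the ratio lies in $[\alpha,\alpha^{-1}]$. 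Hence $g(x)/\phi(x-\delta)=\alpha\,\phi(x)/\phi(x-\delta)\in[\alpha^2,1]$. It then suffices to note that $\alpha^2=(1-\eps/2)^2 = 1-\eps+\eps^2/4\ge 1-\eps$, which gives both bounds on $I$. This calibration of $B$ is the only real content of the construction: $B$ is exactly the half-width on which the two densities are within a factor $\alpha^{\pm1}$.

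\emph{Normalization.} Write $\int_\R g = \alpha\bigl(\Pr_{\cN(0,1)}[I] + \Pr_{\cN(\delta,1)}[\R\setminus I]\bigr)$. The interval $I$ is symmetric about $\delta/2$, and the reflection $x\mapsto \delta - x$ maps $\cN(0,1)$ to $\cN(\delta,1)$ while preserving $I$. Therefore $\Pr_{\cN(\delta,1)}[\R\setminus I]=\Pr_{\cN(0,1)}[\R\setminus I]$, the bracket equals $1$, and $\int_\R g=\alpha=1-\eps/2$ as required.

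I expect the only place requiring care to be the inside-interval bound, namely checking that the choice of $B$ makes the lower bound $\alpha^2\ge 1-\eps$ (rather than merely $\alpha$) sufficient. The symmetry trick then avoids any need to tune the constant outside $I$. If the symmetry were unavailable, the fallback would be to use $\lambda\phi(x-\delta)$ outside $I$ with $\lambda\in[1-\eps,1]$ chosen by the intermediate value theorem.
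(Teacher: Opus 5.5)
Your construction is correct, and it takes a different and simpler route than the paper's. The paper works in the frame shifted by $\delta/2$. It sets $g=(1-\eps/2)p_-$ on $[-B,B]$ and keeps the full density $p_+$ on the left tail. It then shows that the two extreme admissible right-tail choices, $(1-\eps)p_+$ and $p_+$, give total mass below and above $1-\eps/2$ respectively, via $A_2-A_1=\int_B^\infty\frac{\eps}{2}(p_+-p_-)>0$ and $\tilde A_1-A_2=\int_B^\infty\frac{\eps}{2}(p_++p_-)>0$. A continuity argument then produces some non-explicit right-tail choice with mass exactly $1-\eps/2$.

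You instead scale both tails by the same constant $\alpha$. You then use that the reflection $x\mapsto\delta-x$ swaps $\cN(0,1)$ and $\cN(\delta,1)$ while fixing $I$, so the mass comes out to exactly $\alpha$ with no tuning. Your interior check $g/\phi(\cdot-\delta)\in[\alpha^2,1]$ is the same computation as the paper's crossing points $x_\pm$, packaged more cleanly. The upper bound is tight at the left endpoint of $I$, while the lower bound has slack since $\alpha^2=1-\eps+\eps^2/4>1-\eps$.

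What each route buys: yours gives a closed-form $g$ and avoids both the intermediate-value step and the area bookkeeping. The paper's argument does not rely on $I$ being exactly symmetric about $\delta/2$, so it would adapt to asymmetric windows; this is precisely the fallback you mention. Downstream, the moment-correction lemma uses only the sandwich, the total mass, and the identity $g=(1-\eps/2)\phi$ on the window, so your $g$ serves equally well there.
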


    \begin{proof}
    For convenience we will prove the claim with everything shifted by 
        $\delta/2$ to the left, i.e., we will show that, 
        \begin{align}\label{eq:sandwich}
            (1-\eps)\phi(x-\delta/2) \leq g(x) \leq \phi(x - \delta/2)
        \end{align}
        as well as $\int_{x \in \R} g(x) \d x = 1-\eps/2$ and $\frac{g(x)}{1-\eps/2}=\phi(x+\delta/2)$ for all $x \in [-B,B]$.
        For simplicity of notation, we will use $p_{+}(x):=\phi(x-\delta/2)$ and $p_{-}(x):=\phi(x+\delta/2)$ to denote the two Gaussian densities for the rest of the proof.

        The main idea is to let $g(x) = (1-\eps/2)p_{-}(x)$ for all $x$ in an interval around zero which is as large as possible without violating the condition \eqref{eq:sandwich}. Once we find which is the biggest possible such interval, we will need to correct $g(x)$ outside of it so that it still respects condition \eqref{eq:sandwich}.

        For the first part of our proof argument (finding the largest interval for which setting $g(x) = (1-\eps/2)p_{-}(x)$ inside it satisfies condition \eqref{eq:sandwich}) we solve the equations  $(1-\eps/2)p_{-}(x)= p_{+}(x)$ and $(1-\eps/2)p_{-}(x)=(1-\eps)p_{+}(x)$. The two solutions are 
        \begin{align*}
            x_{+} = \frac{1}{\delta}\log\left(\frac{1-\eps/2}{1-\eps} \right) \quad \text{and} \quad x_{-}= \frac{1}{\delta}\log\left(1-\frac{\eps}{2}\right).
         \end{align*}
        This means that if we define $B:= \tfrac{1}{\delta}\log\left( 1 + \frac{\eps/2}{1-\eps/2} \right)$ we have that the function defined as $g(x):=p_{-}(x)(1-\eps/2)$ satisfies condition \eqref{eq:sandwich} for all $x \in [-B,B]$.

        We now need to show how to define $g(x)$ outside of $[-B,B]$. We show that it is possible to extend the definition outside of $[-B,B]$ in a way that condition \eqref{eq:sandwich} continues to hold and $\int_{x \in \R} g(x)\d x = 1-\eps/2$. A first, unsuccessful approach would be to set $g(x)=(1-\eps)p_{+}(x)$ for $x > B$ and $g(x)=p_{+}(x)$ for $x < -B$. Although this ensures condition \eqref{eq:sandwich}, the other desideratum $\int_\R g(x) \d x = 1-\eps/2$ is not satisfied. To see this, let us define $A_1$ and $A_2$ be the following areas:
        \begin{align*}
            A_1 &:= \int_{B}^{+\infty} \left((1-\eps)p_{+}(x)-(1-\eps/2)p_{-}(x)\right) \d x, \\
            A_2 &:= \int_{-\infty}^{-B} \left( (1-\eps/2)p_{-}(x)-p_{+}(x) \right) \d x.
        \end{align*}
        Then the integral of $g$ is
        \begin{align} \label{eq:breakdown}
            \int_{x \in \R} g(x)\d x &=  \int_{-\infty}^{+\infty}(1-\eps/2)p_{-}(x)\d x + \int_{x_+}^B ((1-\eps/2)p_(x)-(1-\eps)p_+(x))) \d x +  A_1 - A_2  \\
            & < 1-\frac{\eps}{2} + A_1 - A_2, \notag
        \end{align}
        where the first integral is simply $1-\eps/2$ and the integral from $x_+$ to $B$ is negative (by definition of $x_+$). We can finally check that
        $A_2 > A_1$ to conclude the proof of $\int_{x \in \R} g(x)\d x < 1-\eps/2$. To see this, first note that we can rewrite $A_2 = \int_{-\infty}^{-B} p_{+}(-x) -(1-\eps/2)p_{-}(-x)\d x = \int_{B}^\infty (1-\eps/2)p_{+}(x) - p_{-}(x)\d x$ where we used that $p_{+}(-x)=p_{-}(x)$ and a change of variable. Then,
        \begin{align*}
            A_2 - A_1 = \int_{B}^{\infty}\frac{\eps}{2}\left( p_{+}(x) - p_{-}(x)  \right) \d x > 0 \;.
        \end{align*}

        However, the above choice of $g(x)$ for $x > B$ is not the only one allowed by condition \eqref{eq:sandwich}. We could alternatively choose any $g(x) \in [(1-\eps)p_{+}(x),p_{+}(x)]$ for $x>B$. We just saw that the first extreme choice $g(x) = (1-\eps)p_{+}(x)$ results in $\int_{x \in \R} g(x)\d x < 1-\eps/2$. We will now show that the other extreme choice of setting $g(x)=p_{-}(x)$ in $x>B$ results in $\int_{x \in \R} g(x)\d x > 1-\eps/2$. By continuity this would mean that there exists a way of defining $g(x)$ in $x>B$ that achieves $\int_{x \in \R} g(x)\d x = 1-\eps/2$.

        We now show the remaining claim above, that the choice $g(x) = p_{+}(x)$ for all $x > B$ (and  $g(x)=p_{+}(x)$ for $x < -B$, $g(x)=(1-\eps/2)p_{+}(x)$ for $x \in [-B,B]$ as before) results in $\int_{x \in \R} g(x)\d x > 1-\eps/2$. In this case, similarly to inequality \eqref{eq:breakdown}, we have
        \begin{align*}
            \int_{x \in \R} g(x)\d x =  \int_{-\infty}^{+\infty}(1-\eps/2)p_{-}(x)\d x + \tilde{A}_1 - A_2
        \end{align*}
        where $A_2$ is the same as before, but $\tilde A_1 = \int_{B}^{\infty} \left(p_{+}(x)\d x - (1-\eps/2)p_{-}(x) \right) \d x$. Now, this gives
        \begin{align*}
             \tilde A_1 - A_2 = \int_B^{\infty} \frac{\eps}{2}\left( p_+(x) + p_{-}(x) \right)  \d x > 0 .
        \end{align*}

    \end{proof}

    \begin{restatable}{lemma}{EXACTMATCH}\label{lem:correct_moments}
        Let $\eps \in (0,1),\delta \ll \eps$ and $g(x)$ be as in \Cref{lem:match_inside_interval}. There exists a polynomial $p(x)$ such that the function $f(x):=g(x) + p(x) \1(|x| \leq 1)$ satisfies $(1-\eps)\phi(x-\delta) \leq f(x) \leq \phi(x-\delta)$ and the distribution with pdf $f(x)/\int_\R f(x) \d x$ matches the first $m$ moments with $\cN(0,1)$ for some $m = \Omega(\gamma^2/\log \gamma)$, where $\gamma:= \frac{1}{\delta} \log\left( 1 + \tfrac{\eps/2}{1-\eps/2} 
        \right)$.
    \end{restatable}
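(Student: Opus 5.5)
Since $g$ comes from \Cref{lem:match_inside_interval}, we have $g(x)=(1-\eps/2)\phi(x)$ on $[-B+\delta/2,B+\delta/2]$, where $B=\gamma$. Write $Z:=\int_\R g=1-\eps/2$ and define the moment defects
\[
\Delta_i:=\int_\R x^i g(x)\,\d x - Z\,\E_{x\sim\cN(0,1)}[x^i], \qquad i=0,\ldots,m,
\]
so that $\Delta_0=0$. The plan is to choose $p$ with $\int_{-1}^1 x^i p(x)\,\d x=-\Delta_i$ for $i=0,\ldots,m$. Then $\int f=Z$ and $\int x^i f = Z\,\E_{\cN(0,1)}[x^i]$, so after normalizing by $Z$ the first $m$ moments match exactly. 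The proof then has three steps: (1) show every $|\Delta_i|$ is tiny; (2) construct $p$ solving the linear system with $\|p\|_{L^\infty[-1,1]}$ controlled by the $\Delta_i$; (3) check that $f=g+p\1(|x|\le1)$ stays in the sandwich $[(1-\eps)\phi(x-\delta),\,\phi(x-\delta)]$.

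\textbf{Step (1).} The integrand of $\Delta_i$ vanishes on the matching interval, so only the tails $|x|\gtrsim B$ contribute. There both $g$ and $Z\phi$ are at most $\phi(x-\delta)+\phi(x)$. For $i\le m\le cB^2/\log B$, I would bound $\int_{|x|>B/2}|x|^i\phi(x-\delta)\,\d x$ by $e^{-B^2/16}$ times the $i$-th Gaussian moment, which is at most $i^{i/2}$. With $c$ small this gives $|\Delta_i|\le e^{-B^2/16}\, m^{m/2}\le e^{-B^2/32}$, using $m\log m\le cB^2$.

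\textbf{Step (2).} Write $p=\sum_{j=0}^m a_jP_j$ in the Legendre basis. The conditions $\int_{-1}^1 x^i p=-\Delta_i$ form a linear system. I would invert it through the monomial-to-Legendre change of basis: expanding $P_j$ in monomials, the coefficients $\langle P_j,\cdot\rangle$ can be written explicitly in terms of the $\Delta_i$, with magnitudes at most $2^{O(m)}$. By orthogonality, $a_j=\frac{2j+1}{2}\int p\,P_j$ is a combination of the $\Delta_i$ with coefficients bounded by $2^{O(m)}$. Since $|P_j|\le1$ on $[-1,1]$, this yields $\|p\|_\infty\le 2^{O(m)}\max_i|\Delta_i|\le 2^{O(m)}e^{-B^2/32}$. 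Because $m\le cB^2$, this is at most $e^{-B^2/64}$ for $c$ small. I expect this to be the main obstacle: I need to verify that the inverse of the Hilbert-type moment matrix on $[-1,1]$, as realized through Legendre coefficients, costs only $2^{O(m)}$ and not something larger. If that bound is worse, say $m^{O(m)}$, the $\log\gamma$ factor in $m$ is exactly what absorbs it, since $m^{O(m)}=e^{O(cB^2)}$.

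\textbf{Step (3).} On $[-1,1]$ we need $|p(x)|$ below the slack $\min\bigl(\phi(x-\delta)-g,\;g-(1-\eps)\phi(x-\delta)\bigr)$. Here we use $\delta\ll\eps$, which makes $B\ge\log(1+\eps/2)/\delta$ large, so the interval $[-1,1]$ lies deep inside the matching region. There $g/\phi(x-\delta)=(1-\eps/2)e^{-\delta x+\delta^2/2}\in[1-\eps/2-O(\delta),\,1-\eps/2+O(\delta)]$, so the slack is at least $\Omega(\eps)\phi(2)$. Then $e^{-B^2/64}\le c'\eps$ holds because $B\gtrsim \eps/\delta\ge 1/c$ is large; if necessary I would shrink $c$ to handle small $\eps$, using $B^2\ge \eps^2/(4\delta^2)\ge 1/(4c^2)$ together with $\eps\ge\delta/c$. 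This completes the plan.
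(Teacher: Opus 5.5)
Your structure matches the paper's. You match exactly on the central interval, bound the tail defects, expand $p$ in the Legendre basis with $\|p\|_{L^\infty[-1,1]}\le\sum_j\frac{2j+1}{2}|\int_{-1}^1 pP_j|$, and check an $\Omega(\eps)$ slack on $[-1,1]$. Your Step 2 worry is unfounded. By the explicit formula in \Cref{fact:legendre}, the monomial coefficients of $P_j$ have absolute sum at most $4^j$, so $|\int_{-1}^1 pP_j|\le 4^j\max_i|\Delta_i|$ with no matrix inversion. The paper does the equivalent thing by bounding $\int_{\mathrm{tails}}P_k\,(\phi-g/Z)$ directly via $|P_k(x)|\le(4|x|)^k$.

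\textbf{The gap is the final inequality of Step 3,} $e^{-B^2/64}\le c'\eps$. The hypothesis $\delta\le c\eps$ only yields $B\ge\eps/(2\delta)\ge 1/(2c)$, an absolute constant. It says nothing about $B^2$ versus $\log(1/\eps)$. Take $\eps=e^{-1000}$ and $\delta=c\eps/2$: then $B\approx 1/c$ is fixed, while $c'\eps$ is far smaller than $e^{-B^2/64}$. Shrinking the absolute constant $c$ cannot compensate as $\eps\to 0$.

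\textbf{The missing idea} is that the tail defect is itself $O(\eps)$ pointwise, and your bound $|g-Z\phi|\le\phi(x-\delta)+\phi(x)$ throws this factor away. On the tails, $|g-Z\phi|\le\eps\phi(x-\delta)+|\phi(x-\delta)-\phi(x)|+\frac{\eps}{2}\phi(x)$. From $\phi(x-\delta)=\phi(x)e^{x\delta-\delta^2/2}$ one gets $|\phi(x-\delta)-\phi(x)|\lesssim\delta(1+|x|)\bigl(\phi(x-\delta)+\phi(x+\delta)\bigr)$. Using $\delta\le\eps$, this gives $|\Delta_i|\lesssim\eps\,(O(i))^{(i+1)/2}e^{-\Omega(B^2)}$, and hence $\|p\|_{L^\infty[-1,1]}\le\eps\cdot 2^{O(m)}m^{O(m)}e^{-\Omega(B^2)}$. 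Now only the $\eps$-free factor must be below a small constant, and that does follow from $m\log m\lesssim cB^2$ together with $B\ge 1/(2c)$.

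The paper's written proof has the same weak spot. It needs $\beta^2-\log(1/\eps)=\Theta(\beta^2)$, i.e.\ $\delta\lesssim\eps/\sqrt{\log(1/\eps)}$, which does not follow from $\delta\ll\eps$ alone. The same refinement repairs it.
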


     \begin{proof}
        The function $g(x)$ from \Cref{lem:match_inside_interval} satisfies the following:
        \begin{enumerate}
            \item $(1-\eps)\phi(x-\delta) \leq g(x) \leq \phi(x-\delta)$ for all $x \in \R$
            \item $\int_\R g(x) \d x = 1-\eps/2$
            \item $g(x) = \phi(x)(1-\eps/2)$ for $x \in [-B+\delta/2,B+\delta/2]$ where $B:= \tfrac{1}{\delta}\log\left( 1 + \frac{\eps/2}{1-\eps/2} \right)$. 
        \end{enumerate}

        We will show the existence of a polynomial $p$ such that
        \begin{enumerate}
            \item $|p(x)| \leq c\eps$, where $c$ is a sufficiently small absolute constant,
            \item $\int_{-1}^1 p(x) \d x = 0$,
            \item $\int_\R x^i \frac{g(x) + p(x)\1(|x|\leq 1)}{1-\eps/2}\d x = \int_\R x^i \phi(x)\d x$ for all $i \in [m]$.
        \end{enumerate}

        Before showing that such a polynomial exists, we first show how \Cref{lem:correct_moments} follows given the above points. First, the moment matching property in the conclusion of \Cref{lem:correct_moments} directly follows by the third item in the above list. We now show how the part that $(1-\eps)\phi(x-\delta) \leq f(x) \leq \phi(x-\delta)$ for all $x \in \R$ follows from the above. This can be seen by verifying that (i) $\phi(x-\delta)-g(x)=\Omega(\eps)$ for all $x \in [-1,1]$ and (ii) $g(x)-(1-\eps)\phi(x-\delta)=\Omega(\eps)$ for all $x \in [-1,1]$. We show the part (i) since the other part can be seen with identical arguments. The smallest value of $\phi(x-\delta)-g(x)$ happens at $x=-1$. On that point:
        \begin{align*}
            \phi(x-\delta)-(1-\eps/2)\phi(x) &= \phi(x-\delta) - \phi(x) +\frac{\eps}{2}\phi(x)\geq  -O(\delta) + \Omega(\eps) \geq \Omega(\eps),
        \end{align*}
        where the first inequality uses the fact that $\phi(x-\delta)-\phi(x) = x \phi(x)\delta + \frac{\xi^2-1}{2}\phi(\xi)\delta^2$ for some $x - \delta \leq \xi \leq x$, by Taylor's theorem), and we also used that $\phi(x) = \Omega(1)$ for $x \in [-1,1]$. The last inequality above used that $\delta \ll \eps$.

        We now turn to showing the existence of the polynomial $p$. This part of the proof follows an argument similar to the one in \cite{DKS17}. Recall the moment matching condition that we want to ensure: $\int_{-1}^1 x^i p(x) \d x =  \int_{-\infty}^{\infty}x^i\phi(x) \d x - \int_{-\infty}^{\infty} x^i \frac{g(x)}{1-\eps/2} \d x$. Using the fact that $g(x)/(1-\eps/2) = \phi(x)$ in the interval $[-B+\delta,B+\delta]$ the moment matching condition becomes:
        \begin{align}\label{eq:moment_matching}
            \int_{-1}^1 x^i p(x) \d x =  \int_{\R \setminus [-B+\delta,B+\delta]}x^i\phi(x) \d x - \int_{\R \setminus [-B+\delta,B+\delta]} x^i \frac{g(x)}{1-\eps/2} \d x,
        \end{align}
        for $i=1,\ldots,m$.
        First, we express $p(x)$ as a linear combination of Legendre polynomials $P_k$:
        \begin{fact}
            We can write $p(x) = \sum_{k=0}^m a_k P_k(x)$, where $a_k = \frac{2k+1}{2}\int_{-1}^1 P_k(x) p(x) \d x$.
        \end{fact}
        By properties of Legendre polynomials (\Cref{fact:legendre}) we have that $|p(x)| \leq \sum_{k=0}^m |a_k|$ for all $x \in [-1,1]$, thus it suffices to bound the coefficients $a_k$. Towards that end,
        \begin{align}
            \left|  \int_{-1}^1 P_k(x) p(x) \d x  \right| &= \left|  \int_{\R \setminus [-B+\delta,B+\delta]} P_k(x)\phi(x) \d x -  \int_{\R \setminus [-B+\delta,B+\delta]} P_k(x) \frac{g(x)}{1-\eps/2}\d x\right|  \tag{due to \Cref{eq:moment_matching}}\\
            &\leq   \left| \int_{\R \setminus [-B+\delta,B+\delta]} P_k(x)\phi(x) \d x \right| + \left| \int_{\R \setminus [-B+\delta,B+\delta]} P_k(x) \frac{g(x)}{1-\eps/2}\d x \right|
        \end{align}
        We will show how to bound the first term (the proof for the first term is almost identical). First, 
        \begin{align*}
            \left| \int_{\R \setminus [-B+\delta,B+\delta]} P_k(x)\phi(x) \d x \right| 
            &\leq \left| \int_{\R \setminus [-(B-\delta),(B-\delta)]} 4^k|x|^k \phi(x) \d x \right|  \tag{by \Cref{fact:legendre}}\\
            &\leq   \int_{-\infty}^{-(B-\delta)} 4^k|x|^k \phi(x) \d x   +   \int_{B-\delta}^{\infty} 4^k|x|^k \phi(x) \d x  \\
            &= 2 \int_{B-\delta}^{\infty} 4^k x^k \phi(x) \d x \\
            &\lesssim 4^k\int_{\beta}^{\infty} x^k e^{-x^2/2} \d x \tag{denote $\beta:=B-\delta$}\\
            &\leq 4^k\int_{\beta}^{\infty} x^k e^{-\beta^2/2 + \beta} e^{-x} \d x \tag{$-x^2/2+x$ is decreasing for $x > 1$} \\
            &\leq 4^k\int_{0}^{\infty} x^k e^{-\beta^2/2 + \beta} e^{-x} \d x \\
            &\leq 4^ke^{-\beta^2/2+\beta} \int_{0}^{\infty} x^k e^{-x} \d x \\
            &= 4^k e^{-\beta^2/2+\beta} \Gamma(k+1) \tag{$\Gamma(\cdot)$ is the Gamma function}\\
            &\leq 4^k e^{-\beta^2/4} k^k \;.
        \end{align*}
        Combining the inequality $|p(x)| \leq \sum_{k=0}^m |a_k|$ with the formula for $a_k$ and the above bound, we have that
        \begin{align*}
            |p(x)| &\leq \sum_{k=0}^m |a_k| \lesssim \sum_{k=0}^m \frac{2k+1}{2} 4^k e^{-\beta^2/4} k^k \lesssim 4^m e^{-\beta^2/4} m^m \sum_{k=0}^m \frac{2k+1}{2}  \lesssim 4^m e^{-\beta^2/4} m^{m+2} .
        \end{align*}
        If $m$ is a sufficiently small constant multiple of $\frac{\beta^2 - \log(1/\eps)}{\log(\beta^2 - \log(1/\eps))}$, then the RHS above is at most $\eps$. Note that by our assumption $\delta \ll \eps$ we have $\beta := B - \delta = \frac{1}{\delta}\log(1 + \tfrac{\eps/2}{1-\eps/2}) - \delta = \Theta(\frac{1}{\delta}\log(1 + \tfrac{\eps/2}{1-\eps/2}))$ and $\beta - \log(1/\eps) = \Theta\left( \frac{1}{\delta} \log(1+\tfrac{\eps/2}{1-\eps/2}) \right)$.
        Thus, we can further simplify $\frac{\beta^2 - \log(1/\eps)}{\log(\beta^2 - \log(1/\eps))} = \Omega(\gamma^2 \log \gamma)$, where $\gamma:= \frac{1}{\delta} \log\left( 1 + \tfrac{\eps/2}{1-\eps/2} 
        \right)$. Therefore, the moment matching can be achieved for $m$ as high as a sufficiently small constant multiple of $\gamma^2/\log \gamma$.
    \end{proof}

\section{Efficient Mean Estimation with Realizable Contamination: Proof of \Cref{thm:spectral_alg}}\label{sec:mainalg}

\subsection{Structural Lemma}\label{sec:structural_lem}

The algorithm will be based on the lemma below establishing that if the ground-truth Gaussian has a mean that deviates from zero, then some sufficiently high-order moment must necessarily deviate from the corresponding standard Gaussian moment. Background on Hermite polynomials can be found in \Cref{app:hermite}.

\begin{proposition}[Structural Result]\label{lem:non_match}
        Let $\eps \in (0,1)$ and $\delta \in (0, \log(1+\frac{2\eps}{1-\eps}))$ be parameters. Let $k$ be an even integer which satisfies $k \geq 3 \left(\frac{1}{\delta}\log(1+\frac{2\eps}{1-\eps})\right)^2$ and $P=\cN(\delta,1)$ be a Gaussian distribution. Then for any $\eps$-corrupted version $\tilde{P}$ of $P$ under the model of \Cref{def:cont_model}, if $P'$ denotes the conditional distribution of $\tilde P$ on the non-missing samples, it holds $\E_{x \sim P'}[x^k] - \E_{z \sim   \cN(0,1)}[z^k]  > \eps$.
    \end{proposition}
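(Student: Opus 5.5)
We will prove the statement by comparing the $k$-th moment of $P'$ directly with that of $\cN(0,1)$. The worst case for the adversary is whatever makes $P'$ look as close to $\cN(0,1)$ as possible. Write $f$ for the adversary's function, so $(1-\eps)\phi(x-\delta)\le f(x)\le \phi(x-\delta)$, and set $Z=\int f\in[1-\eps,1]$. Then $P'$ has density $f/Z$.

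Since $k$ is even, $x^k\ge 0$. We split the real line at a threshold $T>0$ to be chosen. On $\{x\ge T\}$ we use only the lower bound $f\ge(1-\eps)\phi(x-\delta)$ and $Z\le 1$, and on $\{x<T\}$ we discard the contribution (it is nonnegative). This gives
\[
\E_{P'}[x^k]\;\ge\;(1-\eps)\int_T^\infty x^k\phi(x-\delta)\,\d x .
\]
On $[T,\infty)$ we have the likelihood ratio $\phi(x-\delta)/\phi(x)=e^{\delta x-\delta^2/2}\ge e^{\delta T-\delta^2/2}$. Choose $T$ so that $(1-\eps)e^{\delta T-\delta^2/2}=1+\eps'$ for a fixed margin $\eps'$, say of order $\eps/(1-\eps)$. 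Concretely, take $\delta T=\log\frac{1+\eps}{1-\eps}+\delta^2/2$; note $\log(1+\frac{2\eps}{1-\eps})=\log\frac{1+\eps}{1-\eps}$, which is exactly the quantity in the hypothesis. Then
\[
\E_{P'}[x^k]\ge (1+\eps)\int_T^\infty x^k\phi(x)\,\d x .
\]
It therefore suffices to show
\[
(1+\eps)\int_T^\infty x^k\phi\;-\;\int_{\R} x^k\phi\;>\;\eps,
\qquad\text{i.e.}\qquad
\eps\int_T^\infty x^k\phi \;>\; \eps+\int_{-\infty}^{T}x^k\phi .
\]

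The key quantitative step is that for $k\ge 3T^2$ (roughly) almost all of the Gaussian $k$-th moment lies beyond $T$, and the total moment $(k-1)!!$ is huge. The density $x^k\phi(x)$ peaks at $x=\sqrt k$, which is at least $\sqrt3\,T$. So:
\begin{itemize}
\item $\int_{|x|<T}x^k\phi\le T^k$.
\item $\int_T^\infty x^k\phi\ge \tfrac12(k-1)!!-T^k$.
\item $(k-1)!!\ge (k/e)^{k/2}\ge (3T^2/e)^{k/2}$, which is larger than $T^k$ by a factor $(3/e)^{k/2}$.
\end{itemize}
The negative tail $\int_{-\infty}^{-T}x^k\phi=\int_T^\infty x^k\phi$ must also be accounted for, since $k$ is even. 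The inequality then becomes
\[
\int_T^\infty x^k\phi\,(\eps-1)-\int_{-T}^{T}x^k\phi>\eps,
\]
which fails because $\eps-1<0$.

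So a symmetric tail split is wrong, and the mass on the negative side must be handled with the correct lower bound. There we use $f\ge(1-\eps)\phi(x-\delta)$ too, but $\phi(x-\delta)<\phi(x)$ for negative $x$, so we lose. The fix is to redo the first step keeping both tails with $x^k$ weights. We lower bound $\E_{P'}[x^k]\ge(1-\eps)\E_{\cN(\delta,1)}[x^k\1(x\ge T)]+(1-\eps)\E_{\cN(\delta,1)}[x^k\1(x\le -T)]$ and compare with $\cN(0,1)$ via the pairing $x\leftrightarrow -x$:
\[
\phi(x-\delta)+\phi(x+\delta)=2\phi(x)e^{-\delta^2/2}\cosh(\delta x)\ge \phi(x)\cdot 2e^{-\delta^2/2}\cosh(\delta T)\quad\text{for }x\ge T .
\]
Since $\cosh(\delta T)\ge e^{\delta T}/2$, the same choice of $T$ makes the symmetric tail weight at least $(1+\eps)$ times the Gaussian one. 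We are then left with $\eps\int_{|x|\ge T}x^k\phi>\eps+\int_{|x|<T}x^k\phi$. Dividing by $(k-1)!!$, this follows once $T^k/(k-1)!!\ll\eps$ and $(k-1)!!\ge 2$. Both hold from $k\ge3\gamma'^2$ with $\gamma'=\frac1\delta\log\frac{1+\eps}{1-\eps}$, using $T\approx\gamma'+\delta/2$ and Stirling.

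The main obstacle is this last bookkeeping. The $\delta^2/2$ and $\delta/2$ shifts in $T$, and the need for $T^k/(k-1)!!\le \eps/2$ even when $\eps$ is tiny, are what dictate the constant $3$ and the hypothesis $\delta<\log(1+\frac{2\eps}{1-\eps})$. The latter gives $\gamma'>1$, so $T\le 2\gamma'$ or so. We would check whether $(e/3)^{k/2}$ beats $1/\eps$ using $k\ge 3\gamma'^2\gtrsim \log^2(1/\eps)$ in the small-$\eps$ regime. If it does not, we would refine the choice of $T$ so that its margin goes toward a factor larger than $1+\eps$.
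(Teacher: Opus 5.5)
Beyond the one-sided failure you already caught, two steps of your repaired argument do not go through. First, the pairing step loses exactly the factor you need. With $\delta T=\log\frac{1+\eps}{1-\eps}+\delta^2/2$ and $\cosh(\delta T)\ge e^{\delta T}/2$ you get $(1-\eps)\bigl(\phi(x-\delta)+\phi(x+\delta)\bigr)\ge(1+\eps)\phi(x)$ for $x\ge T$. But the Gaussian weight of the pair $\{x,-x\}$ is $2\phi(x)$, so you have only shown that the corrupted tail weight is at least $\tfrac{1+\eps}{2}$ times the Gaussian one. A genuine factor $1+\eps$ requires $\cosh(\delta T)\ge\frac{1+\eps}{1-\eps}e^{\delta^2/2}$. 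For small $\eps$ this forces $\delta T\gtrsim 2\sqrt{\eps}$, i.e.\ $T\gtrsim 2\sqrt{\eps}/\delta$, far above $\gamma'\approx 2\eps/\delta$. The hypothesis only guarantees $k\ge 3\gamma'^2\approx 12\eps^2/\delta^2$, and for such $k$ the bulk of $x^k\phi(x)$, near $|x|\approx\sqrt{k}\approx 3.5\eps/\delta$, sits well inside $[-T,T]$. Second, the closing claim $3\gamma'^2\gtrsim\log^2(1/\eps)$ is unfounded. The condition $\delta<\log\frac{1+\eps}{1-\eps}$ only gives $\gamma'>1$, so $k=4$ is admissible, and then your criterion $T^k/(k-1)!!\ll\eps$ fails outright, since $T>1$ gives $T^4/3>1/3$.

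These gaps cannot be closed, because the proposition is false for small $\eps$. Expanding binomially and using $(k/2)!/(k/2-i)!\le(k/2)^i$,
\[
\E_{\cN(\delta,1)}[x^k]=(k-1)!!\sum_{i=0}^{k/2}\frac{2^i (k/2)!}{(2i)!\,(k/2-i)!}\delta^{2i}\le\cosh(\sqrt{k}\,\delta)\,(k-1)!!.
\]
Let the adversary take $f=\phi(\cdot-\delta)$ on $[-10,10]$ and $f=(1-\eps)\phi(\cdot-\delta)$ elsewhere. Then $\int f\ge 1-10^{-20}$ and $\E_{P'}[x^k]\le\bigl((1-\eps)\cosh(\sqrt{k}\delta)(k-1)!!+\eps\,10^k\bigr)/\int f$. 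Take $\eps=0.01$, $\delta=10^{-3}$, $k=1202$; all hypotheses hold. Then $\cosh(\sqrt{k}\delta)\approx 1.0006$ and $10^k$ is negligible next to $(k-1)!!\approx 10^{1590}$, so $\E_{P'}[x^k]<0.991\,(k-1)!!$.

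The paper's proof breaks at the same place. Its first step, $\E_{P'}[x^k]\ge(1-\eps)\E_{\cN(\delta,1)}[x^k]$ on the whole line, is sound. It also makes your threshold split unnecessary, since $x^k\ge 0$ and $\int f\le 1$. Its second step claims $\E_{\cN(\delta,1)}[x^k]\ge e^{\sqrt{k}\delta/3}(k-1)!!$ by keeping a single term $j\approx\sqrt{k}\delta$ of the binomial sum. This fails when $\sqrt{k}\delta$ is small: the whole sum is then at most $\cosh(\sqrt{k}\delta)=1+k\delta^2/2+O(k^2\delta^4)$. That is below the required $\frac{1+\eps}{1-\eps}$ whenever $k\delta^2\le 2\eps$, and the hypothesis permits this for all sufficiently small $\eps$ (roughly $\eps<1/6$).

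What does hold is the $j=2$ bound $\E_{\cN(\delta,1)}[x^k]\ge(1+k\delta^2/2)(k-1)!!$. So the paper's two-step argument proves the conclusion under the extra hypothesis $k\delta^2\ge 4\eps/(1-\eps)$. By the example above, a condition of the form $k\gtrsim\eps/\delta^2$ (rather than $(\eps/\delta)^2$) is unavoidable for small $\eps$.
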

    \begin{proof}
        We will prove this by showing the following two claims:
        \begin{enumerate}
            \item $\E_{y \sim P}[y^k] > (1+2\eps/(1-\eps)) \E_{z \sim   \cN(0,1)}[z^k]$.
            \item $\E_{x \sim P'}[x^k] \geq (1-\eps) \E_{y \sim P}[y^k]$.
        \end{enumerate}

        These two claims suffice, because if we combine them, we obtain
        \begin{align*}
            \E_{x \sim P'}[x^k] \geq (1-\eps) \E_{y \sim P}[y^k]  \geq  (1-\eps) (1+2\eps/(1-\eps)) \E_{z \sim \cN(0,1)}[z^k] \geq (1 + \eps) \E_{z \sim \cN(0,1)}[z^k] \;.
        \end{align*}
        Rearranging, this means that $\E_{x \sim P'}[x^k] - \E_{z \sim \cN(0,1)}[z^k] \geq \eps \E_{z \sim \cN(0,1)}[z^k] \geq \eps$. We now show the two claims. The second claim follows directly by the definition of the contamination model: if $f$ denotes the function used in \Cref{def:cont_model}, then $f(x) \geq (1-\eps) p(x)$ (where $p$ is the pdf of $P$). Thus\looseness=-1
        \begin{align*}
            \E_{x \sim P'}[x^k] = \int_{\R}x^k \frac{f(x)}{\int_\R f(z) \d z} \d x 
            \geq (1-\eps) \int_{\R}x^k \frac{p(x)}{\int_\R f(z) \d z} \d x 
            \geq (1-\eps) \int_{\R}x^k p(x) 
            = (1-\eps) \E_{y \sim P}[y^k].
        \end{align*}

        We now move to the first claim, i.e., that $\E_{z \sim \cN(0,1)}[(z+\delta)^k] \geq \E_{z \sim \cN(0,1)}[z^k] (1+2\eps/(1-\eps))$ when $k \geq 3\left(\frac{1}{\delta}\log(1+\frac{2\eps}{1-\eps})\right)^2$. First, we recall that $\E_{z \sim \cN(0,1)}[z^k] = (k-1)!!$. Using the binomial theorem, we can write the other non-centered moment as follows:
        \begin{align}
        \E_{z \sim \cN(0,1)}[(z+\delta)^k]
        &= \sum_{\substack{j=0 \\ j\ \text{even}}}^{k}
            \binom{k}{j}\,\delta^{k}\,
            \E_{z \sim \cN(0,1)}[z^{k-j}] 
        = \sum_{\substack{j=0 \\ j\ \text{even}}}^{k}
            \binom{k}{j}\,\delta^{k}\,(k-j-1)!! \;. \label{eq:sum}
        \end{align}
        We will now rewrite the right hand side above. First, for the $(k-j-1)!!$ we have the following. We can rewrite this as follows by taking the first $j/2$ odd factors of $(k-1)!!$:
        \begin{align}
            (k-j-1)!! = \frac{(k-1)!!}{(k-1)(k-3)\cdots(k-j+1)} \geq \frac{(k-1)!!}{k^{j/2}}, \label{eq:double_fact}
        \end{align}
        where we used that every factor in the denominator is at most $k$.
        We also have the following for the binomial coefficient:
        \begin{align}
            \binom{k}{j}  = \frac{k(k-1)\cdots(k-(j-1))}{j!} 
            = \frac{k^j}{j!}\prod_{i=0}^{j-1}\left(1 - \frac{i}{k} \right)
            \geq \frac{k^j}{j!}\prod_{i=0}^{j-1}\left(1 - \frac{i}{j} \right)  \frac{k^j}{j!} \frac{j!}{j^j} = \frac{k^j}{j^j} .\label{eq:binom}
        \end{align}
       Combining Equation \eqref{eq:sum} with inequalities \eqref{eq:double_fact} and \eqref{eq:binom}, we have $\E_{z \sim \cN(0,1)}[(z+\delta)^k]
           \geq (k-1)!! \sum_{\substack{j=0 \\ j\ \text{even}}}^{k}\left( \frac{\sqrt{k} \delta}{j} \right)^j$.
       The sum can be lower bounded by a specific term of the sum. For this we will chose $j$ to be a sufficiently small multiple of $\sqrt{k}\delta$ to obtain
       \begin{align*}
           \E_{z \sim \cN(0,1)}[(z+\delta)^k] \geq (k-1)!! e^{\sqrt{k}\delta/3} = \E_{z \sim \cN(0,1)}[z^k]e^{\sqrt{k}\delta/3} .
       \end{align*}
       Thus, if $k \geq 3 \left(\frac{1}{\delta}\log(1+\frac{2\eps}{1-\eps})\right)^2$ then $\E_{z \sim \cN(0,1)}[(z+\delta)^k]  \geq (1+2\eps/(1-\eps))\E_{z \sim \cN(0,1)}[z^k]$.
       
    \end{proof}

    We will need a Hermite-polynomial version of \Cref{lem:non_match}, obtained by expansion in the Hermite basis and an application of Cauchy–Schwarz.

      \begin{restatable}{corollary}{MOMENTMISMATCH}\label{cor:non-matching}
        Let $\eps \in (0,1)$ and $\delta \in (0, \log(1+\frac{2\eps}{1-\eps}))$ be parameters. Let $k$ be an even integer which satisfies $k \geq 3 (\frac{1}{\delta}\log(1+\frac{2\eps}{1-\eps}))^2$ and $P=\cN(\delta,1)$ be a Gaussian distribution. Then for any $\eps$-corrupted version $\tilde{P}$ of $P$ under the model of \Cref{def:cont_model}, if $P'$ denotes the conditional distribution of $\tilde P$ on the non-missing samples, it holds  $\left| \E_{x \sim P'}[h_{k}(x)]  \right| > \frac{\epsilon}{(k+1)^{k/2}}$, where $h_k$ is the normalized probabilist's Hermite polynomial.\looseness=-1
    \end{restatable}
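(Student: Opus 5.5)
The plan is to transfer \Cref{lem:non_match} from monomials to the Hermite basis by expanding $x^k$ in that basis and applying Cauchy--Schwarz. The step I am least sure of is the last one: this route controls the whole vector of Hermite expectations up to degree $k$, and not obviously the single index $k$.

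\textbf{Step 1 (expansion).} Write $x^k=\sum_{t=0}^k c_t h_t(x)$, where $h_t$ are the normalized probabilist's Hermite polynomials, orthonormal under $\cN(0,1)$, and $c_t=\E_{z\sim\cN(0,1)}[z^k h_t(z)]$. Then $c_0=\E_{z\sim\cN(0,1)}[z^k]$. Since $h_0\equiv 1$, we have $\E_{P'}[h_0]=1$, so the constant term cancels:
\[
\E_{x\sim P'}[x^k]-\E_{z\sim\cN(0,1)}[z^k]=\sum_{t=1}^k c_t\,\E_{x\sim P'}[h_t(x)].
\]
By \Cref{lem:non_match}, whose hypotheses on $\eps,\delta,k$ are exactly those of the corollary, the left-hand side exceeds $\eps$.

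\textbf{Step 2 (size of the coefficients).} Parseval gives $\sum_{t=0}^k c_t^2=\E_{z\sim\cN(0,1)}[z^{2k}]=(2k-1)!!$, hence $\sum_{t\ge1}c_t^2\le(2k-1)!!$. By AM--GM on the $k$ odd factors $1,3,\dots,2k-1$, whose mean is $k$, we get $(2k-1)!!\le k^k<(k+1)^k$.

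\textbf{Step 3 (Cauchy--Schwarz).} Combining the two steps,
\[
\eps<\Bigl(\sum_{t=1}^k c_t^2\Bigr)^{1/2}\Bigl(\sum_{t=1}^k \E_{P'}[h_t]^2\Bigr)^{1/2}<(k+1)^{k/2}\Bigl(\sum_{t=1}^k \E_{P'}[h_t]^2\Bigr)^{1/2}.
\]
Therefore the vector $(\E_{P'}[h_t])_{t=1}^k$ has Euclidean norm greater than $\eps/(k+1)^{k/2}$. This is the intended form of the statement, with ``$\E_{P'}[h_k]$'' read as this vector of Hermite expectations up to degree $k$.

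\textbf{The obstacle.} The hard part is isolating the single index $k$. In the expansion, the lower-order terms $t<k$ can carry all of the discrepancy, so the literal single-index bound does not follow from Cauchy--Schwarz alone. My first attempt would be to compare $\E_{P'}[h_k]$ directly against $\E_{\cN(\delta,1)}[h_k]=\delta^k/\sqrt{k!}$ using the sandwich $(1-\eps)p\le f\le p$. However, $h_k$ changes sign, so this sandwich does not transfer cleanly to $h_k$. I therefore expect the corollary is meant, and will be used in \Cref{cl:mean_certification}, in the vector sense above: some order $t\in[k]$ has $|\E_{P'}[h_t]|\ge \eps/(\sqrt{k}\,(k+1)^{k/2})$. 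That is what the moment-tensor algorithm actually needs.
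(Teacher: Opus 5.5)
Your route is the paper's own: expand $x^k$ in the Hermite basis, bound $\sum_t c_t^2$ by $\E_{z\sim\cN(0,1)}[z^{2k}]=(2k-1)!!$, and combine \Cref{lem:non_match} with Cauchy--Schwarz. Your reading of the conclusion is also right. The paper's proof likewise only reaches $\max_{0\le t\le k}\bigl|\E_{x\sim P'}[h_t(x)]\bigr|\ge \eps/(k+1)^{k/2}$ (the $t=0$ difference vanishes), and this ``some $t\le k$'' form is exactly what \Cref{cl:mean_certification} invokes.

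You can settle your open question: the literal single-index claim is false. Taking $f=p$ (no deletions) is an admissible corruption, and it gives $\E_{P'}[h_k]=\delta^k/\sqrt{k!}$. Since $k!\ge (k/e)^k$, we get $\delta^k(k+1)^{k/2}/\sqrt{k!}\le \bigl(e\delta^2(1+1/k)\bigr)^{k/2}$. This drops below $\eps$ for large admissible even $k$ whenever $e\delta^2<1$; for instance, $\eps=\delta=1/2$ and $k=16$ already violate the statement.

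One small quantitative point concerns your passage from the Euclidean norm to the maximum. It costs you a factor $\sqrt{k}$, leaving $\eps/(\sqrt{k}(k+1)^{k/2})$. The algorithm's threshold $\eta=\frac{1}{C}\frac{\eps}{(k+1)^{k/2}}$, with $C$ an absolute constant, does not absorb that factor. The paper avoids the loss by using the sharper estimate $(k+1)(2k-1)!!\le (k+1)^k$ in place of $(2k-1)!!\le k^k$; the sharper estimate holds for all $k\ge 1$, for example by induction. This yields $\max_{1\le t\le k}\bigl|\E_{x\sim P'}[h_t(x)]\bigr|>\eps/(k+1)^{k/2}$ directly.
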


    \begin{proof}
        \Cref{lem:non_match} states that $\E_{x \sim P'}[x^k] - \E_{z \sim   \cN(0,1)}[z^k]  > \eps$. We now expand the function $x^k$ in the Hermite basis, i.e., $x^k = \sum_{t=0}^k a_t h_t(x)$ where $a_t := \E_{x \sim \cN(0,1)}[x^k h_t(x)]$. Combining the result from \Cref{lem:non_match} with Cauchy-Schwarz gives the following
        \begin{align*}
        \sqrt{\sum_{t=0}^k a_t^2} &\sqrt{(k+1) \max_{t = 0,\cdot,k} \left|\E_{x \sim P'}[h_t(x)] -  \hspace{-8pt}\E_{z \sim   \cN(0,1)}[h_t(z)] \right|^2}  
         \geq \sqrt{\sum_{t=0}^k a_t^2} \sqrt{\sum_{t=0}^k \left( \E_{x \sim P'}[h_t(x)] -  \hspace{-8pt}\E_{z \sim   \cN(0,1)}[h_t(z)] \right)^2}  \\ 
         &\geq \sum_{t=0}^k a_t\left( \E_{x \sim P'}[h_t(x)] -  \E_{z \sim   \cN(0,1)}[h_t(z)] \right) \ \E_{x \sim P'}[x^k] -  \E_{z \sim   \cN(0,1)}[z^k] > \epsilon.
        \end{align*}
        Rearranging we have
        \begin{align*}
            \max_{t = 0,\cdot,k} \left|\E_{x \sim P'}[h_t(x)] -  \E_{z \sim   \cN(0,1)}[h_t(z)] \right|
            &\geq \frac{\epsilon}{\sqrt{k+1}\sqrt{\sum_{t=0}^k a_t^2}}
            = \frac{\epsilon}{\sqrt{k+1}\sqrt{\E_{z \sim \cN(0,1)}[z^{2k}]}}\\
            &= \frac{\epsilon}{\sqrt{(k+1) \frac{(2k)!}{2^k k!}}}
            \geq \frac{\epsilon}{(k+1)^{k/2}}.
        \end{align*}

        \noindent Finally, noting that $\E_{z \sim   \cN(0,1)}[h_t(z)] = 0$ concludes the proof.
    \end{proof}

\subsection{Algorithm Description}

The pseudocode for the algorithm is given in \Cref{alg:estimation_algorithm_2}.
It begins by invoking a list-decodable mean estimation procedure as a black box, i.e., a procedure that
returns a poly-sized list $L$ of candidate means such that at least one element $\tilde\mu_0$ is $O(\sqrt{\log(1/\eps)})$ from the true mean. As explained in \Cref{sec:techniques} this vector would be a good warm start for the subsequent steps. However, since the identity of this good candidate is unknown we use a standard tournament-style selection procedure from robust statistics \cite{diakonikolas2025entangled} to select a vector $\hat \mu_0$ from the list $L$, with error approximately the best among the errors of $L$'s candidates.

Having this $\hat \mu_0$ warm start vector in hand, the algorithm draws a dataset and shifts all samples by $\hat\mu_0$. After this translation, the data may be viewed as having ground truth mean $\mu - \hat\mu_0$, and the task reduces to estimating this offset. We then have the main, dimension-reduction part of the algorithm. For many values $t$, it computes  the $t$-th order moment tensor and considers the subspace $V_t$ spanned by eigenvectors whose corresponding eigenvalues exceed a carefully chosen threshold. The algorithm then sets $V$ to be the span of all the $V_t$'s. 
By the structural result in \Cref{cor:non-matching}, we show that $\mu - \hat\mu_0$ is largely contained inside $V$. Finally, the algorithm applies a computationally inefficient estimator restricted to $V$ to recover the component of the ground truth mean that lies in this subspace. In the proof of correctness, we will show that the dimension of $V$ is small, thus the complexity of this step is as stated in the theorem.\looseness=-1

\begin{algorithm}[h]
    \caption{Spectral algorithm for multivariate mean estimation}
    \label{alg:estimation_algorithm_2}
    \begin{algorithmic}[1]
        \State \textbf{Input}: Sample access to the distribution of \Cref{def:cont_model}, parameters $\eps,\delta \in (0,1)$.
        \State \textbf{Output}: Vector $\widehat{\mu} \in \R^d$.
        \vspace{6pt}

        \State Let $C$ be a sufficiently large absolute constant,
        $k := \left\lceil C \left(\tfrac{1}{\delta}\log\!\left(1 + \tfrac{\eps}{1-\eps}\right) \right)^2 \right\rceil$
        and $\eta := \frac{1}{C} \frac{\eps}{(k+1)^{k/2}}$.

        \State Use a list-decoding algorithm to compute a list $L$ of candidate mean estimates in $\R^d$ such that $L$ contains at least one $\tilde{\mu}_0$ with
        $\| \tilde{\mu}_0 - \mu \|_2 = O(\sqrt{\log(1/(1-\eps))})$.
        \Comment{Using \Cref{fact:list-decoding}}
        \label{line:list-decoding}

        \State $\hat \mu_0 \gets \textsc{TournamentImprove}(L, \eps, O(\sqrt{\log(1/(1-\eps))}))$.
        \Comment{Using \Cref{fact:pruning}}

        \State Draw samples $S = \{x_1,\ldots,x_n\}$ from \Cref{def:cont_model} for
        $n = (kd)^{Ck}/\eps^2$.
        \label{line:draw_dataset}

        \State Recenter the dataset $S' \gets \{x - \hat{\mu}_0 : x \in S\}$.

        \For{$t = 0,1,\ldots,k$}
            \State Compute the empirical Hermite tensor
            $\widehat{T}_t \gets \frac{1}{n}\sum_{x \in S'} H_t(x)$.
            \Comment{cf.~\Cref{def:Hermite-tensor}}

            \State Let $M(\widehat{T}_t)$ be the flattened $d \times d^{t-1}$ matrix.

            \State Compute the right singular vectors $v_1,\ldots,v_d$ of $M(\widehat{T}_t)$
            with singular values $\sigma_1,\ldots,\sigma_d$.

            \State Define $\cI_t = \{ i \in [d] : \sigma_i > \eta \}$ and
            $V_t = \mathrm{span}(\{v_i : i \in \cI_t\})$.
        \EndFor

        \State Let $V = \mathrm{span}(V_1,\ldots,V_k)$.
        \label{line:subspace}

        \State $\widehat{\mu} \gets \mathrm{BruteForce}(\proj_V(S'), \eps, \delta/2)$.
        \Comment{Using \Cref{thm:brute_force}}
        \label{line:brute_force}

        \State \textbf{return} $\widehat{\mu} + \hat{\mu}_0$.
    \end{algorithmic}
\end{algorithm}

\subsection{Useful Subroutines from Robust Statistics}\label{sec:subroutines}

    We require two subroutines used in \Cref{alg:estimation_algorithm_2}. The first handles mean estimation when more than half the samples are corrupted and, since exact recovery is impossible, outputs a list of candidate means containing one close to the truth.

    \begin{fact}[Mean list-decoding algorithm (see, e.g., \cite{diakonikolas2022list})]\label{fact:list-decoding}
        Let $\eps \in (0,1)$ be a corruption rate parameter and $\tau \in (0,1)$ be a probability of failure parameter.
        There exists an algorithm that uses  $\eps$-corrupted samples from $\cN(\mu,I)$ in the strong contamination model\footnote{Unlike \Cref{def:cont_model}, in the \emph{strong contamination model} $(1-\eps)n$ samples are drawn from $\cN(\mu,I)$ and an adversary can add the remaining $\eps n$ points \emph{arbitrarily}.} and finds a list $L$ of candidate means such that, with probability at least 0.99, there is at least one $\hat{\mu}_0 \in L$ with $\| \hat{\mu}_0 - \mu\|_2 = O(\sqrt{  \log(1/(1-\eps))  })$. The sample complexity of the algorithm is $n=(\log(1/(1-\eps)) d)^{O(\log(1/(1-\eps)))}$, the size of the returned list is $|L| = O(\frac{1}{1-\eps})$ and the runtime of the algorithm is $\poly(n)$.
    \end{fact}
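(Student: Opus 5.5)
This Fact is a restatement of known results on list-decodable Gaussian mean estimation, so my plan is to derive it from the literature rather than reprove it. The statement has two parts: the guarantee in the strong contamination model, and (implicitly) its use on samples from \Cref{def:cont_model}. I would argue these separately.

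\textbf{Reduction from \Cref{def:cont_model}.} First I would check that realizable $\eps$-contamination is at least as benign as list-decoding with inlier fraction $\alpha := 1-\eps$. By the equivalent mixture description of \Cref{def:cont_model} (discussed in \Cref{app:intro}), a sample is drawn from $P=\cN(\mu,I)$ with probability $1-\eps$, and otherwise from an MNAR version of $P$. Among the visible samples, those coming from the first component are i.i.d.\ $\cN(\mu,I)$, and by a Chernoff bound they form at least a $(1-\eps)/2$ fraction with high probability.

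Since $\log(2/(1-\eps)) = O(\log(1/(1-\eps)))$ for $\eps$ bounded away from $0$, halving the inlier fraction does not change any of the stated bounds. The remaining visible points can be treated as arbitrary outliers. Hence any list-decoding guarantee in the strong contamination model with inlier fraction $\Omega(1-\eps)$ applies verbatim.

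\textbf{Citing the list-decoding algorithm.} I would then invoke the multifilter algorithm for list-decodable mean estimation of spherical Gaussians (Diakonikolas--Kane--Stewart, and the exposition in \cite{diakonikolas2022list}). With $\alpha = 1-\eps$, it achieves the following:
\begin{itemize}
    \item error $O(\sqrt{\log(1/\alpha)})$, which is information-theoretically optimal;
    \item list size $O(1/\alpha)$;
    \item sample complexity and runtime $(d\log(1/\alpha))^{O(\log(1/\alpha))}$.
\end{itemize}
The mechanism, briefly: the algorithm maintains weighted subsets of the data and, at degree $t = \Theta(\log(1/\alpha))$, computes empirical Hermite tensors $H_t$ (cf.\ \Cref{def:Hermite-tensor}).

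If some degree-$t$ polynomial has empirical variance much larger than its Gaussian variance, the algorithm either downweights points, or splits the set into overlapping pieces. The splitting is done so that the inliers mostly stay in one piece and a potential function (weight squared) does not grow.

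If no such polynomial exists, then by Gaussian hypercontractivity and moment concentration the current set's mean is within $O(\sqrt{\log(1/\alpha)})$ of $\mu$, and the algorithm outputs it. The $d^{O(t)}$ sample cost comes from concentrating the degree-$t$ moment tensors of $\cN(\mu,I)$ uniformly over all polynomials.

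\textbf{Success probability and main obstacle.} Finally I would boost the success probability to $0.99$ by a constant number of independent repetitions, taking the union of the lists; this preserves the $O(1/\alpha)$ list size up to a constant factor.

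The only genuinely hard part is the multifilter analysis itself: showing that the split step keeps the inliers in one part while the list size stays $O(1/\alpha)$, and that it achieves the optimal $\sqrt{\log(1/\alpha)}$ error rather than the $1/\sqrt{\alpha}$ of degree-2 methods. I would not redo this analysis, and would cite it directly. In our application it suffices to check the bookkeeping: the parameter translation $\alpha = 1-\eps$ and the reduction above.
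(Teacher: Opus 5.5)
Your proposal is correct and takes the same route as the paper, which gives no proof of this Fact and simply imports it from the list-decodable mean estimation literature, exactly as you do. Your extra reduction, showing that the visible samples under \Cref{def:cont_model} contain an $\Omega(1-\eps)$ fraction of i.i.d.\ $\cN(\mu,I)$ inliers, is a worthwhile addition: the paper applies this strong-contamination Fact to realizable-contamination data in \Cref{alg:estimation_algorithm_2} without making that step explicit.
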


    The second component is a pruning procedure that selects a near-optimal estimate from the list. The procedure runs a one-dimensional robust mean estimator along the line connecting each pair of vectors in the list. For each pair, it disqualifies the element that lies farther from the estimated mean along that line. At the end, any remaining element can be returned.

    The proof is identical to that of \cite{diakonikolas2025entangled}, with the only difference being the choice of the one-dimensional robust mean estimator. Here, we may use an estimator designed for Gaussian mean estimation under an $\eps$-fraction of arbitrary corruptions (unlike \Cref{def:cont_model}, where arbitrary corruptions allow the adversary to modify an $\eps$-fraction of the samples arbitrarily). In particular, the median or trimmed mean suffices, with sample complexity $\log(1/\tau)/\delta^2$ in one dimension. We set $\tau = 1/k^2$ to allow a union bound over all pairs in a list of size $k$.
    
    \begin{fact}[Tournament pruning (see Lemma 4.1 in \cite{diakonikolas2025entangled})]\label{fact:pruning}
    \label{lem:prune}
    Let $C$ be a sufficiently large absolute constant.
    Let $\eps,\delta \in (0,1)$ be parameters.
    Let $L = \{ \mu_1, \cdots, \mu_k \} \subset \R^d$ be a set of candidate estimates of $\mu \in \R^d$. 
    There exists an algorithm \textsc{TournamentImprove} that takes as input
    the list $L$, the parameters $\eps,\delta$ and 
    draws  $n =  O\left( \frac{\log k}{\delta^2} \right)$ samples
    according to the data generation model of \Cref{def:cont_model}
    with mean $\mu \in \R^d$ and corruption rate $\eps$, 
     and outputs some estimate $\mu_{j} \in L$ such that
    $ \| \mu_{j} - \mu \|_2 \leq 
    2 \min_{i \in [k]} \| \mu_i - \mu \|_2 + \delta/2$ with probability at least $0.99$. The runtime of the algorithm is $\poly(n,k, d)$.
    \end{fact}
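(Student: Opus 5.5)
The plan is to adapt the pairwise tournament of \cite{diakonikolas2025entangled}. The only change is that the one-dimensional estimator must tolerate the realizable contamination of \Cref{def:cont_model} rather than Huber noise. For every pair $i\neq j$ with $\|\mu_i-\mu_j\|_2>\delta/2$ (closer pairs can be ignored), let $u_{ij}=(\mu_i-\mu_j)/\|\mu_i-\mu_j\|_2$. We project fresh samples onto $u_{ij}$, discarding the symbols $\perp$. This gives a one-dimensional instance whose clean distribution is $\cN(u_{ij}^\top\mu,1)$.

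The contamination along the projection is realizable, hence in particular a Huber-type mixture. By the decomposition in \Cref{app:intro}, the visible distribution is a mixture with weight at least $1-\eps$ on the clean Gaussian. We then run a one-dimensional estimator with error $\rho$ and failure probability $1/k^2$.

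The obstacle is that for $\eps\ge 1/2$ the median is not accurate. It is accurate only up to $O(\sqrt{\log(1/(1-\eps))})$, and its error does not shrink to $\delta$. Two routes seem possible.

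\emph{Route 1.} Use a structural feature of the model that the median of the Huber model lacks. Since $f\ge(1-\eps)p$, the conditional density is at least $(1-\eps)\phi(x-\nu)$ and at most $\phi(x-\nu)/(1-\eps)$. One could therefore take as estimator any quantile-based rule whose bias is $O(\log(1/(1-\eps)))$.

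\emph{Route 2 (more robust).} Exploit the tournament's factor-$2$ slack. One only needs each comparison to correctly reject $\mu_j$ when $\mu_i$ is much closer, i.e.\ when $|u^\top(\mu_j-\mu)|-|u^\top(\mu_i-\mu)|$ exceeds $2\rho$.

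Given the stated guarantee $\delta/2$, I commit to the first reading. The sample bound $O(\log k/\delta^2)$ suggests that the one-dimensional estimator used is error-$O(\delta)$ with a mean-type concentration. The key claim is that the rule ``disqualify whichever of $u^\top\mu_i,u^\top\mu_j$ is farther from the estimate $\hat\nu_{ij}$'' never disqualifies $\mu_{i^*}$, the best candidate, against any $\mu_j$ with $\|\mu_j-\mu\|_2>2\|\mu_{i^*}-\mu\|_2+\delta/2$.

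The geometric step is as follows. Write $a=\mu_{i^*}-\mu$ and $b=\mu_j-\mu$ with $\|b\|>2\|a\|+\delta/2$. Along $u\propto a-b$ we have $|u^\top b|-|u^\top a|\ge \|a-b\|-2|u^\top a|\ge \|b\|-3\|a\|$. This would need refinement: I would use the standard argument that $|u^\top b|\ge |u^\top a|+ (\|b\|^2-\|a\|^2)/\|a-b\|$, coming from $(u^\top b)^2-(u^\top a)^2$ over $|u^\top(b-a)|$. With $\|b\|>2\|a\|+\delta/2$ this gap exceeds $\delta/4$.

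So if $|\hat\nu_{ij}-u^\top\mu|<\delta/8$, then $\mu_{i^*}$ survives every comparison. Any survivor $\mu_j$ must have beaten or tied $\mu_{i^*}$ and therefore satisfies the bound. Finally, we union bound over the $k^2$ pairs, with failure probability $1/k^2$ each, giving overall success $0.99$. The runtime is $k^2$ projections of $n$ points, which is $\poly(n,k,d)$.

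The step I expect to be hardest is the one-dimensional estimator with error $\delta/8$ in the majority-missing regime. I would first try to reuse the one-dimensional estimator of \Cref{app:sample_ub}, and accept its larger sample cost if the median fails.
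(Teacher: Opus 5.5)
Your tournament skeleton is the paper's: project onto $u_{ij}$, estimate $u_{ij}^\top\mu$ in one dimension, disqualify the farther candidate, and union bound with failure probability $1/k^2$ per pair. The step you leave open, however, cannot be filled as you set it up, because no one-dimensional estimator achieves error $\delta/8$ from $O(\log k/\delta^2)$ samples under realizable contamination for general $\delta$. To see this, take $L=\{0,\tfrac{3\delta}{4}e_1\}$ with the true mean equal to one of the two. The guarantee forces the output to be the true one, so by \Cref{lem:coupling} and Le Cam's lemma one needs $n\ge(1-\eps)\exp(\Omega(\log(1+\tfrac{\eps}{1-\eps})/\delta)^2)$ samples. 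For fixed $\eps$ and $\delta\to0$ this far exceeds $O(\log k/\delta^2)$, and it happens for every $\eps$, not only $\eps\ge 1/2$ (already for small $\eps$ the median is biased by $\Theta(\eps)$). Your fallback to the estimator of \Cref{app:sample_ub} pays exactly this exponential cost, so it does not give the stated bound either.

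The missing observation is the regime in which the fact is used. \Cref{alg:estimation_algorithm_2} calls \textsc{TournamentImprove} with accuracy parameter $O(\sqrt{\log(1/(1-\eps))})$, the warm-start scale, not the target accuracy. In that regime your Route~1 works with the plain median. Along any direction the visible density $f/Z$ obeys $(1-\eps)\phi(x-\nu)\le f(x)\le\phi(x-\nu)$, so its median $m$ satisfies $1-\eps\le\Phi(m-\nu)/\Phi(\nu-m)\le1/(1-\eps)$, i.e.\ $|m-\nu|=O(\sqrt{\log(1/(1-\eps))})$. Repeating this computation with the DKW inequality (\Cref{fact:dkw}) shows that the empirical median of $O(\log k/\min(1,\delta)^2)$ visible samples is within $\delta/8$ of $\nu$ once $\delta$ is a large constant multiple of $\sqrt{\log(1/(1-\eps))}$. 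This is what the paper's one-line ``the median or trimmed mean suffices'' amounts to. As you suspected, for $\eps\ge 1/2$ it is the realizable sandwich, not Huber-robustness, that makes the median usable.

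Two further repairs are needed in the tournament analysis. First, your inequality $|u^\top b|\ge|u^\top a|+(\|b\|^2-\|a\|^2)/\|a-b\|$ is reversed: for $a=e_1$, $b=3e_1$ it reads $3\ge 5$. The right tool is that $a-b\parallel u$ forces $a$ and $b$ to have the same component orthogonal to $u$, so $(u^\top b)^2-(u^\top a)^2=\|b\|^2-\|a\|^2$. Dividing by $|u^\top a|+|u^\top b|\le\|a\|+\|b\|$ gives $|u^\top b|-|u^\top a|\ge\|b\|-\|a\|$ whenever $\|b\|\ge\|a\|$, so your gap claim holds, even with $\delta/2$ in place of $\delta/4$.

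Second, and more importantly, ``$\mu_{i^*}$ survives every comparison'' does not follow. You only showed that it beats candidates with $\|\mu_j-\mu\|_2>2\|a\|+\delta/2$. Comparisons among nearly equidistant candidates can come out cyclically, and the all-pairs ``disqualify the farther one'' rule may then leave no survivor at all. The fix is to add slack: in the pair $(i,j)$, disqualify $\mu_i$ only if $|u_{ij}^\top\mu_i-\hat\nu_{ij}|>|u_{ij}^\top\mu_j-\hat\nu_{ij}|+2\rho$, where $\rho$ is the one-dimensional error. By the identity above, $\|a\|\le\|\mu_j-\mu\|_2$ gives $|u_{ij}^\top a|\le|u_{ij}^\top(\mu_j-\mu)|$, so $\mu_{i^*}$ is never disqualified. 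Any survivor $\mu_o$, not having been disqualified by $\mu_{i^*}$, then satisfies $\|\mu_o-\mu\|_2\le\min_i\|\mu_i-\mu\|_2+4\rho$, which gives the claim with $\rho=\delta/8$.
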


    \subsection{Proof of Correctness}

In the remainder of this section we  prove \Cref{thm:spectral_alg}, with some details deferred to the Appendices.
    We first argue that the final output has error $\delta$. We then bound the complexity of the algorithm.

    \paragraph{Error analysis} 
    In the second line, we know that the list $L$ contains one estimate $\hat{\mu}_0$ with $\|\hat{\mu}_0-\mu\|_2 = O(\sqrt{\log(1/(1-\eps))})$. In what follows we will analyze the  for loop and show that the estimator produced in its last line has $\ell_2$-error at most $\delta$.

    Consider a $\hat{\mu}_0$ for which $\|\hat{\mu}_0-\mu\|_2 = O(\sqrt{\log(1/(1-\eps))})$. Because of the centering transformation,  we can equivalently analyze things as if there was no re-centering transformation but instead all samples come from the model with mean $\mu$ of bounded norm $\|\mu\|_2 = O(\sqrt{\log(1/(1-\eps))})$.

    First we will require the following concentration lemma, which we show in \Cref{app:hermite}.

    \begin{restatable}[Hermite tensor concentration]{lemma}{HERMITECONCENTRATION}\label{lem:hermite_conc}
        Let $\eta,\eps \in (0,1)$ be parameters, $C$ be a sufficiently large absolute constant and $\mu \in \R^d$ be a vector with $\|\mu\|_2 = O(\sqrt{\log(\frac{1}{1-\eps})})$. Let $\tilde P$ be an $\eps$-corrupted version of $\cN(\mu,I)$ (cf. \Cref{def:cont_model}) and let $P'$ denote the conditional distribution of $\tilde P$ on the non-missing samples.  Let $x_1,\ldots,x_n \sim P'$ be i.i.d.\ samples and define $\hat T := \frac{1}{n}\sum_{i=1}^n H_k(x_i)$, and $T:=\E_{x \sim P'}[H_k(x)]$, where $H_k(x)$ denotes the Hermite tensor from \Cref{def:Hermite-tensor}. If $n > C \frac{d^{3k}2^{O(k)}(k \log(\tfrac{1}{1-\eps}))^{k/2}}{(1-\eps)\eta^2 \tau}$, then with probability at least $1-\tau$ we have that $\left\| \hat{T} - T  \right\|_2 \leq \eta$.\looseness=-1
    \end{restatable}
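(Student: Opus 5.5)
The plan is a second-moment (Chebyshev) argument applied entrywise to the tensor. Since $\hat T - T$ is an average of i.i.d.\ mean-zero random tensors,
\[
\E\bigl[\|\hat T - T\|_2^2\bigr] = \frac{1}{n}\E_{x\sim P'}\bigl[\|H_k(x) - T\|_2^2\bigr] \leq \frac{1}{n}\E_{x\sim P'}\bigl[\|H_k(x)\|_2^2\bigr].
\]
Markov's inequality applied to $\|\hat T - T\|_2^2$ then gives $\Pr[\|\hat T - T\|_2 > \eta] \leq \E_{x\sim P'}[\|H_k(x)\|_2^2]/(n\eta^2)$. It therefore suffices to show
\[
\E_{x\sim P'}\bigl[\|H_k(x)\|_2^2\bigr] \leq \frac{d^{3k}2^{O(k)}(k\log\tfrac{1}{1-\eps})^{k/2}}{1-\eps},
\]
after which the stated choice of $n$ makes the failure probability at most $\tau$.

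The first step is to transfer from $P'$ to the clean Gaussian. By \Cref{def:cont_model}, the density of $P'$ is $f/\int f$ with $f \le p$ and $\int f \ge 1-\eps$. Hence for any nonnegative $g$ we have $\E_{P'}[g] \le \frac{1}{1-\eps}\E_{x\sim\cN(\mu,I)}[g(x)]$. This is the only place the contamination enters, and it accounts for the $1/(1-\eps)$ factor.

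The second step bounds $\E_{x\sim\cN(\mu,I)}[\|H_k(x)\|_2^2]$ by writing the norm as a sum over the $d^k$ multi-indices. Each entry of $H_k(x)$ is, up to normalization, a product of univariate normalized Hermite polynomials $h_{a_j}(x_j)$ with $\sum_j a_j = k$. Each such polynomial has coefficients bounded by $2^{O(k)}$ in absolute value, so every entry is a polynomial of degree $k$ in $x$ with at most $d^k$ monomials, each with coefficient $2^{O(k)}$. Squaring and taking expectations reduces the problem to bounding Gaussian moments $\E[|x_i|^{2k}]$ under $\cN(\mu,I)$. Writing $x = \mu + z$ gives
\[
\E[|\mu_i + z_i|^{2k}] \le 2^{2k}\bigl(|\mu_i|^{2k} + (2k)^{k}\bigr) \le 2^{O(k)}\bigl(\log\tfrac{1}{1-\eps}\bigr)^{k}k^{k},
\]
using $\|\mu\|_2 = O(\sqrt{\log(1/(1-\eps))})$. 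Mixed monomials are handled by Hölder's inequality, which reduces them to these single-coordinate moments. Collecting the $d^k$ entries, the $d^{k}$ monomials per entry, and the cross terms from squaring gives at most $d^{3k}$ terms, each of size $2^{O(k)}(k\log\tfrac{1}{1-\eps})^{k}$.

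The main obstacle is the exponent. The crude bound above gives $(k\log\tfrac{1}{1-\eps})^{k}$, while the statement has exponent $k/2$. To get $k/2$, I would instead use the Hermite structure: $\E_{\cN(\mu,I)}[\|H_k(x)\|_2^2]$ can be computed via the generating function or Mehler's formula, giving $\sum_{j\le k}\binom{k}{j}\|\mu\|_2^{2j}$-type terms times normalization constants. Alternatively, I would absorb the discrepancy into the $2^{O(k)}$ slack or into the generous $d^{3k}$ factor. If the exact exponent cannot be matched this way, the final sample size changes only in its $k^{O(k)}$ dimension-independent factor, which is harmless for \Cref{thm:spectral_alg}. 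So I would commit to the Chebyshev-plus-domination route and treat the moment bookkeeping as routine.
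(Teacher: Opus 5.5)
Your skeleton matches the paper's proof: a second-moment bound, the domination $\E_{P'}[g]\le\frac{1}{1-\eps}\E_{\cN(\mu,I)}[g]$ for $g\ge 0$, and a Gaussian moment estimate for $\|H_k(x)\|_2^2$. The paper applies Chebyshev entrywise at scale $\eta/d^{k/2}$ and union-bounds over the $d^k$ entries, which is where its $d^{3k}$ comes from. Your Markov bound on $\|\hat T-T\|_2^2$ is cleaner and loses less in $d$.

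The real issue is the exponent you flag, and none of your patches closes it. Since $k$ is unbounded, $(k\log\frac{1}{1-\eps})^{k/2}$ is not $2^{O(k)}$. Powers of $d$ give no slack when $d=1$. The exact (Mehler-type) computation uses $\|H_k(x)\|_2^2=\sum_{|\alpha|=k}h_\alpha(x)^2$ and \Cref{fact:univariate-shift} in each coordinate, and gives
\[
\E_{x\sim\cN(\mu,I)}\bigl[\|H_k(x)\|_2^2\bigr]=\sum_{|\alpha|=k}\,\prod_{i=1}^d\,\sum_{s=0}^{\alpha_i}\binom{\alpha_i}{s}\frac{\mu_i^{2s}}{s!}.
\]
At $\mu=0$ this equals $\binom{d+k-1}{k}\ge 1$, so it is not $O((k\log\frac{1}{1-\eps})^{k/2})$ as $\eps\to 0$. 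For $d=1$ it is at least $\|\mu\|_2^{2k}/k!$, which is too large once $\|\mu\|_2^2\asymp\log\frac{1}{1-\eps}\gg k^3$.

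In fact no argument can give the stated bound, because it is false for small $\eps$. Take $d=1$, $k=2$, $\mu=0$ and an adversary that deletes nothing, so $P'=\cN(0,1)$, $T=0$ and $\hat T=\frac1n\sum_i h_2(x_i)$. Fix $\tau=1/2$ and $\eta$ with $\Pr_{x\sim\cN(0,1)}[|h_2(x)|\le\eta]<1/2$. As $\eps\to 0$ the threshold on $n$ tends to $0$, so $n=1$ is admissible, and the conclusion fails.

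The paper's proof reaches the exponent $k/2$ through two slips.
\begin{enumerate}
\item After invoking \Cref{cl:bound_hermite_norm} it writes $\E\|H_k(x)\|_2^2\le d^{k/2}2^{O(k)}(1+\E\|x\|_2^k)$. The correctly squared bound is $d^k2^{O(k)}(1+\E\|x\|_2^{2k})$, which yields exactly your exponent $k$.
\item It bounds the sum $(kd)^{k/2}+2^{O(k)}\log(\frac{1}{1-\eps})^{k/2}$ by the product $2^{O(k)}(kd\log\frac{1}{1-\eps})^{k/2}$. That requires $\log\frac{1}{1-\eps}\gtrsim 1$.
\end{enumerate}
Your step $2^{2k}(|\mu_i|^{2k}+(2k)^k)\le 2^{O(k)}k^k(\log\frac{1}{1-\eps})^k$ makes the same second slip; it should use $\max\{1,\log\frac{1}{1-\eps}\}$.

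So commit to your fallback and prove the corrected lemma. The display gives $\E_{\cN(\mu,I)}\|H_k(x)\|_2^2\le\binom{d+k-1}{k}(1+\|\mu\|_2^2)^k$. Hence by your Markov step, $n\ge\binom{d+k-1}{k}\bigl(1+O(\log\frac{1}{1-\eps})\bigr)^k/((1-\eps)\eta^2\tau)$ suffices. As you say, this does not hurt \Cref{thm:spectral_alg}: there $k\ge C\log^2\frac{1}{1-\eps}$, so the corrected requirement is still $(kd)^{O(k)}/\eps^2$.
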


    We will use $\eta := \frac{1}{C} \frac{\eps}{(k+1)^{k/2}}$ as in the pseudocode. The number of samples in \Cref{thm:spectral_alg} has been chosen to allow a union bound over $t=0,\ldots,k$ and all iterations of the for loop of the algorithm so that we have
    \begin{align}\label{eq:good_event}
        \left\| \hat{T}_t - T_t  \right\|_2 \leq \eta  \quad \forall t=0,\ldots,k
    \end{align}
    with probability at least $0.99$. Using the formula for the sample complexity of \Cref{lem:hermite_conc} and simplifying it bit, it can be seen that this concentration event can be achieved with ${(kd)^{O(k)}}{\eps^{-2}}$ samples.\looseness=-1

    We now focus on showing that the estimate $\widehat{\mu}$ towards the end of the for loop has error $\delta$. We will first argue that the mean has a small component of size at most $\delta/2$ in the subspace $V^{\perp}$. Since we estimate the mean up to error $\delta/2$ on the subspace $V$, this will immediately mean that the total error is at most $\delta$.

    \begin{claim}[Mean certification]\label{cl:mean_certification}
        Consider the notation of \Cref{alg:estimation_algorithm_2} and that the event of \eqref{eq:good_event} holds. The subspace $V$ mentioned in \Cref{alg:estimation_algorithm_2} satisfies $v \in V^{\perp} \Rightarrow |v^\top \mu| \leq \delta/2$.
    \end{claim}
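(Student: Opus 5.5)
We argue by contraposition: take a unit vector $v \in V^\perp$ with $|v^\top \mu| > \delta/2$, and derive a contradiction with the good event \eqref{eq:good_event} and the definition of $V$. Since $V^\perp$ is a subspace we may assume $\|v\|_2=1$, and after flipping the sign of $v$ we may assume $v^\top\mu > \delta/2$.

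\textbf{Step 1: reduce to one dimension.} Project onto $v$. The one-dimensional distribution of $v^\top x$, with $x$ drawn from the corrupted $\cN(\mu,I)$, is an $\eps$-corrupted version of $\cN(v^\top\mu,1)$ in the sense of \Cref{def:cont_model}. Indeed, the missingness probability $1-f(x)/p(x)\in[0,\eps]$ averaged over the fibers $\{x: v^\top x = s\}$ still lies in $[0,\eps]$. Hence the conditional distribution of $v^\top x$ on visible samples is the $P'$ of \Cref{cor:non-matching}, with $\delta$ replaced by $\delta' := v^\top\mu>\delta/2$.

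\textbf{Step 2: apply the structural corollary.} The threshold for $k$ in \Cref{cor:non-matching} only decreases as $\delta'$ grows, so it holds for $\delta'$ as soon as it holds for $\delta/2$. We need $k \geq 3(\tfrac{2}{\delta}\log(1+\tfrac{2\eps}{1-\eps}))^2$. This is guaranteed by the choice $k=\lceil C(\frac1\delta\log(1+\frac{\eps}{1-\eps}))^2\rceil$ for large $C$, since $\log(1+\tfrac{2\eps}{1-\eps})\le 2\log(1+\tfrac{\eps}{1-\eps})$. We also need $k$ even; we enforce this by taking $k$ or $k-1$, which is why the loop ranges over all $t\le k$. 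If $\delta'$ exceeds the corollary's range, we instead use that $\delta'=O(\sqrt{\log(1/(1-\eps))})$ and the same moment argument, or monotonicity. We then obtain some even $t\le k$ with $|\E_{P'}[h_t(v^\top x)]|>\eps/(t+1)^{t/2}\ge \eps/(k+1)^{k/2}=C\eta$.

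\textbf{Step 3: transfer to the tensor.} Use the Hermite-tensor identity $\langle H_t(x), v^{\otimes t}\rangle = h_t(v^\top x)$ for unit $v$ (\Cref{def:Hermite-tensor}). This gives $|\langle T_t, v^{\otimes t}\rangle|> C\eta$. By \eqref{eq:good_event}, $|\langle \hat T_t, v^{\otimes t}\rangle| \ge C\eta-\eta$.

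Now flatten. We have $\langle \hat T_t, v^{\otimes t}\rangle = v^\top M(\hat T_t) u$ with $u = v^{\otimes(t-1)}$ a unit vector. Since $v\perp V_t$, the vector $v$ is orthogonal to every left singular direction of $M(\hat T_t)$ with singular value above $\eta$. Hence $\|v^\top M(\hat T_t)\|_2\le \eta$, which gives $|\langle \hat T_t, v^{\otimes t}\rangle|\le\eta$. This contradicts the bound above for $C>2$.

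\textbf{Main obstacle.} The delicate points are bookkeeping ones: checking that the singular vectors defining $V_t$ live in $\R^d$ (the left side of the $d\times d^{t-1}$ flattening) so that the orthogonality argument applies; verifying that projection preserves the realizable model; and ensuring the parameter regime of \Cref{cor:non-matching}, namely the parity of $t$ and the allowed range of $\delta'$, is met for every relevant $\delta'$. I expect the range and parity issue in Step 2 to need the most care.
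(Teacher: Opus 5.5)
Your proof is correct and follows the same route as the paper's: argue by contradiction, apply \Cref{cor:non-matching} to the projection onto $v$, transfer the bound to $T_t$ via $\langle v^{\otimes t}, H_t(x)\rangle = h_t(v^\top x)$, then to $\hat T_t$ via \eqref{eq:good_event}, and contradict the singular-value threshold defining $V_t$. Your version is more careful than the paper's terse one. You use the factor $C$ in $\eta$ to absorb the $\eta$ concentration error, and you show explicitly that $v \perp V_t$ forces $|\langle v^{\otimes t}, \hat T_t\rangle| \le \eta$ via the left singular vectors, where the paper only asserts ``$v$ belongs to $V_t$''. You also flag parity and large $v^\top\mu$, which monotonicity of $\E[(z+s)^t]$ in $s \ge 0$ for even $t$ indeed resolves.
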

    \begin{proof}
    Let $\tilde P$ denote the $\eps$-corrupted version of $\cN(\mu,I)$ according to \Cref{def:cont_model} and let $P'$ denote the conditional distribution on the non-missing samples.
        We prove the claim by contradiction. Suppose that $|v^\top \mu| > \delta/2$. Then by \Cref{cor:non-matching} there exists a $t \in \{0,\ldots,k\}$ with $| \E_{x \sim P'}[h_t(v^{\top}x)] | > \eta$.
        Using the Hermite tensor property (see \Cref{app:hermite} for background on Hermite tensors) $\E_{x \sim P'}[h_t(v^{\top}x)] = \langle v^{\otimes t},  \E_{x \sim P'}[H_t(x)]\rangle$  this means that $|\langle v^{\otimes t}, T_t  \rangle | > \eta$. 
        By the concentration of the event in \eqref{eq:good_event}, we have that $|\langle v^{\otimes t}, \hat T_t  \rangle | > \eta$.
        Therefore, $v$ belongs to $V_t$, which contradicts our assumption $v \in V^{\perp}$.
    \end{proof}

    \Cref{cl:mean_certification} shows that $ \| \proj_{V^{\perp}}(\mu) \|_2 \leq \delta/2$. For the orthogonal subspace our algorithm computes a $\hat{\mu} \in V$ with  $\|\hat \mu + \hat{\mu}_0 - \proj_{V }(\mu) \|_2 \leq \delta/2$ (because of the guarantee stated in \Cref{thm:brute_force}). Thus by the Pythagorean theorem, our estimator $\widehat{\mu}$ towards the end of the for loop has error at most $\delta$. This concludes the error analysis part of the proof. In the remainder of this section we analyze the complexity of the algorithm.

    \paragraph{Complexity of the list-decoding and tournament subroutines} The sample complexity of this part is given by \Cref{fact:list-decoding} and it can be checked that it is smaller than the $(kd)^{O(k)}$ that is mentioned in the statement of \Cref{thm:spectral_alg}. The runtime of list-decoding is $\poly(n)$. It can also be checked that the sample complexity and runtime of the tournament subroutine step, stated in \Cref{fact:pruning} are smaller than what is stated in \Cref{thm:spectral_alg}.

    \paragraph{Complexity of a single iteration of the outer loop} We will bound the runtime of the main for loop, conditioned on the event that $\hat{\mu}_0$ is within $O(\sqrt{\log(1/(1-\eps))})$ of the true mean. 
    The runtime of every step except the application of the brute force algorithm is polynomial on all parameters $n,d$ and $1/(1-\eps)$, so it remains to analyze the runtime of the brute force algorithm. By \Cref{thm:brute_force}, the runtime is $2^{O(\dim(V))}\poly(n,d)$ thus we need a bound on $\dim(V)$. 
    
    Suppose that we have shown a bound $\| \hat{T}_t \|_2 \leq \gamma$. Having that bound will allow us to argue as follows: If $\sigma_i$ denote the singular values of $M(T_t)$, then $\sqrt{\sum_{i=1}^d \sigma_i^2} = \|M(T_t)\|_\fr = \| T_t \|_2 \leq \gamma$ which means that for the set $\cI_t = \{i: \sigma_i \geq \eta \}$ it holds $|\cI_t| \leq \gamma^2/\eta^2$. This means that $\dim(V_k) \leq \gamma^2/\eta^2$. Thus $\dim(V) \leq \sum_{t=0}^k \dim(V_t) \leq (k+1) \gamma^2/\eta^2$.

    We will show in \Cref{lem:tensor_bound} the $\ell_2$-norm bound $\| \E[\hat{T}_t] \|_2 \leq e^{\tilde{O}(k)}$. Under the event \eqref{eq:good_event} that we conditioned on in the beginning, this will also imply that $\|  \hat{T}_t  \|_2 \leq e^{\tilde{O}(k)}$. Having this and plugging $\gamma = e^{\tilde{O}(k)}$, $\eta = O(\eps/(k+1)^{k/2})$ to the bounds of the previous paragraph we will finally have the bounds below which conclude that the runtime of the algorithm is $\exp(e^{\tilde O(k)}/\eps^2)\poly(n,d)$. 
    \begin{align*}
        \dim(V) \leq (k+1) \frac{\gamma^2}{\eta^2} 
        \leq k \frac{e^{\tilde{O}(k)}}{\eta^2} 
        \leq  \frac{e^{\tilde{O}(k)}}{\eta^2}
        \leq \frac{e^{\tilde{O}(k)}(k+1)^{k}}{\eps^2}
        \leq \frac{k^{\tilde O(k)}}{\eps^2}
        \leq \frac{e^{\tilde{O}(k)}}{\eps^2},
    \end{align*}

    \begin{restatable}[Moment tensor norm bound]{lemma}{MOMENTSIZEBOUND}\label{lem:tensor_bound}
    Let $\tilde P$ be the $\eps$-corrupted version of $\cN(\mu,I)$ mentioned in the statement of \Cref{thm:spectral_alg} and $P'$ denote the conditional distribution on the non-missing samples. Assume that $\|\mu\|_2=O(\sqrt{\log(1/(1-\eps))})$.
        Let $T_t = \E_{x \sim P'}[H_t(x)]$ denote the tensors used in \Cref{alg:estimation_algorithm_2}. We have that  
        \begin{align}\label{eq:complicated_bound}
            \|T_t\|_2  \leq \frac{1}{1-\eps} O\left( \log(1/(1-\eps))  \right)^{t/2} + \frac{1}{1-\eps} \exp\left( O\left( t \log(1/(1-\eps)) \right)  \right).
        \end{align}
    \end{restatable}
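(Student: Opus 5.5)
The plan is to reduce the bound on $\|T_t\|_2$ to a bound on one-dimensional projections, using the fact that the Frobenius norm of a symmetric tensor is controlled by testing it against a well-chosen object. Write $T_t = \E_{x\sim P'}[H_t(x)]$ with $P'$ having density $f(x)/Z$, where $Z=\int f \ge 1-\eps$ and $f(x)\le \phi(x-\mu)$. Then
\[
\|T_t\|_2 = \sup_{\|A\|_2=1}\langle A, T_t\rangle = \sup_{\|A\|_2=1}\E_{x\sim P'}[\langle A, H_t(x)\rangle].
\]
For fixed symmetric unit-norm $A$, the function $q_A(x):=\langle A,H_t(x)\rangle$ is a degree-$t$ polynomial with $\E_{\cN(0,I)}[q_A^2]=\|A\|_2^2=1$ (after the paper's normalization of Hermite tensors, so that $\E_{\cN(0,I)}[H_t\otimes H_t]$ acts as the identity on symmetric tensors).

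First I would bound $\E_{P'}[q_A] \le \frac{1}{1-\eps}\E_{x\sim \cN(\mu,I)}[|q_A(x)|]$, using $f\le \phi(\cdot-\mu)$ and $Z\ge 1-\eps$. Next I would split $\cN(\mu,I)$ against $\cN(0,I)$. The cleanest tool is a Cauchy--Schwarz change of measure:
\[
\E_{\cN(\mu,I)}|q_A| = \E_{\cN(0,I)}\Big[|q_A|\,\tfrac{\phi(x-\mu)}{\phi(x)}\Big] \le \sqrt{\E_{\cN(0,I)}[q_A^2]}\,\sqrt{\E_{\cN(0,I)}[e^{2\mu^\top x-\|\mu\|^2}]} = e^{\|\mu\|_2^2/2}.
\]
Since $\|\mu\|_2^2=O(\log(1/(1-\eps)))$, this yields $\|T_t\|_2\le \frac{1}{1-\eps}(1-\eps)^{-O(1)}$, which is already of the form of the second term in \eqref{eq:complicated_bound}, with room to spare.

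If the intended route is instead the two-term shape, I would do an alternative expansion. Shift by $\mu$: $H_t(x+\mu)$ expands, via the Hermite addition formula, as $\sum_j \binom{t}{j}^{1/2}$-weighted terms $H_{t-j}(x)\otimes \mu^{\otimes j}$. Under $\cN(0,I)$ only the $j=t$ term survives in expectation, giving the $\|\mu\|^t = O(\log(1/(1-\eps)))^{t/2}$ term. Hypercontractivity for degree-$t$ polynomials then handles the remaining deviation, costing $e^{O(t)}$ factors, which get absorbed into $\exp(O(t\log(1/(1-\eps))))$.

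The main obstacle is the normalization convention for $H_t$ from \Cref{def:Hermite-tensor}: whether $\langle A,H_t\rangle$ has unit $L^2(\cN(0,I))$ norm for all unit $A$, or whether non-symmetric $A$ and $t!$-type factors enter. I would first check that the supremum can be restricted to symmetric $A$, since $T_t$ is symmetric, and that orthonormality gives $\E[q_A^2]=\|A\|_2^2$ up to at most a $t!$ factor. If such a factor does appear, it would be absorbed into $\exp(O(t\log(1/(1-\eps))))$ only when $\eps$ is bounded away from $0$. So I would verify the normalization carefully before committing to the Cauchy--Schwarz route.
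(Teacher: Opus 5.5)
Your proposal is correct, and your main Cauchy--Schwarz route is genuinely different from, and cleaner than, the paper's.

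The normalization worry resolves the way you hoped. With \Cref{def:Hermite-tensor}, $\E_{z\sim\cN(0,I)}[\langle A,H_t(z)\rangle^2]=\|A\|_2^2$ for symmetric $A$, consistent with $h_t(v^\top x)=\langle v^{\otimes t},H_t(x)\rangle$; the paper uses exactly this fact, so no $t!$ appears.

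\textbf{The paper's route.} It writes $\E[p\,\1(\text{not missing})]=\E_{\cN(\mu,I)}[p]-\E[p\,\1(\text{missing})]$. It evaluates the first term exactly via $\E_{\cN(\mu,I)}[H_t]=\mu^{\otimes t}/\sqrt{t!}$ (\Cref{fact:exp-h-tensor}). It bounds the second by Cauchy--Schwarz \emph{under} $\cN(\mu,I)$, which forces it to control $\E_{\cN(\mu,I)}[p^2]\le e^{t\|\mu\|_2^2}$ via the Gaussian shift bound (\Cref{fact:multivariate-shift}). That is the source of the $\exp(O(t\log(1/(1-\eps))))$ term.

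\textbf{Your route.} You use the pointwise domination $f\le\phi(\cdot-\mu)$ against $|q_A|$, which also avoids the sign issue when dividing by $\Pr[\text{not missing}]$. You then apply Cauchy--Schwarz under $\cN(0,I)$, loading all the $\mu$-dependence onto the likelihood ratio, whose second moment is $1+\chi^2(\cN(\mu,I),\cN(0,I))=e^{\|\mu\|_2^2}$. Because you never square $q_A$ under the shifted measure, you get $\|T_t\|_2\le e^{\|\mu\|_2^2/2}/(1-\eps)=(1-\eps)^{-O(1)}$ uniformly in $t$. For $t\ge1$ this lies within the stated bound, and $T_0=1$ trivially. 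You need neither the shift lemma nor the Hermite mean formula.

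\textbf{Why the uniformity matters downstream.} Since $k\geq C\log^2(1/(1-\eps))$, your bound is $e^{O(\sqrt{k})}$. The paper's second term $(1-\eps)^{-O(t)}$, with $t$ up to $k$, is only of the form $e^{O(k^{3/2})}$, not $e^{\tilde O(k)}$, when $\delta$ is a constant and $\eps\to1$. So the paper's later simplification $\|T_t\|_2\le e^{\tilde O(k)}$ is really justified by a $t$-uniform bound like yours.

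Your second, hypercontractivity-based alternative is unnecessary. As sketched, it also does not explain how the missingness is handled, so you can drop it.
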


    We prove this lemma below. Note that the RHS in inequality \eqref{eq:complicated_bound} can be further bounded from above by the simpler expression $e^{\tilde{O}(k)}$ by using  $t \leq k$, $k = \left( \tfrac{1}{\delta}\log(1+\eps/(1-\eps)) \right)^2$, $\delta \leq \eps$.

    \begin{proof}Using the variational characterization of the $\ell_2$-norm, we have

        \[
        \|T_t\|_2
        = \sup_{\|A\|_2 = 1} \langle A, T_t \rangle
        = \sup_{\|A\|_2 = 1} \, \E_{x \sim P'}[\langle A, H_t(x) \rangle ].
        \]
        Since $H_t(x)$ is a symmetric $t$-tensor and depends only on the symmetrization of $A$, we may restrict without loss of generality to symmetric $A$.  
        For such $A$, the function $p_A(x) := \langle A, H_t(x)\rangle$
        is a degree-$t$ polynomial satisfying $\E_{z \sim \mathcal N(0,I)}[p_A(z)^2] = \|A\|_2^2 = 1$.
        Moreover, every degree-$t$ polynomial that is orthonormal with respect to the Gaussian measure can be written in this way from a unique symmetric tensor $A$.  
        Therefore,
        \begin{align}
        \|T_t\|_2
        = \sup_{\substack{p \text{ degree } t \\ \mathbb{E}_{z\sim\mathcal N(0,I)}[p(z)^2]=1}}
        \E_{x\sim\tilde P}[p(x)].
        \label{eq:var-cahr}
        \end{align}

        Thus we need to show that the $\E_{x \sim P'}[p(x)]$ is bounded for every unit-norm polynomial $p$ of degree $t$. To this end, recall the definition of the distribution $P'$ from \Cref{def:cont_model_alt} 
        \begin{align*}
            \E_{x \sim P'}[p(x)] 
            &=  \E_{x \sim \cN(\mu,I)}[p(x) \,|\, \text{$x$ not missing}]
            = \frac{\E[p(x) \1(\text{$x$ not missing})]}{\Pr_{x \sim \cN(\mu,I)}[ \text{$x$ not missing} ]}\\
            &\leq \frac{1}{1-\eps} \E_{x \sim \cN(\mu,I)}[p(x) \1(\text{$x$ not missing})] \\
            &\leq \frac{1}{1-\eps} \E_{x \sim \cN(\mu,I)}[p(x)] - \frac{1}{1-\eps}\E_{x \sim \cN(\mu,I)}[p(x)\1(\text{$x$ missing})]\\
            &\leq \frac{1}{1-\eps}\left(\left| \E_{x \sim \cN(\mu,I)}[p(x)] \right| + \left|\E_{x \sim \cN(\mu,I)}[p(x)\1(\text{$x$ missing})] \right| \right).
        \end{align*}

        \noindent We will analyze each term separately. For the first term we have the following:
        \begin{align*}
            \E_{x \sim \cN(\mu,I)}[p(x)] &\leq  \left\|\E_{x \sim \cN(\mu,I)}[H_t(x)] \right\|_2 \tag{by the variational characterization of $\ell_2$-norm} \\
            &= \left\| \mu^{\otimes t} \right\|_2/\sqrt{t!} \leq \| \mu \|_2^t/\sqrt{t!} \\
            &= O\left(\log(1/(1-\eps))\right)^{t/2} \tag{by \Cref{fact:exp-h-tensor} and $\| \mu \|_2 = O(\sqrt{\log(1/(1-\eps)})$}
        \end{align*}

        For the second term, we have the following:
        \begin{align*}
            \left|\E_{x \sim \cN(\mu,I)}[p(x)\1(\text{$x$ missing})] \right|
            &\leq \sqrt{\E_{x \sim \cN(\mu,I)}[p^2(x)]} \sqrt{\Pr[\text{$x$ missing}]} \\
            &\leq \sqrt{\E_{x \sim \cN(\mu,I)}[p^2(x)]} \eps \;.
        \end{align*}
        It remains to bound $\E_{x \sim \cN(\mu,I)}[p^2(x)]$. 
        We know that $\E_{x \sim \cN(0,I)}[p^2(x)]=1$, however we need to bound the expectation over a translated Gaussian. By \Cref{fact:multivariate-shift}, we have that $\E_{x \sim \cN(\mu,I)}[p^2(x)] \leq e^{t \|\mu\|^2} \leq e^{O(t \log(1/(1-\eps))})$ (where we used that $\| \mu \|_2 = O(\sqrt{\log(1/(1-\eps)})$). Overall, by putting everything together, we have that 
        \begin{align*}
            \E_{x \sim \bar P}[p(x)]  \leq \frac{1}{1-\eps} O\left( \log(1/(1-\eps))  \right)^{t/2} + \frac{1}{1-\eps} \exp\left( O\left( t \log(1/(1-\eps)) \right)  \right).
        \end{align*}

    \end{proof}

    \section{Conclusions}

    This work studied and essentially characterized the complexity of mean estimation of a spherical Gaussian in the realizable contamination model. Our results suggest several natural questions for future work. A broad direction is to understand the tradeoffs between sample and computational complexity for other statistical tasks in the presence of realizable contamination. A second direction is to investigate broader distributional assumptions. While the Gaussian setting serves as the canonical starting point, a natural next step is to consider subgaussian distributions. However, in full generality, consistency is known to be impossible in this model \cite{MVB+2024}. This raises the question of identifying structured subclasses of subgaussian distributions, or alternative distribution families, for which consistent estimation remains achievable.

    \section*{Acknowledgments}
    We thank Chao Gao for bringing the realizable contamination model to our attention.

\newpage

\bibliographystyle{alpha}
\bibliography{refs}

\newpage 
\appendix
\crefalias{section}{appendix} %

\section*{Appendix}

\section{Omitted Details from \Cref{sec:intro}} 

\subsection{Additional Discussion on Related Work}\label{app:intro}

\paragraph{Comparison of \Cref{def:cont_model} with the definition in \cite{MVB+2024}}
As described in \Cref{sec:intro}, the contamination model (\Cref{def:cont_model}) studied in this paper was proposed in the recent work of \cite{MVB+2024} as an attempt to formalize missingness mechanisms that are non-MCAR yet milder than MAR or MNAR. At first glance, the model appears to be defined slightly differently in eq. (6) of \cite{MVB+2024}. 
The $\eps$-corrupted version $\tilde P$ of the original disrtibution $P$ there is defined as any mixture of the form (as a note regarding notation in that paper, we are using $q=1$ in the notation of that paper, i.e., there is no MCAR component in eq. (6))
\begin{align}\label{eq:model_original}
    (1-\eps) P + \eps Q,
\end{align}
where $Q$ is any MNAR version of $P$ that the adversary can choose.

However, \cite[Proposition 2]{MVB+2024} provides a characterization that establishes its equivalence with the definition used in this paper. As explained following Proposition 2, if samples are interpreted as being generated by first drawing a value and then applying a missingness mechanism, and if $h(x)$ denotes the probability that a sample is missing conditional on the original value being $x$, then Proposition 2 shows that the realizable $\eps$-contamination model is equivalent to the condition $1-\eps \le h(x) \le 1$. In the language of our paper, this characterization leads to the alternative definition of the realizable $\eps$-contamination model stated in \Cref{def:cont_model_alt} below.

\begin{definition}[Contamination model; Alternate definition]\label{def:cont_model_alt}
    Let $P$ be a distribution on a domain $\cX$ with pdf $p:\cX\to \R_+$. An $\eps$-corrupted version $\tilde P$ of $P$ is any distribution that can be obtained as follows: First, an adversary chooses a function $f:\cX\to \R_+$ with $(1-\eps)p(x) \leq f(x) \leq p(x)$. $\tilde P$ is then defined to be the distribution whose samples are generated as follows:
    \begin{enumerate}
        \item Draw $x$ from $P$.
        \item With probability $1 - f(x)/p(x)$, replace $x$ by a special symbol $\perp$.
    \end{enumerate}
    
\end{definition}

\noindent As shown below, it is then straightforward to check that \Cref{def:cont_model_alt} is equivalent to \Cref{def:cont_model}.
\begin{claim}
    \Cref{def:cont_model} and \Cref{def:cont_model_alt} are equivalent.
\end{claim}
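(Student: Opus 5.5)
To show the two definitions are equivalent, I will check that for every admissible function $f$ (i.e., $(1-\eps)p \le f \le p$) they produce the same law on $\cX \cup \{\perp\}$. The admissibility condition is identical in the two definitions. So it suffices to compare, for a fixed $f$, the probability of $\perp$ and the sub-density on $\cX$ under each definition.

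\emph{Under \Cref{def:cont_model_alt}.} A point $x\sim P$ survives with probability $f(x)/p(x)$, which lies in $[1-\eps,1]\subseteq[0,1]$ and so is a valid probability. The joint density of (draw $x$, not replaced by $\perp$) is therefore $p(x)\cdot f(x)/p(x)=f(x)$. On the set $\{p=0\}$ we also have $f=0$ by the sandwich condition, so the ratio can be defined arbitrarily there without affecting anything. Integrating, the probability of outputting a non-$\perp$ point is $\int_\cX f$, and the probability of $\perp$ is $1-\int_\cX f$.

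\emph{Under \Cref{def:cont_model}.} With probability $Z:=\int_\cX f$ the sample is drawn from the density $f/Z$, so the sub-density on $\cX$ is again $Z\cdot f(x)/Z=f(x)$. The probability of $\perp$ is $1-Z$. Here $Z\ge 1-\eps>0$, so the normalization is well defined.

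Since the sub-densities on $\cX$ and the atoms at $\perp$ agree, the two distributions coincide for each $f$. Both the set of admissible $f$ and the map $f\mapsto\tilde P$ are then the same in both definitions, which gives the claim. There is no real obstacle; the only care needed is the division by $p(x)$ where $p$ vanishes and checking that $f/p$ is a valid probability, both of which follow from the sandwich inequality.
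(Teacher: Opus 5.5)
Your proposal is correct and follows essentially the same route as the paper. Both show that under \Cref{def:cont_model_alt} the non-$\perp$ probability is $\int_\cX f$ and the visible sample has law $f/\int_\cX f$: the paper gets the conditional density via Bayes' rule, while you use the equivalent sub-density $f$, and your checks that $f/p\in[1-\eps,1]$, that $\{p=0\}$ is harmless, and that $\int_\cX f\ge 1-\eps>0$ are small rigor additions the paper leaves implicit.
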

\begin{proof}
Let us denote by $X$ the initial value of the sample before the missingness pattern is applied and by $Z$ the sample after its application.
    In \Cref{def:cont_model_alt} we have 
    \begin{align*}
        \Pr[Z \neq \perp] 
        = \int_{\cX}  \Pr[Z \neq \perp | X=x]\Pr[X=x] \d x =\int_{\cX}  \frac{f(x)}{p(x)}p(x)  \d x = \int_{\cX} f(x) \d x,
    \end{align*}
    and we also have
    \begin{align*}
        \Pr[X=x | Z \neq \perp] 
        &= \frac{\Pr[Z \neq \perp | X=x]\Pr[X=x]}{\Pr[Z \neq \perp]} = \frac{\left(1 - \Pr[Z = \perp | X= x] \right) p(x)}{\int_{\cX} f(x) \d x} \\
        &= \frac{\frac{f(x)}{p(x)}p(x)}{\int_{\cX} f(x) \d x} = \frac{f(x)}{\int_{\cX} f(x) \d x} ,
    \end{align*}
    which agrees with \Cref{def:cont_model}.
    
\end{proof}

\paragraph{Robust Statistics}
The field of robust statistics was initiated in the 1960s through the seminal works of Huber and Tukey \cite{Tuk60,Hub64}, with the goal of developing estimators that are robust to data contamination. In this setting, data corruptions are formalized by allowing a small fraction $\eps < 1/2$ of the samples to come from an arbitrary distribution. While early work in the 1960s studied the information-theoretic aspects of one-dimensional inference in this model (including optimal error rates and sample complexity), extensions of these algorithms to higher dimensions required exponential time. It was not until 2016 that the first polynomial-time algorithms were obtained \cite{LRV2016,DKK+19}. This led to a revitalization of the field, with improved algorithms for a variety of problems such as mean estimation \cite{KS17,diakonikolas2018robustly,diakonikolas2022streaming} and linear regression \cite{KKM2018,DKS2019,PJL2024,CAT+2020}.

Apart from the fact that robust statistics considers data corruptions rather than missingness, the differences with the current work are as follows:

\noindent\textbf{(i)} Due to the arbitrary nature of outliers, identifiability is only possible when the corruption fraction satisfies $\eps < 1/2$. If a majority of the samples are corrupted, the dataset may consist of two equally sized subsets corresponding to different underlying distributions, in which case it is impossible to determine which subset contains the inliers. The robust statistics literature has nevertheless considered the regime $\eps > 1/2$. Since identifiability is impossible in this case, the goal shifts to outputting a list of candidate solutions with the guarantee that at least one is close to the ground truth. Algorithms of this type are known as \emph{list-decoding} algorithms \cite{balcan2008discriminative,charikar2017learning}, and such algorithms have been developed for several tasks, including mean estimation \cite{charikar2017learning,DKS18,diakonikolas2022list} and linear regression \cite{karmalkar2019list,raghavendra2020list}.

\noindent\textbf{(ii)} Even in the regime $\eps < 1/2$, where identifiability is possible, consistency—i.e., the property that the estimation error vanishes as the number of samples tends to infinity—is still unattainable in robust statistics. For example, for Gaussian mean estimation with an $\eps$ fraction of arbitrary corruptions, the optimal error is $\Theta(\eps)$, regardless of the sample size (and analogous lower bounds hold for other distributions, with different dependencies on $\eps$). In contrast, \Cref{def:cont_model} allows for consistency due to the additional structure imposed on the missingness pattern.

\paragraph{Truncated Statistics} 
Although it is a form of Missing Not At Random, this large subfield of statistics—tracing back to Galton, Pearson, and Lee \cite{Gal1897, Pea1902, PL1908}—developed largely orthogonally to the rest of the missing data literature. Truncated statistics concerns scenarios in which there is a truncation set (which may be known or unknown to the learning algorithm), and only samples that fall within this set are observed. Despite early work on this problem, efficient algorithms for fundamental tasks were obtained only in the last decade, including Gaussian mean estimation \cite{DGTZ2018,KTZ2019} and linear regression \cite{DGTZ2019,DRZ2020,DSYZ2021}. 

More concretely, the notion of missingness used in truncated statistics is defined as follows (we state it for identity-covariance Gaussians to match the inlier distribution considered in this paper). Samples are drawn from $\mathcal{N}(\mu, I)$ but are revealed only if they fall in some subset $S \subseteq \mathbb{R}^d$ whose probability mass is assumed to be lower bounded by a parameter $\alpha > 0$; otherwise, the samples are hidden (e.g., represented by a special symbol $\perp$). More generally, the literature also considers a setting in which hidden samples are completely unobserved, so the algorithm does not know the ratio of missing to visible samples.

At a high level, this can be viewed as a special case of the $\eps$-realizable contamination model considered in this paper (\Cref{def:cont_model}) with $\eps = 1$, but there are important differences. First, there is no analogue in \Cref{def:cont_model} of the requirement that the truncation set have probability mass bounded away from zero. As a result, the adversary in \Cref{def:cont_model} could make all samples missing, which is the main reason the problem is unsolvable when $\eps = 1$ (the sample complexity in \Cref{eq:sample_compl} diverges as $\eps \to 1$).

Second, even when such a lower-bound condition is imposed, additional nuances arise in the truncated statistics literature depending on whether the truncation set $S$ is known to the algorithm. If the set is unknown, mean estimation remains information-theoretically impossible to arbitrary accuracy \cite{DGTZ2018}. Estimation becomes possible only if (i) the algorithm has oracle access to $S$ \cite{DGTZ2018}, or (ii) the set has bounded complexity, for example bounded VC dimension or bounded Gaussian surface area \cite{KTZ2019}. In the latter case, there is an information–computation trade-off, providing evidence that polynomial-time algorithms often require more samples than the information-theoretic optimum \cite{DKPZ2024}.

\section{Omitted Details from \Cref{sec:prelims}} 

\subsection{Useful Facts} \label{app:useful_facts}

\begin{fact}[Gaussian tail bound]
Let $Z \sim N(0,1)$. Then for all $x > 0$,
\begin{align*}
    \Pr[Z \ge x] \le \frac{1}{x\sqrt{2\pi}} e^{-x^2/2}.
\end{align*}
\end{fact}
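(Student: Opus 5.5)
The plan is to use the standard Mills-ratio argument: integrate the Gaussian density over $[x,\infty)$ and insert the factor $t/x \ge 1$, which holds on that range. Concretely, for $x>0$ we write
\begin{align*}
\Pr[Z\ge x] = \frac{1}{\sqrt{2\pi}}\int_x^\infty e^{-t^2/2}\,\d t \le \frac{1}{\sqrt{2\pi}}\int_x^\infty \frac{t}{x}\, e^{-t^2/2}\,\d t ,
\end{align*}
where the inequality uses only $t \ge x > 0$ on the domain of integration.

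The remaining integral has an explicit antiderivative, since $\frac{\d}{\d t}\bigl(-e^{-t^2/2}\bigr) = t e^{-t^2/2}$. This gives
\begin{align*}
\int_x^\infty t e^{-t^2/2}\,\d t = e^{-x^2/2},
\end{align*}
because the boundary term at infinity vanishes. Substituting back yields $\Pr[Z\ge x]\le \frac{1}{x\sqrt{2\pi}}e^{-x^2/2}$, which is the claim.

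There is no real obstacle here. The only points needing care are that $x>0$, so dividing by $x$ is legitimate and the inequality $t/x\ge 1$ holds throughout the integration range, and that $e^{-t^2/2}\to 0$ as $t\to\infty$, so the improper integral converges to the stated value.
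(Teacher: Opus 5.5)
Your proof is correct and complete. The paper states this as a standard fact without proof, and your argument (bounding the integrand using $t/x \ge 1$ on $[x,\infty)$, then integrating $t e^{-t^2/2}$ exactly) is the standard one that fills it in. The same argument also proves the left inequality $x < \phi(x)/(1-\Phi(x))$ of the Mills-ratio fact the paper cites separately.
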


\begin{fact}[Mills ratio inequality \cite{gordon1941values}]\label{facr:mills}
    Let $\phi,\Phi$ denote the pdf and cdf of $\cN(0,1)$ respectively. The following holds for all $x>0$:
    \begin{align*}
        x < \frac{\phi(x)}{1-\Phi(x)} < x + \frac{1}{x} .
    \end{align*}
\end{fact}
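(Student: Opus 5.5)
The statement is the Mills ratio inequality. For $x>0$ it reads
\begin{align*}
x < \frac{\phi(x)}{1-\Phi(x)} < x + \frac{1}{x}.
\end{align*}
Writing $\bar\Phi(x) := 1-\Phi(x) = \int_x^\infty \phi(t)\,\d t$, this is equivalent to the two-sided tail estimate
\begin{align*}
\frac{x}{x^2+1}\,\phi(x) < \bar\Phi(x) < \frac{\phi(x)}{x}.
\end{align*}
I will prove each side by a direct integration argument. The only input is the identity $\phi'(t) = -t\phi(t)$.

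\textbf{Upper bound on $\bar\Phi$.} For $t > x > 0$ we have $t/x > 1$. Hence
\begin{align*}
\bar\Phi(x) = \int_x^\infty \phi(t)\,\d t < \int_x^\infty \frac{t}{x}\phi(t)\,\d t = \frac{1}{x}\Big[-\phi(t)\Big]_x^\infty = \frac{\phi(x)}{x}.
\end{align*}
The inequality is strict because the integrands differ on a set of positive measure. This gives $\phi(x)/\bar\Phi(x) > x$.

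\textbf{Lower bound on $\bar\Phi$.} Here I would integrate by parts, writing $\phi(t) = \frac{1}{t}\cdot t\phi(t)$:
\begin{align*}
\bar\Phi(x) = \int_x^\infty \frac{1}{t}\,t\phi(t)\,\d t = \frac{\phi(x)}{x} - \int_x^\infty \frac{\phi(t)}{t^2}\,\d t.
\end{align*}
The remaining integral is bounded using $1/t^2 < 1/x^2$ for $t > x$:
\begin{align*}
\int_x^\infty \frac{\phi(t)}{t^2}\,\d t < \frac{\bar\Phi(x)}{x^2}.
\end{align*}
Substituting, we get $\bar\Phi(x)\,(1 + 1/x^2) > \phi(x)/x$. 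This rearranges to $\bar\Phi(x) > \frac{x\,\phi(x)}{x^2+1}$, which is equivalent to $\phi(x)/\bar\Phi(x) < x + 1/x$.

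As a fallback, one can instead study $g(x) := \frac{x}{x^2+1}\phi(x) - \bar\Phi(x)$. A short computation gives $g'(x) = \frac{2\phi(x)}{(x^2+1)^2} > 0$, and $g(x) \to 0$ as $x \to \infty$, so $g < 0$ on $(0,\infty)$. The same derivative trick applied to $\phi(x)/x - \bar\Phi(x)$ gives the other side.

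The main obstacle, mild as it is, is the lower bound. It needs the integration by parts, plus the observation that the error term is controlled by $\bar\Phi$ itself, so that the inequality can be solved for $\bar\Phi$. The upper bound is a one-line comparison.
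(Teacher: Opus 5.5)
Your proof is correct. The comparison $\phi(t) < (t/x)\phi(t)$ on $(x,\infty)$ gives $1-\Phi(x) < \phi(x)/x$. The integration by parts $1-\Phi(x) = \phi(x)/x - \int_x^\infty \phi(t)t^{-2}\,\d t$, combined with $\int_x^\infty \phi(t)t^{-2}\,\d t < (1-\Phi(x))/x^2$, gives $1-\Phi(x) > x\phi(x)/(x^2+1)$. You justify strictness in both steps. The derivatives in your fallback, $g'(x) = 2\phi(x)/(x^2+1)^2$ and $\frac{\d}{\d x}\bigl[\phi(x)/x - (1-\Phi(x))\bigr] = -\phi(x)/x^2$, are also right.

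The paper gives no proof of this fact; it only cites \cite{gordon1941values}. So your argument is a self-contained replacement for the citation rather than an alternative to an argument in the paper. The citation buys brevity. Your version verifies the fact using nothing beyond $\phi'(t) = -t\phi(t)$.

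The fact is used only in \Cref{lem:structural_original}, and that lemma needs only the left inequality $\phi(x)/(1-\Phi(x)) > x$. It is reflected there to $\phi(y)/\Phi(y) \geq -y$ for $y<0$ via $\phi(-y)=\phi(y)$ and $\Phi(-y)=1-\Phi(y)$. That is exactly your one-line comparison step, so the integration-by-parts half is correct but not needed downstream.
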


\begin{fact}[Dvoretzky--Kiefer--Wolfowitz (DKW) inequality]\label{fact:dkw}
Let $X_1,\dots,X_n$ be i.i.d.\ real-valued random variables with cumulative distribution function $F$, and let the empirical cumulative distribution function be $F_n(x) := \frac{1}{n}\sum_{i=1}^n \mathbf{1}\{X_i \le x\}$.
Then, for all $\varepsilon>0$,
\begin{align*}
    \Pr\!\left[ \sup_{x\in\mathbb{R}} \bigl|F_n(x)-F(x)\bigr| > \varepsilon \right]
\le 2\,e^{-2n\varepsilon^2}.
\end{align*}
\end{fact}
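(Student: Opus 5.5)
The DKW inequality is a classical result, so I will not re-derive the sharp constant $2$ of Massart. Instead I will prove the stated bound through the standard route. The route has three steps: reduce to the uniform case, bound the one-sided supremum by a known exponential bound, and then take a union over the two sides.

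\emph{Reduction to the uniform case.} Let $U_i$ be i.i.d.\ uniform on $[0,1]$ and set $X_i = F^{-1}(U_i)$, where $F^{-1}$ is the generalized inverse; then $X_i$ has cdf $F$. Write $G_n(u) := \frac{1}{n}\sum_{i} \mathbf{1}\{U_i \le u\}$. Since $\{X_i \le x\} = \{U_i \le F(x)\}$, we get $F_n(x) = G_n(F(x))$. Hence
\begin{align*}
\sup_x |F_n(x)-F(x)| = \sup_{u \in \mathrm{range}(F)} |G_n(u)-u| \le \sup_{u\in[0,1]}|G_n(u)-u|.
\end{align*}
It therefore suffices to treat the uniform empirical process. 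The same reduction shows the distribution is free of $F$ whenever $F$ is continuous.

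\emph{One-sided bound.} Next I need $\Pr[\sup_u (G_n(u)-u) > \eps] \le e^{-2n\eps^2}$, and the same bound for the other side. A plain Hoeffding bound at a single $u$ gives $e^{-2n\eps^2}$ only pointwise. To pass to the supremum I would use Massart's (1990) argument: the one-sided supremum equals $\max_i (i/n - U_{(i)})$, and its exact tail has Smirnov's/Birnbaum--Tingey's formula, which Massart shows is at most $e^{-2n\eps^2}$. Reflecting $u\mapsto 1-u$ handles the other side. A union bound over the two sides then gives the factor $2$.

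\emph{Main obstacle.} The only delicate step is obtaining the constant $1$ in front of $e^{-2n\eps^2}$ for the one-sided supremum. Easier routes give weaker bounds: a grid of $1/\eps$ points plus Hoeffding gives $O(1/\eps)e^{-n\eps^2/2}$, and a chaining or symmetrization argument gives $Ce^{-2n\eps^2}$. Because the paper only uses this fact to estimate tails to constant accuracy, any of these weaker versions would suffice for its purposes. For the exact statement, I would simply cite Massart's theorem at that step.
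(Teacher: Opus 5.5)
The paper gives no proof of this statement: it lists DKW, with Massart's sharp constant, as a classical fact. Your plan is correct and rests on the literature in the same way, since its only nontrivial step is citing Massart, and the reduction $F_n = G_n\circ F$ to the uniform case plus the two-sided union bound are sound. One small precision: Massart's one-sided bound $e^{-2n\varepsilon^2}$ is established only when $e^{-2n\varepsilon^2}\le 1/2$, but outside that range $2e^{-2n\varepsilon^2}\ge 1$ makes the stated inequality trivial, so your union-bound assembly goes through unchanged.
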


\begin{restatable}[Maximal coupling (see, e.g., \cite{Roc2024})]{fact}{FACTMAXIMAL}\label{fact:maximal}
    Let $P$ and $Q$ be distributions on some domain $\cX$. It holds that $\dtv\left( P ,Q \right) = \inf_{\Pi}\Pr_{(X,Y) \sim \Pi}[X \neq Y]$ where the infimum is over all possible couplings between $P$ and $Q$. Moreover, there exists $\Pi$ such that $\Pr_{(X,Y) \sim \Pi}[X \neq Y] = \dtv\left( P ,Q \right)$.
\end{restatable}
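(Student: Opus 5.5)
The statement to prove is the maximal coupling fact: $\dtv(P,Q)=\inf_\Pi \Pr_{(X,Y)\sim\Pi}[X\neq Y]$, with the infimum attained. The plan is to prove the two inequalities separately. For the lower bound we take an arbitrary coupling. For the upper bound we build an explicit coupling that achieves $\dtv(P,Q)$, which also gives attainment.

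\textbf{Lower bound.} Let $\Pi$ be any coupling of $P$ and $Q$, and let $A\subseteq\cX$ be measurable. Then
\begin{align*}
P(A)-Q(A) = \Pr[X\in A]-\Pr[Y\in A] \le \Pr[X\in A,\,Y\notin A] \le \Pr[X\neq Y].
\end{align*}
Taking the supremum over $A$ gives $\dtv(P,Q)\le \Pr_\Pi[X\neq Y]$. We use the convention $\dtv(P,Q)=\sup_A |P(A)-Q(A)|=\tfrac12\int|p-q|$, with densities taken with respect to the dominating measure $\nu=P+Q$. The absolute value is handled by swapping the roles of $P$ and $Q$ (equivalently, replacing $A$ by its complement). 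Since $\Pi$ was arbitrary, $\dtv(P,Q)\le\inf_\Pi\Pr_\Pi[X\neq Y]$.

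\textbf{Upper bound and attainment.} Let $p,q$ be the densities of $P,Q$ with respect to $\nu$, and set $w=\int\min(p,q)\,\d\nu = 1-\dtv(P,Q)$. The trivial cases are handled first: if $w=1$ then $P=Q$ and we take $X=Y\sim P$; if $w=0$ we take the product coupling, which gives $\Pr[X\neq Y]\le 1=\dtv(P,Q)$. In the remaining case $0<w<1$, define the coupling as follows.
\begin{itemize}
\item With probability $w$, draw $Z$ from the density $\min(p,q)/w$ and set $X=Y=Z$.
\item With probability $1-w$, draw $X$ from the density $(p-q)_+/(1-w)$ and, independently, $Y$ from the density $(q-p)_+/(1-w)$.
\end{itemize}
Both residual densities integrate to $1$, because $\int(p-q)_+\,\d\nu=\int(q-p)_+\,\d\nu=1-w$. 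The marginal of $X$ has density $\min(p,q)+(p-q)_+=p$, and symmetrically the marginal of $Y$ has density $q$, so this is a valid coupling.

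Under this coupling, $\Pr[X\neq Y]\le 1-w=\dtv(P,Q)$. (In fact equality holds: the two residual densities have disjoint supports, so $X\neq Y$ almost surely on the second branch.) Combined with the lower bound, this coupling attains the infimum.

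\textbf{Remaining technical point.} The only real care needed is measure-theoretic: the domain must be such that the diagonal $\{X=Y\}$ is measurable, for instance a standard Borel space such as $\R^d$. Given that, the argument is routine, and checking the marginals in the construction above is the main step to verify carefully.
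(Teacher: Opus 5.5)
Your proof is correct and complete. The lower bound via $P(A)-Q(A)\le \Pr[X\in A,\,Y\notin A]\le\Pr[X\neq Y]$ and the explicit overlap-plus-residuals coupling (whose residual densities $(p-q)_+$ and $(q-p)_+$ have disjoint supports, giving equality) is the standard argument behind this fact. The paper does not prove the statement; it cites it. It only applies it to distributions on $\R$, where your caveat about measurability of the diagonal is automatically satisfied.
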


\begin{fact}[see, e.g., Corollary 4.2.13 in \cite{Vershynin18}]\label{fact:cover_cor_ver}
    Let $\xi>0$. There exists a set $\cC$ of unit vectors of $\R^d$ such that $|\cC| < (1+2/\xi)^d$ and for every $u \in \R^{d}$ with $\|u\|_2=1$ it holds $\min_{y \in \cC} \|y-u\|_2 \leq \xi$.
\end{fact}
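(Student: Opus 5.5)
We would prove this by the standard volumetric packing argument. We take $\cC$ to be a maximal $\xi$-separated subset of the unit sphere $\cS^{d-1}$. This means a set of unit vectors with $\|y-y'\|_2 > \xi$ for all distinct $y,y' \in \cC$, to which no further unit vector can be added without violating separation. The proof then has two parts: maximality gives the covering property, and a volume comparison gives the cardinality bound.

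\textbf{Covering.} Take any unit vector $u$. If $\min_{y \in \cC}\|y-u\|_2 > \xi$, then $\cC \cup \{u\}$ would still be $\xi$-separated, contradicting maximality. Hence $\min_{y\in\cC}\|y-u\|_2 \le \xi$.

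\textbf{Cardinality.} The open balls $B(y,\xi/2)$, $y \in \cC$, are pairwise disjoint by separation. Each is contained in the ball $B(0,1+\xi/2)$, since $\|y\|_2=1$. Comparing Lebesgue volumes, and using that volume scales as $r^d$, gives
\[
|\cC|\,(\xi/2)^d \le (1+\xi/2)^d ,
\qquad\text{i.e.}\qquad
|\cC| \le (1+2/\xi)^d .
\]

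\textbf{Existence of $\cC$.} The same volume bound applies to every $\xi$-separated subset of $\cS^{d-1}$. So a greedy procedure, which keeps adding a unit vector at distance more than $\xi$ from all chosen points, must stop after finitely many steps, and the set at that point is maximal. This avoids any appeal to Zorn's lemma.

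\textbf{Strict inequality.} This is the only mildly delicate point, and we would handle it by cases.
\begin{itemize}
\item If $\xi \ge 2$, a single point suffices, since any two unit vectors are within distance $2$. Then $|\cC| = 1 < (1+2/\xi)^d$.
\item If $\xi < 2$, the disjoint balls $B(y,\xi/2)$ all avoid the ball $B(0,1-\xi/2)$. Every point of $B(y,\xi/2)$ has norm greater than $1-\xi/2$, and $B(0,1-\xi/2)$ has positive volume. So the union of the small balls has volume strictly less than that of $B(0,1+\xi/2)$, and the inequality above becomes strict.
\end{itemize}

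We expect no real obstacle. The only points requiring care are this strictness and the finiteness of the greedy construction, both handled above.
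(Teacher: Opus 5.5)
Your proposal is correct and is the standard argument behind the cited Corollary 4.2.13 of Vershynin: a maximal $\xi$-separated subset of the sphere is a $\xi$-net, and comparing the volume of the disjoint balls of radius $\xi/2$ with that of the ball of radius $1+\xi/2$ bounds its size. The paper gives no proof beyond this citation, and the cited source states only $|\cC|\le(1+2/\xi)^d$, so your case split ($\xi\ge 2$, versus excluding the inner ball of radius $1-\xi/2$ when $\xi<2$) is what justifies the strict inequality as the paper writes it.
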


\begin{corollary}[see, e.g., Exercise 4.4.3 (b) in \cite{Vershynin18}]\label{fact:cover}
    There exists a subset $\cC$ of the $d$-dimensional unit ball with $\abs{\cC}\leq 7^{d}$  such that   $\|x\|_2 \le 2\max_{v\in \cC}\abs{v^\top x}$ for all $x\in \mathbb{R}^d$ and  $\|A\|_{\op} \leq 3 \max_{x \in \cC} x^\top A x$ for every symmetric $A \in \R^{d \times d}$.
\end{corollary}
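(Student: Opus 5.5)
The plan is to take a single $\xi$-net from \Cref{fact:cover_cor_ver} with $\xi = 1/3$. This gives a set $\cC$ of unit vectors (hence inside the unit ball) with $|\cC| < (1+2/\xi)^d = 7^d$. The goal is to show that this one net satisfies both inequalities, each by a one-line approximation argument.

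For the vector inequality, fix $x \neq 0$ (the case $x = 0$ is trivial) and set $u = x/\|x\|_2$. Pick $y \in \cC$ with $\|y - u\|_2 \le 1/3$. Then
\[
|y^\top x| \ge |u^\top x| - |(u-y)^\top x| \ge \|x\|_2 - \tfrac13\|x\|_2 = \tfrac23\|x\|_2 ,
\]
so $\|x\|_2 \le \tfrac32 \max_{v\in\cC}|v^\top x| \le 2\max_{v\in\cC}|v^\top x|$.

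For the matrix inequality, let $A$ be symmetric and choose a unit eigenvector $u$ whose eigenvalue has absolute value $\|A\|_{\op}$, so that $|u^\top A u| = \|A\|_{\op}$. Pick $y \in \cC$ with $\|y-u\|_2 \le 1/3$. By symmetry of $A$,
\[
u^\top A u - y^\top A y = (u-y)^\top A (u+y),
\]
and the right-hand side has absolute value at most $\|u-y\|_2\,\|A\|_{\op}\,\|u+y\|_2 \le \tfrac13\cdot 2\,\|A\|_{\op}$. Hence $|y^\top A y| \ge (1-\tfrac23)\|A\|_{\op} = \tfrac13 \|A\|_{\op}$, which gives the factor $3$.

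The main obstacle is the exact form of the second statement. As written, $\max_{x\in\cC} x^\top A x$ without an absolute value can be negative, for example when $A = -I$, and then the inequality fails. I would therefore prove it with $\max_{x\in\cC}|x^\top A x|$, which is the form in the cited Vershynin exercise and the form the argument above yields. If the signed version is genuinely needed, the same argument applies verbatim whenever $\|A\|_{\op}$ is attained by the top eigenvalue, e.g.\ for $A \succeq 0$. The remaining point is the choice $\xi = 1/3$: it must be small enough that $1 - 2\xi \ge 1/3$ for the matrix part, yet large enough that $(1+2/\xi)^d \le 7^d$, and these two requirements meet exactly at $\xi = 1/3$.
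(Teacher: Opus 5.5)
Your proof is correct and is the standard net argument behind the cited Vershynin exercise (the paper supplies no proof of its own): a single $1/3$-net from \Cref{fact:cover_cor_ver} of size below $7^d$ gives the vector bound with constant $3/2\le 2$, and, through $u^\top A u - y^\top A y = (u-y)^\top A(u+y)$, the matrix bound with constant $1/(1-2\cdot\tfrac13)=3$. You are also right that the matrix inequality needs $\max_{x\in\cC}|x^\top A x|$, since for $A=-I$ the signed version fails for every choice of $\cC$; this is harmless here because the paper only ever uses the vector inequality, in the proof of \Cref{thm:brute_force}.
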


\begin{fact}\label{fact:exp-norm}
Let $x \sim \cN(0, I_d)$ and let $j \ge 1$. Then $\mathbb{E}[\|x\|^j] \leq (2\sqrt{j d})^{\,j}$.
\end{fact}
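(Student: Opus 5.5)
This is the standard moment bound for a standard Gaussian. I would prove it by conditioning on the radius (or, equivalently, through the chi-square distribution) and then bounding the resulting Gamma-function ratio.

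\emph{Step 1: reduce to a Gamma ratio.} Since $\|x\|^2 \sim \chi^2_d$, we have
\begin{align*}
\E[\|x\|^j] = \E[(\chi^2_d)^{j/2}] = 2^{j/2}\,\frac{\Gamma\bigl(\tfrac{d+j}{2}\bigr)}{\Gamma\bigl(\tfrac{d}{2}\bigr)}.
\end{align*}
The whole task is therefore to show $\Gamma((d+j)/2)/\Gamma(d/2) \le (2jd)^{j/2}$ up to the slack allowed by the constant $2^j$.

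\emph{Step 2 (the only real step): bound the Gamma ratio.} I would use log-convexity of $\Gamma$, or equivalently Wendel's inequality $\Gamma(a+s)/\Gamma(a)\le a^s$ for $s\in[0,1]$. Iterating this over steps of size one gives
\begin{align*}
\frac{\Gamma(a+s)}{\Gamma(a)} \le (a+s)^s \quad \text{for all } s \ge 0,
\end{align*}
since each factor in the telescoping product is at most $a+s$. Taking $a=d/2$ and $s=j/2$ yields
\begin{align*}
\E[\|x\|^j] \le 2^{j/2}\Bigl(\frac{d+j}{2}\Bigr)^{j/2} = (d+j)^{j/2}.
\end{align*}

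\emph{Step 3: finish.} Since $d,j\ge1$, we have $d+j \le 2jd \le 4jd$, so $(d+j)^{j/2} \le (4jd)^{j/2} = (2\sqrt{jd})^j$, which is the claim.

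\emph{Alternative route.} One could avoid Gamma functions entirely. For $j\le 2$, Jensen's inequality gives $\E\|x\|^j \le d^{j/2}$. For larger $j$, write $\|x\|^j=(\sum_i x_i^2)^{j/2}$. If $j$ is even, expand the power and use $\E x_i^{2m} \le (2m)^m$; a crude multinomial count then gives $\E\|x\|^j \le (d+j)^{j/2}$. Odd $j$ follow from even $j$ by Jensen/Lyapunov monotonicity of $L^p$ norms, losing at most a constant factor that the bound $(4jd)^{j/2}$ absorbs. I expect no genuine obstacle. The only care needed is the elementary inequality in Step 2, and I would present the log-convexity argument because it is the cleanest.
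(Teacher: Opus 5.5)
Your proof is correct, and it takes a different route from the paper's.

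The paper never uses the $\chi^2$ law. It applies the power-mean inequality $(\sum_{i=1}^d a_i)^r \le d^{r-1}\sum_{i=1}^d a_i^r$ with $a_i=x_i^2$ and $r=j/2$, which gives $\|x\|^j \le d^{j/2-1}\sum_i |x_i|^j$. It then invokes the one-dimensional bound $\E|z|^j \le (2\sqrt{j})^j$, cited as well known, and lands exactly on $d^{j/2}(2\sqrt{j})^j=(4jd)^{j/2}$. That argument is shorter and reduces everything to a scalar moment.

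It has two costs that your route avoids:
\begin{itemize}
\item It only yields the product scale $jd$. Your exact formula $2^{j/2}\Gamma(\tfrac{d+j}{2})/\Gamma(\tfrac{d}{2})$, combined with log-convexity (Wendel), gives the sharper bound $(d+j)^{j/2}$, which is the correct order.
\item The power-mean step needs $r=j/2\ge 1$, so the paper's argument as written covers only $j\ge 2$. The range $1\le j<2$ needs a separate Jensen step, $\E\|x\|^j\le d^{j/2}$. Your bound $\Gamma(a+s)/\Gamma(a)\le (a+s)^s$ holds for every real $s\ge 0$, so it handles all $j\ge 1$ at once, including non-integer $j$.
\end{itemize}

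The individual steps check out. In the telescoping, the integer steps contribute factors $a+i\le a+s$, and the fractional step contributes at most $(a+n)^f\le (a+s)^f$. The final inequality $d+j\le 2\max(d,j)\le 2jd$ holds for $d,j\ge 1$.

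Your alternative route is also sound. It uses the exact even moments $d(d+2)\cdots(d+j-2)\le (d+j)^{j/2}$, then Lyapunov to reach $(d+j+2)^{j/2}\le (4jd)^{j/2}$. It is closer in spirit to the paper's elementary approach.
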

\begin{proof}
    For a standard normal $z \sim N(0,1)$ it is well known that
$\E|z|^j \le (2\sqrt{j})^{\,j}$.
Since $\|x\|^j = \big(\sum_{i=1}^d x_i^2\big)^{j/2}$ and for $r \ge 1$ we have
$(\sum_{i=1}^d a_i)^r \le d^{r-1} \sum_{i=1}^d a_i^r$,
applying this with $a_i = x_i^2$ and $r = j/2$ gives
\begin{align*}
    \|x\|^j \le d^{j/2 - 1} \sum_{i=1}^d |x_i|^j.
\end{align*}
Taking expectations and using identical marginals,
\begin{align*}
    \E\|x\|^j \le d^{j/2}\, \E|z|^j
\le d^{j/2}\, (2\sqrt{j})^{\,j}
= (2\sqrt{j d})^{\,j}.
\end{align*}
\end{proof}

\begin{proposition}[Le Cam's lemma]\label{lem:LeCam}
For any distributions $P_1$ and $P_2$ on $\mathcal{X}$, we have
\begin{align*}
    \inf_{\Psi} \left\{ \Pr_{X \sim P_1}(\Psi(X) \neq 1) + \Pr_{X \sim P_2}(\Psi(X) \neq 2) \right\} 
= 1 - \DTV(P_1,P_2),
\end{align*}
where the infimum is taken over all tests $\Psi: \mathcal{X} \to \{1,2\}$.
\end{proposition}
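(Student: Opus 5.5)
The plan is to derive Le Cam's lemma from the variational characterization of total variation, treating it as a direct optimization over tests. Any test $\Psi:\mathcal{X}\to\{1,2\}$ is determined by its acceptance region $A := \{x : \Psi(x)=1\}$. With this notation the sum of the two error probabilities is
\begin{align*}
\Pr_{X\sim P_1}(X\notin A) + \Pr_{X\sim P_2}(X\in A) = 1 - \bigl(P_1(A) - P_2(A)\bigr).
\end{align*}
Minimizing over tests is therefore the same as maximizing $P_1(A)-P_2(A)$ over measurable sets $A$. The claim then reduces to the identity $\sup_A \bigl(P_1(A)-P_2(A)\bigr) = \DTV(P_1,P_2)$.

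To prove this identity, let $p_1,p_2$ be densities with respect to a common dominating measure, for instance $\nu = P_1+P_2$. I will use $\DTV(P_1,P_2) = \tfrac12\int |p_1-p_2|\,\d\nu$.

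\textbf{Upper bound.} For any $A$ we have
\begin{align*}
P_1(A)-P_2(A) = \int_A (p_1-p_2)\,\d\nu \le \int (p_1-p_2)_+\,\d\nu.
\end{align*}
Since $\int (p_1-p_2)\,\d\nu = 0$, the positive and negative parts of $p_1-p_2$ have equal integrals. Hence $\int (p_1-p_2)_+\,\d\nu = \tfrac12\int|p_1-p_2|\,\d\nu = \DTV(P_1,P_2)$.

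\textbf{Lower bound.} Take $A^* = \{x: p_1(x) > p_2(x)\}$, which attains the bound above with equality. The corresponding test is the likelihood-ratio test $\Psi(x)=1$ iff $p_1(x)>p_2(x)$, so the infimum is in fact attained.

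Combining the two bounds, the infimum over tests of the total error equals $1-\DTV(P_1,P_2)$. If the paper instead takes $\sup_A |P_1(A)-P_2(A)|$ as the definition of total variation, the same computation applies. The only point to check is that this supremum is achieved without the absolute value, since $P_1(A^c)-P_2(A^c) = -(P_1(A)-P_2(A))$.

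There is no real obstacle in this argument. The only care needed is measure-theoretic: justify a common dominating measure and the measurability of $A^*$, both of which are standard.
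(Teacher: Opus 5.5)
Your proof is correct. Writing the summed error as $1-\bigl(P_1(A)-P_2(A)\bigr)$ and then showing $\sup_A \bigl(P_1(A)-P_2(A)\bigr)=\int (p_1-p_2)_+\,\d\nu=\tfrac12\int|p_1-p_2|\,\d\nu$, attained by the likelihood-ratio set $A^*=\{p_1>p_2\}$, is the standard argument. The paper gives no proof of its own: it lists Le Cam's lemma among standard facts in the appendix and uses the same $\tfrac12\int|\cdot|$ convention for $\DTV$. So there is nothing further to compare.
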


\paragraph{Legendre Polynomials} In this work, we make use of the Legendre Polynomials which are orthogonal polynomials over $[-1,1]$. Some of their properties are:

\begin{fact}[\cite{Sze67}]\label{fact:legendre}
The Legendre polynomials $P_k$ for $k\in\Z$, satisfy the following properties:
\begin{enumerate}
    \item $P_k$ is a $k$-degree polynomial and $P_0(x)=1$ and $P_1(x)=x$.
    \item $\int_{-1}^1 P_i(x)P_j(x) \d x=2/(2i+1)\1\{i=j\}$, for all $i,j\in \Z$.
    \item $|P_k(x)|\leq 1$ for all $|x|\leq 1$.
    \item $|P_k(x)|\leq (4|x|)^k$ for $|x| \geq 1$.
    \item $P_k(x)=(-1)^k P_k(-x)$.
    \item $P_k(x)=2^{-k}\sum_{i=1}^{\lceil k/2 \rceil}\binom{k}{i}\binom{2k-2i}{k}x^{k-2i}$.
\end{enumerate}
    
\end{fact}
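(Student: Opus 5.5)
We will take the Rodrigues formula $P_k(x)=\frac{1}{2^k k!}\frac{\d^k}{\d x^k}(x^2-1)^k$ as the definition and derive all six items from it, together with the Laplace integral representation. Item 1 is immediate: $(x^2-1)^k$ has degree $2k$ with leading coefficient $1$, so $P_k$ has degree $k$, and the cases $k=0,1$ are direct computations. Item 5 follows because $(x^2-1)^k$ is even and each differentiation flips parity, so the $k$-th derivative has parity $(-1)^k$.

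For item 2 we take $i<j$ and integrate $\int_{-1}^1 P_i(x)\,\frac{\d^j}{\d x^j}(x^2-1)^j\,\d x$ by parts $j$ times. The boundary terms vanish because $(x^2-1)^j$ has zeros of order $j$ at $\pm1$, so all its derivatives of order less than $j$ vanish there. After $j$ steps the integrand contains $P_i^{(j)}\equiv 0$. For $i=j$, the same integration by parts gives $\frac{(2k)!}{(2^k k!)^2}\int_{-1}^1(1-x^2)^k\,\d x$. The remaining integral is a Beta integral equal to $\frac{2^{2k+1}(k!)^2}{(2k+1)!}$, and substituting yields $2/(2k+1)$.

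For item 6 we expand $(x^2-1)^k=\sum_i\binom{k}{i}(-1)^i x^{2k-2i}$ and differentiate $k$ times, using $\frac{(2k-2i)!}{(k-2i)!\,k!}=\binom{2k-2i}{k}$. This produces the explicit sum over $0\le i\le\lfloor k/2\rfloor$, with alternating signs $(-1)^i$. The statement as printed drops the sign and shifts the index range. We will prove the standard corrected form, and only absolute values of its coefficients are used later.

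Item 4 then follows from the triangle inequality applied to item 6. For $|x|\ge1$ we have $|x|^{k-2i}\le|x|^k$, and $\binom{2k-2i}{k}\le 2^{2k-2i}\le 4^k$. Hence $|P_k(x)|\le 2^{-k}\,4^k\,|x|^k\sum_i\binom{k}{i}\le 2^{-k}\,4^k\,2^k\,|x|^k=(4|x|)^k$.

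Item 3 is the main obstacle, since crude coefficient bounds lose exponential factors on $[-1,1]$. For it we will use the Laplace representation $P_k(x)=\frac1\pi\int_0^\pi\bigl(x+\sqrt{x^2-1}\cos\theta\bigr)^k\,\d\theta$. We plan to verify this by checking that the right-hand side satisfies Legendre's differential equation (or the three-term recurrence) with the correct values at $k=0,1$; alternatively we will simply cite Szeg\H{o}. For $|x|\le1$ the base is $x+i\sqrt{1-x^2}\cos\theta$, whose squared modulus is $x^2+(1-x^2)\cos^2\theta\le1$. Therefore $|P_k(x)|\le1$.
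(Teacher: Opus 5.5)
The paper does not prove this fact; it only cites Szeg\H{o}. Your Rodrigues-based derivation is a correct and self-contained alternative for items 1, 2, 4, 5 and 6. It also gives something the citation does not: it shows that item 6 as printed is false. Without the $i=0$ term and the signs $(-1)^i$, the printed sum gives $P_1\equiv 0$ and $P_2\equiv 1/2$, which contradicts item 1. The correct expansion is $P_k(x)=2^{-k}\sum_{i=0}^{\lfloor k/2\rfloor}(-1)^i\binom{k}{i}\binom{2k-2i}{k}x^{k-2i}$, which is exactly what you derive, and your triangle-inequality proof of item 4 from it is sound. The paper only relies on items 1--4 (in the proof of \Cref{lem:correct_moments}), so the misprint is harmless there. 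Similarly, $k\in\Z$ should read $k\in\Z_{\ge 0}$.

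The one item where you are not self-contained is item 3. There the Laplace integral is only planned, so on that item you are on par with the paper's citation. If you want to close it, note what each route requires:
\begin{itemize}
\item The recurrence route also needs Bonnet's recurrence proved for the Rodrigues polynomials.
\item The ODE route pins the scalar through $P_k(1)=1$, which follows from Leibniz's rule on $(x-1)^k(x+1)^k$, rather than through values at $k=0,1$. It also uses that degree-$k$ polynomial solutions are unique up to scaling.
\item Either route uses that the integral is a polynomial in $x$ independent of the branch of $\sqrt{x^2-1}$, since odd powers of $\cos\theta$ integrate to $0$ on $[0,\pi]$.
\end{itemize}
A shorter route for $k\ge 1$ avoids the Laplace integral. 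Rodrigues gives Legendre's equation $(1-x^2)P_k''-2xP_k'+k(k+1)P_k=0$. Hence $f:=P_k^2+\frac{1-x^2}{k(k+1)}(P_k')^2$ satisfies $f'=\frac{2x}{k(k+1)}(P_k')^2$. So on $[-1,1]$ you get $P_k^2\le f\le\max\{f(-1),f(1)\}=1$, using $P_k(\pm1)^2=1$.
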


\subsection{Hermite Analysis}\label{app:hermite}

\begin{definition}[Hermite tensor]\label{def:Hermite-tensor}
For $k\in \N$ and $x\in\R^n$, we define the $k$-th Hermite tensor as
\[
(H_k(x))_{i_1,i_2,\ldots,i_k}=\frac{1}{\sqrt{k!}}\sum_{\substack{\text{Partitions $P$ of $[k]$}\\ \text{into sets of size 1 and 2}}}\bigotimes_{\{a,b\}\in P}(- I_{i_a,i_b})\bigotimes_{\{c\}\in P}x_{i_c}\; .
\]
\end{definition}

\begin{fact} \label{fact:othor-tran}
If $v \in \R^d$ is a unit vector it holds $H_k(v^\top x) = \langle v^{\otimes k}, H_k(x) \rangle$.
\end{fact}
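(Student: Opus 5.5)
The plan is to reduce the multivariate identity to the univariate Hermite structure via rotation invariance of the tensor $H_k$. Write $y := v^\top x$, where $v$ is a unit vector. The claim $h_k(v^\top x) = \langle v^{\otimes k}, H_k(x)\rangle$ follows by expanding both sides combinatorially using \Cref{def:Hermite-tensor}, with $h_k$ the normalized probabilist's Hermite polynomial, $h_k = He_k/\sqrt{k!}$.

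\textbf{Step 1 (expand the contraction).} Contract $H_k(x)$ with $v^{\otimes k}$. Since the contraction is multilinear and the definition is a sum over partitions $P$ of $[k]$ into singletons and pairs, we get
\[
\langle v^{\otimes k}, H_k(x)\rangle = \frac{1}{\sqrt{k!}}\sum_{P}\prod_{\{a,b\}\in P}\Big(-\sum_{i_a,i_b} v_{i_a}v_{i_b} I_{i_a,i_b}\Big)\prod_{\{c\}\in P}\Big(\sum_{i_c} v_{i_c}x_{i_c}\Big).
\]
Each pair factor equals $-\|v\|_2^2=-1$, and each singleton factor equals $v^\top x = y$. So a partition with $j$ pairs contributes $(-1)^j y^{k-2j}$.

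\textbf{Step 2 (count partitions).} The number of partitions of $[k]$ into $j$ pairs and $k-2j$ singletons is $\binom{k}{2j}(2j-1)!! = \frac{k!}{j!\,2^j\,(k-2j)!}$. Hence
\[
\langle v^{\otimes k}, H_k(x)\rangle = \frac{1}{\sqrt{k!}}\sum_{j=0}^{\lfloor k/2\rfloor}\frac{(-1)^j k!}{j!\,2^j\,(k-2j)!}\,y^{k-2j}.
\]

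\textbf{Step 3 (match the classical formula).} The right-hand side is exactly $He_k(y)/\sqrt{k!}$, using the standard explicit formula $He_k(y)=k!\sum_j \frac{(-1)^j}{j!\,2^j\,(k-2j)!}y^{k-2j}$. This equals $h_k(y)$.

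The only point needing care is the normalization convention for $h_k$. If the paper instead defines $h_k$ through the generating function or the recurrence, I would verify agreement by checking the three-term recurrence $He_{k+1}(y)=yHe_k(y)-kHe_{k-1}(y)$ on the partition sum. This check works by deciding whether the new element $k+1$ is a singleton (giving the factor $y$) or is paired with one of the $k$ previous elements (giving $-k$ times the count for $k-1$). I expect this bookkeeping, not any analytic estimate, to be the main, though mild, obstacle.
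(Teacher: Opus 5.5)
Your proof is correct and complete. The paper gives no proof here: it records the identity as a standard Fact about Hermite tensors and invokes it in \Cref{cl:mean_certification}. There is therefore no competing argument to compare against, and your direct contraction of \Cref{def:Hermite-tensor} supplies exactly the missing verification. In that contraction each pair block contributes $-\|v\|_2^2=-1$, each singleton contributes $v^\top x$, and there are $k!/(2^j j!\,(k-2j)!)$ partitions with $j$ pairs, which yields the explicit series for $He_k(y)/\sqrt{k!}$.

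Some small remarks:
\begin{itemize}
\item Your opening sentence mentions rotation invariance, but Steps 1--3 never use it. The argument is a purely combinatorial contraction, which is all that is needed. The unit-norm hypothesis enters only at the pair contraction; for general $v$ the same computation gives $\|v\|_2^k\, h_k(v^\top x/\|v\|_2)$.
\item On the convention you flag, reading the left-hand side as $h_k(v^\top x)$ is the right interpretation, since that is how the Fact is used in \Cref{cl:mean_certification}. If one instead reads $H_k(v^\top x)$ literally as the $d=1$ case of \Cref{def:Hermite-tensor}, your Steps 1--2 with $d=1$, $v=1$ show it also equals $h_k(v^\top x)$. So the identity holds under either reading.
\item The identity would fail only under the physicist's convention, which the appendix also denotes by $H_k$; already at $k=1$ one gets $2y\neq y$. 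That is a notational clash in the paper, not something your proof must accommodate.
\item Your recurrence check via $He_{k+1}=yHe_k-kHe_{k-1}$ (new element a singleton, or paired with one of $k$ others) is correct. It is unnecessary, though, because the paper fixes $h_k=He_k/\sqrt{k!}$ explicitly and your Step 3 uses the standard explicit formula for $He_k$.
\end{itemize}
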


\begin{fact}\label{fact:exp-h-tensor}
    $\E_{x \sim \cN(\mu,I)}[H_k(x)] = \mu^{\otimes k}/\sqrt{k!}$.
\end{fact}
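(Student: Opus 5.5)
The plan is to compute the expectation coordinatewise, using the partition expansion in \Cref{def:Hermite-tensor} and Wick's formula for Gaussian moments. Write $x = \mu + z$ with $z \sim \cN(0,I)$.

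\emph{Step 1: reduce to a generating function.} For a fixed index tuple $(i_1,\ldots,i_k)$, the entry $(H_k(x))_{i_1,\ldots,i_k}$ is the normalized multivariate Hermite polynomial. Equivalently, it is the coefficient extraction from the generating function
\[
\exp\left(\langle s, x\rangle - \|s\|_2^2/2\right) = \sum_{k} \frac{1}{\sqrt{k!}}\langle s^{\otimes k}, H_k(x)\rangle ,
\]
where the summand is $\frac{1}{k!}\sum_{i_1,\ldots,i_k} s_{i_1}\cdots s_{i_k}\sqrt{k!}\,(H_k(x))_{i_1,\ldots,i_k}$. I will verify this identity by expanding $\exp(\langle s,x\rangle)\exp(-\|s\|^2/2)$. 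The degree-$k$ part in $s$ is a sum over ways to split the $k$ slots into singletons, each carrying a factor $x_{i_c}$, and pairs, each carrying $-\delta_{i_a i_b}$. This is exactly the partition sum in the definition, and the $1/k!$ from the exponential series combines with the $1/\sqrt{k!}$ normalization.

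\emph{Step 2: take expectations.} Using $\E_{x\sim\cN(\mu,I)}[e^{\langle s,x\rangle}] = e^{\langle s,\mu\rangle + \|s\|^2/2}$, the expectation of the generating function is $e^{\langle s,\mu\rangle} = \sum_k \langle s^{\otimes k},\mu^{\otimes k}\rangle/k!$. Matching the degree-$k$ homogeneous parts in $s$ gives
\[
\frac{1}{\sqrt{k!}}\langle s^{\otimes k}, \E[H_k(x)]\rangle = \frac{1}{k!}\langle s^{\otimes k}, \mu^{\otimes k}\rangle \quad \text{for all } s .
\]
The interchange of expectation and series is justified by absolute convergence, since the Gaussian has all exponential moments.

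\emph{Step 3: identify the tensors.} Both $\E[H_k(x)]$ and $\mu^{\otimes k}$ are symmetric tensors, and a symmetric tensor is determined by its values on $s^{\otimes k}$ (polarization). Hence $\E[H_k(x)] = \mu^{\otimes k}/\sqrt{k!}$.

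The main obstacle is Step 1, namely the combinatorial bookkeeping that the partition-sum definition matches the generating function with exactly the right constants. If it proves messy, I would instead argue directly by Wick's formula. Expand each singleton factor $x_{i_c} = \mu_{i_c} + z_{i_c}$, take expectations, and pair up the $z$'s. Each pairing of leftover singletons produces $+\delta$ terms, and these cancel against the $-\delta$ pair terms of the definition by an inclusion--exclusion argument (a sum of $(1-1)^j$ factors). This leaves only the all-singleton term $\prod_c \mu_{i_c}$, divided by $\sqrt{k!}$.
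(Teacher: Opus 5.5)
Your proof is correct. The paper states this as a standard Hermite-tensor fact and gives no proof, so there is no argument of the paper's to compare against; your generating-function derivation supplies what is omitted.

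The steps check out:
\begin{itemize}
\item \emph{Step 1.} There are $k!/(j!\,2^j(k-2j)!)$ partitions of $[k]$ with $j$ pairs. Contracting with $s^{\otimes k}$ turns each pair into $-\|s\|_2^2$ and each singleton into $\langle s,x\rangle$. Hence $\frac{1}{\sqrt{k!}}\langle s^{\otimes k},H_k(x)\rangle=\sum_j\frac{\langle s,x\rangle^{k-2j}}{(k-2j)!}\cdot\frac{(-\|s\|_2^2/2)^j}{j!}$, which is exactly the degree-$k$ part of $e^{\langle s,x\rangle}e^{-\|s\|_2^2/2}$.
\item \emph{Step 2.} The same double series taken with absolute values sums to $e^{|\langle s,x\rangle|+\|s\|_2^2/2}$. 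This is integrable under $\cN(\mu,I)$ and serves as an explicit dominating function for the interchange.
\item \emph{Step 3.} $H_k(x)$ is symmetric, since permuting the index slots permutes the partitions, so polarization applies.
\end{itemize}
Your Wick/inclusion--exclusion fallback is also valid.

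One remark in Step 1, which you never actually use, is slightly off. An entry of $H_k(x)$ is not $h_\alpha$ itself. Let $\alpha_j$ count the occurrences of coordinate $j$ in $(i_1,\dots,i_k)$ and set $\alpha!=\prod_j\alpha_j!$. Pairs across different coordinates vanish, so the entry equals $\prod_j He_{\alpha_j}(x_j)/\sqrt{k!}=\sqrt{\alpha!/k!}\,h_\alpha(x)$.

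That factorization gives an even shorter entrywise proof that needs no polarization. Use independence of the coordinates together with $\E_{y\sim\cN(m,1)}[He_n(y)]=m^n$, which follows by taking expectations in the shift identity for $h_k$ in the proof of \Cref{fact:univariate-shift}. Then each entry of $\E[H_k(x)]$ is $\prod_j\mu_j^{\alpha_j}/\sqrt{k!}=\mu_{i_1}\cdots\mu_{i_k}/\sqrt{k!}$.
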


Hermite polynomials form a complete orthogonal basis of the vector space $L_2(\R,\cN(0,1))$ of all functions $f:\R \to \R$ such that $\E_{x\sim \cN(0,1)}[f^2(x)]< \infty$. There are two commonly used types of Hermite polynomials. The \emph{physicist's } Hermite polynomials, denoted by $H_k$ for $k\in \Z$ satisfy the following orthogonality property with respect to the weight function $e^{-x^2}$: for all $k,m \in \Z$, $\int_\R H_k(x) H_m(x) e^{-x^2} \d x = \sqrt{\pi} 2^k k! \mathbf{1}(k=m)$. The \emph{probabilist's} Hermite polynomials $H_{e_k}$ for $k\in \Z$ satisfy $\int_\R H_{e_k}(x) H_{e_m}(x) e^{-x^2/2} \d x = k! \sqrt{2\pi}  \mathbf{1}(k=m)$ and are related to the physicist's polynomials through $H_{e_k}(x)=2^{-k/2}H_k(x/\sqrt{2})$. 
	We will mostly use the \emph{normalized probabilist's} Hermite polynomials $h_k(x) = H_{e_k}(x)/\sqrt{k!}$, $k\in \Z$ for which $\int_\R h_k(x) h_{m}(x) e^{-x^2/2} \d x = \sqrt{2\pi} \mathbf{1}(k=m)$.
	These polynomials are the ones obtained by Gram-Schmidt orthonormalization of the basis $\{1,x,x^2,\ldots\}$ with respect to the inner product $\langle{f},{g}\rangle_{\cN(0,1)}=\E_{x \sim \cN(0,1)}[f(x)g(x)]$. Every function $f \in L_2(\R,\cN(0,1))$ can be uniquely written as $f(x) = \sum_{i \in \Z} a_i h_i(x)$ and we have $\lim_{n \rightarrow \infty}\E_{x \sim \cN(0,1)}[(f(x)- \sum_{i =0}^n a_i h_i(x))^2] = 0$ (see, e.g., \cite{andrews_askey_roy_1999}).
Extending the normalized probabilist's Hermite polynomials to higher dimensions, an orthonormal basis of $L_2(\R^d,\cN(  0,  I_d))$ (with respect to the inner product 
$\langle f, g\rangle = \E_{x \sim \cN(0,   I_d)}[f(x)g(x)]$)  
can be formed by all the products of one-dimensional Hermite polynomials, i.e., $h_{a}(x) = \prod_{i=1}^d h_{a_i}(x_i)$, 
for all multi-indices $a \in \Z^d$ (we are now slightly overloading notation by using multi-indices as subscripts). The total degree of $h_ a$ is $|a|=\sum_{i=1}^d a_i$.

\begin{claim}[Univariate Gaussian shift bound]\label{fact:univariate-shift}
Let $p:\mathbb R\to\mathbb R$ be a polynomial of degree at most $d$ satisfying $\E_{x \sim \cN(0,1)}[p(x)^2]=1$.
Then for every $\mu\in\mathbb R$,
$\E_{x \sim \cN(0,1)}\left[p(x+\mu)^2\right]\le e^{d\mu^2}$.
\end{claim}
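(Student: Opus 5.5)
We prove the claim (for $\mu \neq 0$) by writing $\E_{x\sim\cN(0,1)}[p(x+\mu)^2] = \E_{x\sim\cN(\mu,1)}[p(x)^2]$ and changing measure to the standard Gaussian:
\[
\E_{x\sim\cN(\mu,1)}[p(x)^2] = \E_{x\sim\cN(0,1)}\!\left[p(x)^2 \, e^{\mu x - \mu^2/2}\right].
\]
A direct Cauchy--Schwarz on this expression would require a bound on $\E[p^4]$, i.e.\ Gaussian hypercontractivity. That route gives $3^{d}$-type constants, which do not reduce to $e^{d\mu^2}$ for small $\mu$. We therefore work in the Hermite basis instead.

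First, expand $p = \sum_{k=0}^d a_k h_k$ with $\sum_k a_k^2 = 1$. Then use the translation formula for normalized Hermite polynomials,
\[
h_k(x+\mu) = \sum_{j=0}^k \sqrt{\tbinom{k}{j}}\, \frac{\mu^{k-j}}{\sqrt{(k-j)!}}\, h_j(x),
\]
which follows from $He_k(x+\mu)=\sum_j \binom{k}{j}\mu^{k-j}He_j(x)$ after normalizing. This gives $p(\cdot+\mu) = \sum_j b_j h_j$ with $b = Ma$, where $M$ is the upper-triangular $(d+1)\times(d+1)$ matrix
\[
M_{jk} = \sqrt{\tbinom{k}{j}}\,\frac{\mu^{k-j}}{\sqrt{(k-j)!}} \qquad (j \le k).
\]
Hence $\E[p(x+\mu)^2] = \|Ma\|_2^2 \le \|M\|_{\op}^2$, and the task reduces to showing $\|M\|_{\op}^2 \le e^{d\mu^2}$.

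To bound $\|M\|_{\op}$, I would use the Schur test (or simply the Frobenius norm, applied column by column). For column $k$,
\[
\sum_{j} M_{jk}^2 = \sum_{i=0}^{k}\tbinom{k}{i}\frac{\mu^{2i}}{i!} \le \sum_i \frac{(k\mu^2)^i}{(i!)^2} \le e^{k\mu^2} \le e^{d\mu^2},
\]
using $\binom{k}{i} \le k^i/i!$. So each column has squared norm at most $e^{d\mu^2}$.

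The main obstacle is passing from a per-column bound to the operator norm without picking up an extra factor of $d+1$. Cauchy--Schwarz gives $\|Ma\|^2 \le \sum_k \|M_{\cdot k}\|^2 \sum_k a_k^2$, which loses exactly that factor. To avoid it, I would apply the Schur test with weights chosen to balance rows against columns, or alternatively check that $M^\top M$ has row sums bounded by $e^{d\mu^2}$. If neither gives a clean bound, the fallback is the Mehler/Ornstein--Uhlenbeck semigroup. Write $p(x+\mu) = (T_\rho q)(x/\rho)$-style, where shifting corresponds to a Gaussian convolution, and use that $T_\rho$ expands the $L_2$ norm of degree-$d$ polynomials by at most $\rho^{-d}$ with $\rho = e^{-\mu^2/2}$. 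The required estimate amounts to showing $\|M\|_{\op}^2 \le \rho^{-2d}$.
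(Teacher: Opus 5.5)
There is a genuine gap, and it cannot be closed: the inequality you reduce to, $\|M\|_{\op}^2\le e^{d\mu^2}$, is false. In fact the claim itself is false.

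Take $d=1$ and $p(x)=(1+x)/\sqrt2$, i.e.\ $a=(1,1)/\sqrt2$ and $M=\bigl(\begin{smallmatrix}1&\mu\\0&1\end{smallmatrix}\bigr)$. Then $\E_{x\sim\cN(0,1)}[p(x)^2]=1$, but
$\E_{x\sim\cN(0,1)}[p(x+\mu)^2]=\tfrac12\bigl((1+\mu)^2+1\bigr)=1+\mu+\mu^2/2$.
This exceeds $e^{\mu^2}$ for all small $\mu>0$; for example, $1.105>e^{0.01}\approx1.010$ at $\mu=0.1$. The same $p$ violates $e^{d\mu^2}$ for every $d\ge1$ once $0<\mu\ll 1/d$.

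The reason is that $\frac{\d}{\d\mu}\E[p(x+\mu)^2]\big|_{\mu=0}=2\E[p(x)p'(x)]$ is generally nonzero, whereas $e^{d\mu^2}$ is flat at $\mu=0$. Your remark that the hypercontractive route ``does not reduce to $e^{d\mu^2}$ for small $\mu$'' was already detecting this. Consequently no Schur weighting, no row-sum bound on $M^\top M$, and no semigroup comparison can deliver the stated bound. In the example the row sums of $M^\top M$ are $1+\mu$ and $1+\mu+\mu^2$.

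The paper's proof computes the same per-column quantity you did, $\E[h_k(x+\mu)^2]\le e^{k\mu^2}$. It then writes $\E[p(x+\mu)^2]=\sum_k c_k^2\,\E[h_k(x+\mu)^2]$, i.e.\ it treats the columns of $M$ as orthogonal. That drops the cross terms; already $\E[h_0(x+\mu)h_1(x+\mu)]=\mu\neq0$. This is exactly the obstacle you flagged, so your diagnosis is correct and the paper's argument fails at that step. The multivariate claim built on it repeats the same dropped-cross-term step.

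Your exact matrix formulation, $\E[p(x+\mu)^2]=\|Ma\|_2^2$, is the right framework, and it does prove weaker but true bounds:
\begin{itemize}
\item Cauchy--Schwarz across columns gives $\E[p(x+\mu)^2]\le(d+1)e^{d\mu^2}$.
\item Better, split $M=\sum_{s=0}^d M_s$ into its superdiagonals. Each $M_s$ has at most one nonzero entry per row and per column, so $\|M_s\|_{\op}$ equals its largest entry, $\max_{k}\sqrt{\tbinom{k}{s}/s!}\,|\mu|^s\le d^{s/2}|\mu|^s/s!$. Summing over $s$ gives $\|M\|_{\op}\le e^{\sqrt d\,|\mu|}$, hence $\E[p(x+\mu)^2]\le e^{2\sqrt d\,|\mu|}$.
\end{itemize}
The $d=1$ example shows that growth linear in $|\mu|$ near $\mu=0$ is unavoidable. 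So a corrected statement of this form is the most you can prove; the inequality with $e^{d\mu^2}$ cannot be established by any route.
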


\begin{proof}
Let $h_k(x)=He_k(x)/\sqrt{k!}$ denote the normalized probabilists' Hermite polynomials.
Expand $p$ in the orthonormal Hermite basis:
\[
p(x)=\sum_{k=0}^d c_k\,h_k(x),
\qquad \sum_{k=0}^d c_k^2=\mathbb E_{x \sim \cN(0,1)}[p(x)^2]=1.
\]

\indent The following identity holds for the normalized probabilists' Hermite polynomials:
\[
h_k(x+\mu)
=\sum_{r=0}^k \binom{k}{r}\mu^{\,k-r}\sqrt{\frac{r!}{k!}}\,h_r(x).
\]

Using orthonormality of $\{h_r\}$ and independence of the coefficients in the expansion, we have that
\[
\E_{x \sim \cN(0,1)}[h_k(x+\mu)^2]
=\sum_{r=0}^k \binom{k}{r}^2 \mu^{2(k-r)}\,\frac{r!}{k!}
= \sum_{s=0}^k \binom{k}{s}\frac{\mu^{2s}}{s!},
\]
where $s=k-r$.
Bounding $\binom{k}{s}\le k^s/s!$ yields
\[
\binom{k}{s}\frac{\mu^{2s}}{s!}
\le \frac{(k\mu^2)^s}{s!}.
\]
Therefore
\[
\E_{x \sim \cN(0,1)}[h_k(x+\mu)^2]
\le \sum_{s=0}^\infty \frac{(k\mu^2)^s}{s!}
= e^{k\mu^2}.
\]

\noindent Now expand $p(x+\mu)$:
\[
 \E_{x \sim \cN(0,1)}[p(x+\mu)^2]
= \sum_{k=0}^d c_k^2\,\E_{x \sim \cN(0,1)}[h_k(x+\mu)^2]
\le \sum_{k=0}^d c_k^2 e^{k\mu^2}
\le e^{d\mu^2}\sum_{k=0}^d c_k^2
= e^{d\mu^2}.
\]
\end{proof}

\begin{claim}[Multivariate Gaussian shift bound]\label{fact:multivariate-shift}
Let $p:\mathbb R^n\to\mathbb R$ be a polynomial of total degree at most $D$ satisfying $\E_{x \sim \cN(0,1)}[p(x)^2]=1$.
For a multi-index $\alpha\in\mathbb N^n$, define $h_\alpha(x)=\prod_{i=1}^n h_{\alpha_i}(x_i)$ and $|\alpha|=\sum_{i=1}^n \alpha_i$.
Then for every $\mu\in\mathbb R^n$, it holds $\E_{x \sim \cN(0,1)}[p(x+\mu)^2]\le e^{D\|\mu\|_2^2}$.
\end{claim}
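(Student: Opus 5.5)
The most direct route is to reduce the multivariate statement to the univariate Claim~\ref{fact:univariate-shift} by a rotation. The key point is that both $\E_{x\sim\cN(0,I)}[p(x)^2]$ and the degree of $p$ are invariant under orthogonal changes of coordinates. If $\mu=0$ there is nothing to prove, so assume $\mu\neq 0$. Choose an orthogonal matrix $Q$ whose first row is $\mu/\|\mu\|_2$, and set $q(y):=p(Q^\top y)$. Then $q$ has total degree at most $D$. Since $Q^\top y\sim\cN(0,I)$ when $y\sim\cN(0,I)$, we also have $\E_{y}[q(y)^2]=1$. Finally, $p(x+\mu)=q(Qx+\|\mu\|_2 e_1)$, and $Qx$ is again standard Gaussian. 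So it suffices to prove the claim when $\mu=\|\mu\|_2 e_1$ is a shift along the first coordinate only.

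For this case, expand $q$ in the orthonormal multivariate Hermite basis:
\[
q(y)=\sum_{|\alpha|\le D} c_\alpha h_\alpha(y), \qquad \sum_\alpha c_\alpha^2=1.
\]
Group the terms by the tail multi-index $\beta=(\alpha_2,\dots,\alpha_n)$. This gives
\[
q(y)=\sum_\beta g_\beta(y_1)\,h_\beta(y_2,\dots,y_n),
\]
where each $g_\beta$ is a univariate polynomial of degree at most $D-|\beta|\le D$. Because the shift touches only $y_1$,
\[
q(y+\|\mu\|_2 e_1)=\sum_\beta g_\beta(y_1+\|\mu\|_2)\,h_\beta(y_{2:n}).
\]
Now use independence of $y_1$ from $y_{2:n}$ together with orthonormality of the $h_\beta$ in the remaining coordinates. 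The cross terms vanish and
\[
\E\bigl[q(y+\|\mu\|_2 e_1)^2\bigr]=\sum_\beta \E_{y_1}\bigl[g_\beta(y_1+\|\mu\|_2)^2\bigr].
\]

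Next, apply Claim~\ref{fact:univariate-shift} to each $g_\beta$ after normalizing by $\E[g_\beta^2]$; the case $g_\beta=0$ is trivial. This gives
\[
\E\bigl[g_\beta(y_1+\|\mu\|_2)^2\bigr]\le e^{D\|\mu\|_2^2}\,\E\bigl[g_\beta(y_1)^2\bigr].
\]
Summing over $\beta$, and using $\sum_\beta \E[g_\beta(y_1)^2]=\sum_\alpha c_\alpha^2=1$, yields the bound $e^{D\|\mu\|_2^2}$.

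The only step that needs care is the rotation-invariance reduction: we must check that the composition $p\circ Q^\top$ preserves both the total degree and the Gaussian $L_2$ norm, which holds because $Q$ is linear and orthogonal. As a fallback that avoids rotation, one can expand each $h_{\alpha_i}(x_i+\mu_i)$ directly with the shift identity. The orthonormal expansion then factorizes across coordinates, giving $\prod_i e^{\alpha_i\mu_i^2}\le e^{D\|\mu\|_\infty^2}$ per basis element. This is still at most $e^{D\|\mu\|_2^2}$, but the cross terms between different $\alpha$ no longer vanish, so this route requires extra bookkeeping. For that reason the rotation route is preferred.
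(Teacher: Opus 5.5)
Your reduction is sound as far as it goes. The rotation is legitimate, and grouping by the tail multi-index $\beta$ correctly kills all cross terms between different $\beta$. So you have shown that the multivariate statement is exactly as strong as Claim~\ref{fact:univariate-shift}, with the same constant. The gap is that Claim~\ref{fact:univariate-shift} is false, and therefore so is the statement you are trying to prove.

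The proof of the univariate claim writes $\E[p(x+\mu)^2]=\sum_k c_k^2\,\E[h_k(x+\mu)^2]$. This drops the terms $\E[h_j(x+\mu)h_k(x+\mu)]$ for $j\neq k$, which do not vanish; for example, $\E[h_0(x+\mu)h_1(x+\mu)]=\mu$. This is exactly the issue you flagged for the coordinatewise route. It is not extra bookkeeping; it is where the bound breaks. Rotating only concentrates the problem in the single shifted coordinate, and you then import it unchanged through the univariate claim.

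Concretely, $p(x)=(1+x_1)/\sqrt{2}$ has degree $D=1$ and $\E[p(x)^2]=1$. For $\mu=s e_1$ one gets $\E[p(x+\mu)^2]=\tfrac{(1+s)^2+1}{2}=1+s+\tfrac{s^2}{2}$. This exceeds $e^{s^2}=1+s^2+O(s^4)$ for small $s>0$; at $s=1/2$ it equals $13/8>e^{1/4}\approx 1.284$. The paper's own proof fails at the same point: its final display sums $c_\alpha^2\,\E[h_\alpha(x+\mu)^2]$ with no cross terms.

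What survives is a weaker but dimension-free bound, and your reduction is the right tool for it. In one dimension, the single-polynomial computation $\E[h_k(x+s)^2]=\sum_{j\le k}\binom{k}{j}\tfrac{s^{2j}}{j!}\le e^{ks^2}$ is correct. So for $g=\sum_{k\le D}c_kh_k$, the triangle inequality and Cauchy--Schwarz give
\[
\E\bigl[g(x+s)^2\bigr]\le\Bigl(\sum_{k\le D}|c_k|\,\E\bigl[h_k(x+s)^2\bigr]^{1/2}\Bigr)^2\le \E\bigl[g(x)^2\bigr]\sum_{k=0}^{D}e^{ks^2}\le (D+1)\,e^{Ds^2}\,\E\bigl[g(x)^2\bigr].
\]
Using this in your grouping step, in place of Claim~\ref{fact:univariate-shift}, yields $\E[p(x+\mu)^2]\le (D+1)e^{D\|\mu\|_2^2}$ for every $n$.

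Patching the paper's coordinatewise route the same way would instead cost a dimension-dependent factor, namely the number $\binom{n+D}{D}$ of multi-indices. That would be useless for \Cref{lem:tensor_bound}, which needs a bound independent of $d$. The extra factor $D+1$ from your route is harmless there.
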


\begin{proof}
Expand $p$ in the multivariate orthonormal Hermite basis:
\[
p(x)=\sum_{|\alpha|\le D} c_\alpha\,h_\alpha(x),
\qquad \sum_{|\alpha|\le D} c_\alpha^2=\E_{x \sim \cN(0,1)}[p(x)^2]=1.
\]
\noindent Because $h_\alpha(x)=\prod_i h_{\alpha_i}(x_i)$, and the coordinates of $x$ are independent, we have
\[
\E_{x \sim \cN(0,1)}[h_\alpha(x+\mu)^2]
= \prod_{i=1}^n \E_{x \sim \cN(0,1)}[h_{\alpha_i}(x_i+\mu_i)^2].
\]

\noindent Applying the univariate bound of \Cref{fact:univariate-shift} to each coordinate,
\[
\E_{x \sim \cN(0,1)}[h_{\alpha_i}(x_i+\mu_i)^2]\le e^{\alpha_i\mu_i^2},
\]
so
\[
\E_{x \sim \cN(0,1)}[h_\alpha(x+\mu)^2]
\le \exp\!\left(\sum_{i=1}^n \alpha_i\mu_i^2\right)
\le \exp\!\left(|\alpha|\,\|\mu\|_2^2\right)
\le e^{D\|\mu\|_2^2}.
\]

\noindent Finally,
\[
\mathbb E[p(X+\mu)^2]
=\sum_{|\alpha|\le D} c_\alpha^2\,\mathbb E[h_\alpha(X+\mu)^2]
\le \sum_{\alpha} c_\alpha^2\, e^{D\|\mu\|_2^2}
= e^{D\|\mu\|_2^2}.
\]
\end{proof}

\begin{claim}\label{cl:bound_hermite_norm}
    Let $H_k$ denote the $k$-th Hermite tensor for $d$ dimensions. Then, the following bound holds: $\|H_k(x)\|_2 \leq d^{k/2}(1+\|x\|^k)2^{O(k)}$.
\end{claim}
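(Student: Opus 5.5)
We bound the Frobenius norm of $H_k(x)$ by bounding each entry and then counting entries. By \Cref{def:Hermite-tensor}, each entry $(H_k(x))_{i_1,\ldots,i_k}$ is $\frac{1}{\sqrt{k!}}$ times a sum over partitions $P$ of $[k]$ into singletons and pairs. The contribution of a fixed partition with $s$ singletons is, in absolute value, at most $\prod_{\{c\}\in P}|x_{i_c}|$, since every Kronecker factor $I_{i_a,i_b}$ lies in $\{0,1\}$.

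\textbf{Step 1: bound a single partition term.} For any multi-index, $\prod_{c}|x_{i_c}| \le \|x\|_\infty^{s} \le \|x\|_2^{s}$. Since $0\le s\le k$, we have $\|x\|_2^s \le 1+\|x\|_2^k$: if $\|x\|\le 1$ the power is at most $1$, and otherwise it is at most $\|x\|^k$.

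\textbf{Step 2: count partitions.} The number of partitions of $[k]$ into blocks of size $1$ and $2$ is the number of involutions of $[k]$. This is at most $k!$, and in fact it is at most $2^{O(k)}\sqrt{k!}$: the involution count $I_k$ grows like $(k/e)^{k/2}e^{\sqrt{k}}$ up to lower-order factors, which is $\sqrt{k!}\,2^{O(k)}$. Combining with Step 1, every entry satisfies
\[
|(H_k(x))_{i_1,\ldots,i_k}| \le \frac{I_k}{\sqrt{k!}}(1+\|x\|_2^k)\le 2^{O(k)}(1+\|x\|_2^k).
\]
If we only had the crude bound $I_k\le k!$, we would obtain an extra $\sqrt{k!}$ factor. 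So the sharper count is the one key point, and I would justify it either with the standard asymptotic, or with the elementary bound $I_k=\sum_j \binom{k}{2j}(2j-1)!! \le 2^k \max_j (2j-1)!!\cdot$(correction). The cleanest route is $\binom{k}{2j}(2j-1)!!\le \binom{k}{2j}\sqrt{(2j)!}\le 2^k\sqrt{k!}$, which uses $(2j-1)!!\le\sqrt{(2j)!}$ and $(2j)!\le k!$. Summing over at most $k$ values of $j$ gives $I_k\le k2^k\sqrt{k!}$.

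\textbf{Step 3: sum over entries.} There are $d^k$ entries, so
\[
\|H_k(x)\|_2 \le \sqrt{d^k}\,\max_{i}|(H_k(x))_i| \le d^{k/2}(1+\|x\|^k)2^{O(k)},
\]
which is the claim.

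The main obstacle is Step 2, avoiding a stray $\sqrt{k!}$. The inequality $\binom{k}{2j}(2j-1)!!\le 2^k\sqrt{k!}$ handles it.
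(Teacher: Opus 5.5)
Your proof is correct and follows essentially the same route as the paper: expand $H_k(x)$ over the partitions of $[k]$ into singletons and pairs, apply the triangle inequality, and use that the number of such partitions divided by $\sqrt{k!}$ is $2^{O(k)}$. The only difference is where the triangle inequality is applied. You bound each entry and pay $\sqrt{d^k}$ for the number of entries, while the paper applies it at the tensor level to the terms $I^{\otimes t}\otimes x^{\otimes(k-2t)}$ and gets the sharper intermediate factor $d^{t/2}\|x\|_2^{k-2t}$ before relaxing to the same final bound. Your explicit estimate $\binom{k}{2j}(2j-1)!!\le 2^k\sqrt{k!}$, via $(2j-1)!!\le\sqrt{(2j)!}$, also gives a clean justification of the combinatorial step that the paper only sketches.
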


\begin{proof}

For a degree-$k$ tensor $A$, we use $A^\pi$ to denote the matrix that $A^{\pi}_{i_1,\ldots,i_k} = A_{\pi(i_1,\ldots,i_k)}$. Note that $\|A\|_2 = \|A^{\pi}\|_2$. Then from the definition of Hermite tensor, we have that
\begin{align*}
    H_k(x) =  \frac{1}{\sqrt{k!}} \sum_{t=1}^{\lfloor k/2 \rfloor} \sum_{\text{Permutation $\pi$ of $[k]$}} \frac{1}{2^t t! (k-2t)!} \left( I^{\otimes t}  x^{\otimes (k-2t)}\right)^{\pi}.
\end{align*}

Thus the norm is

\begin{align*}
    \left\| H_k(x) \right\|_2
    &= \left\| \frac{1}{\sqrt{k!}} \sum_{t=1}^{\lfloor k/2 \rfloor} \sum_{\text{Permutation $\pi$ of $[k]$}} \frac{1}{2^t t! (k-2t)!} \left( I^{\otimes t}  x^{\otimes (k-2t)}\right)^{\pi}   \right\|_2 \\
    &\leq \sum_{t=1}^{\lfloor k/2 \rfloor}   \frac{\sqrt{k!}}{2^t t! (k-2t)!} \max\left( \|I^{\otimes t}\|_2 \|x\|_2^{k-2t},1  \right)\\
    &\leq \sum_{t=1}^{\lfloor k/2 \rfloor}   \frac{\sqrt{k!}}{2^t t! (k-2t)!} \max(d^{t/2}\|x\|_2^{k-2t},1) \\
    &\leq \sum_{t=1}^{\lfloor k/2 \rfloor}   \frac{\sqrt{k!}}{2^t t! (k-2t)!} \left( d^{t/2}\|x\|_2^{k-2t} + 1 \right) \\
    &\leq \sum_{t=1}^{\lfloor k/2 \rfloor}   \frac{\sqrt{k!}}{2^t t! (k-2t)!} \left( d^{t/2}\max(\|x\|_2^{k},1) + 1 \right)\\
    &\leq  2d^{k/2}(1+\|x\|^k)\sum_{t=1}^{\lfloor k/2 \rfloor}   \frac{\sqrt{k!}}{2^t t! (k-2t)!}.
\end{align*}
One can see that the denominator is minimized when $t = k/2-O(\sqrt{k})$. Using that, we have that the right hand side above is at most $d^{k/2}(1+\|x\|^k)2^{O(k)}$.
    
\end{proof}

We restate and prove the following concentration of Hermite moments.
\HERMITECONCENTRATION*
\begin{proof}
    Consider one entry $\hat T_{i_1i_2\cdots i_k}$ of the estimator. It holds
    \begin{align}
        \Var(\hat T_{i_1 i_2\cdots i_k}) 
        &= \frac{1}{n}\Var_{x \sim P'}(H_k(x)_{i_1 i_2\cdots i_k}) \notag \\
        &\leq \frac{1}{n} \E_{x \sim P'}[(H_k(x)_{i_1 i_2\cdots i_k})^2] \notag \\
        &\leq \frac{1}{n(1-\eps)} \E_{x \sim \cN(\mu,I)}\left[(H_k(x)_{i_1 i_2\cdots i_k})^2 \right] \tag{by \Cref{def:cont_model}}\\
        &\leq \frac{1}{n(1-\eps)} \E_{x \sim \cN(\mu,I)}\left[\|H_k(x)\|_2^2\right] \notag\\
        &\leq \frac{d^{k/2}2^{O(k)}}{n(1-\eps)} \left( 1+ \E_{x \sim \cN(\mu,I)}\left[ \|x\|_2^k \right] \right) \tag{by \Cref{cl:bound_hermite_norm}}\\
        &\leq \frac{d^{k}2^{O(k)}k^{k/2}}{n(1-\eps)}, \label{eq:var-bound}
    \end{align}
    where the last step can be shown as follows:
    \begin{align*}
        \E_{x \sim \cN(\mu,I)}\left[ \|x\|_2^k \right]
        &= \E_{z \sim \cN(0,I)}\left[ \|z + \mu\|_2^k \right]\\
        &\leq \E_{z \sim \cN(0,I)}\left[ (\|z\|_2 + \|\mu\|_2)^k \right]\\
        &\leq 2^{k-1}\left( \E_{z \sim \cN(0,I)}\left[ \|z\|_2^k \right] + \|\mu\|_2^k \right)\\
        &\leq 2^{O(k)}\left( (kd)^{k/2} + 2^{O(k) }\log(1/(1-\eps))^{k/2} \right) \tag{using \Cref{fact:exp-norm}}\\
        &\leq 2^{O(k)}k^{k/2}d^{k/2}\log(1/(1-\eps))^{k/2}.
    \end{align*}

    Having the variance bound of inequality \eqref{eq:var-bound}, an application of Chebyshev's inequality yields that if the number of samples is $n > C \frac{d^{2k}2^{O(k)}k^{k/2}\log(1/(1-\eps))^{k/2}}{(1-\eps)\eta^2 \tau'}$ then
    \begin{align*}
        \Pr\left[ | \hat T_{i_1 i_2\cdots i_k} -  T_{i_1 i_2\cdots i_k} | > \frac{\eta}{d^{k/2}}  \right] \leq \frac{d^{2k}2^{O(k)}k^{k/2}\log(1/(1-\eps))^{k/2}}{n(1-\eps)\eta^2} \leq \tau'.
    \end{align*}
     We will use $\tau' = \tau d^{-k}$. By union bound the probability of having $| \hat T_{i_1 i_2\cdots i_k} -  T_{i_1 i_2\cdots i_k} | \leq \frac{\eta}{d^{k/2}}$ for all entries simultaneously is at least $1-\tau$. In that event we have that $\| \hat T - T\|_2 = \sqrt{\sum_{i_1\cdots i_k}| \hat T_{i_1 i_2\cdots i_k} -  T_{i_1 i_2\cdots i_k} |^2} \leq \eta$ which completes the proof.
\end{proof}

\section{Sample Complexity Lower Bound}\label{app:sample_lb}

We restate and prove the following result.

\begin{restatable}[Sample complexity lower bound]{theorem}{SAMPLELB} \label{thm:sample_compl}
    For every $\eps \in (0,1), \delta  \in (0,\log^{-1/2}(1+\tfrac{\eps}{1-\eps}))$ and $ n\in \Z_+$ the following holds.
    If $\cA$ is an algorithm that uses samples from an $\eps$-corrupted version of a Gaussian $\cN(\mu,1)$ and outputs $\hat{\mu}$ such that $\|\mu - \hat{\mu}\|_2 \leq \delta$ with probability at least $0.9$ then the sample complexity of $\cA$ is
    \begin{equation}\label{eq:lowerbound}
        n \geq \frac{1}{1-\eps}  \exp\left( \Omega\left( \frac{\log\left(1+\tfrac{\eps}{1-\eps}\right)}{\delta} \right)^2  \right).
    \end{equation}
\end{restatable}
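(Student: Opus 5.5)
The plan is to use Le Cam's method (\Cref{lem:LeCam}) on two hypotheses: $\cN(0,1)$ and $\cN(2\delta',1)$ with $\delta' $ slightly larger than $\delta$ (say $\delta'=\delta$, testing means $0$ versus $2\delta$). Each is corrupted so that the two $\eps$-corrupted laws (including the symbol $\perp$) are very close in total variation. Any estimator with accuracy $\delta$ and success probability $0.9$ distinguishes the two, so the $n$-fold product distributions must have $\dtv\geq 0.8$. By subadditivity of TV over products, $n\,\dtv(\tilde P_0,\tilde P_1)\ge 0.8$, so it suffices to build the pair with $\dtv(\tilde P_0,\tilde P_1)\le (1-\eps)\exp(-\Omega(\log(1+\tfrac{\eps}{1-\eps})/\delta)^2)$.

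\textbf{Construction.} Work symmetrically around the midpoint. Set $p_\pm(x)=\phi(x\mp\delta)$ and $L:=\log(1+\tfrac{\eps}{1-\eps})=\log\tfrac{1}{1-\eps}$. Choose the adversary's functions $f_0\in[(1-\eps)p_-,p_-]$ and $f_1\in[(1-\eps)p_+,p_+]$ pointwise by $f_0(x)=f_1(x)=\min(p_-(x),p_+(x))$ wherever this is feasible. That is, we need $\min(p_-,p_+)\ge(1-\eps)\max(p_-,p_+)$, i.e.\ $e^{-2\delta|x|}\ge 1-\eps$, i.e.\ $|x|\le B:=L/(2\delta)$. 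Outside $[-B,B]$, put $f_0=p_-$ and $f_1=p_+$ (or anything feasible). The unnormalized sub-densities then agree on $[-B,B]$. The probability of $\perp$ equals $1-\int f_i$, so the TV between the full corrupted laws (on $\R\cup\{\perp\}$) is
\[
\tfrac12\int|f_0-f_1|+\tfrac12\Bigl|\int f_0-\int f_1\Bigr|\le \int_{|x|>B}(p_-+p_+)\,\d x\lesssim e^{-(B-\delta)^2/2}.
\]
Using $\delta\lesssim L^{1/2}$ (from the hypothesis $\delta<\log^{-1/2}(\cdot)$, which forces $B\gg\delta$), this gives $e^{-\Omega(L/\delta)^2}$.

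\textbf{The $\tfrac{1}{1-\eps}$ factor.} This is the main subtlety. For $\eps$ bounded away from $1$ the factor is absorbed, and the bound above already yields $n\ge \exp(\Omega(L/\delta)^2)$. When $\eps\to1$, I would refine the construction by also applying a common MCAR-type deletion. The visible mass of both $f_i$ is then at most about $1-\eps'$, and the tail parts can be scaled by the minimal allowed factor. Concretely, take $f_i=(1-\eps)p_\pm$ outside $[-B',B']$ with a slightly smaller $B'$ (using budget $\eps/2$, as in \Cref{lem:match_inside_interval}). Then $\int_{|x|>B'}|f_0-f_1|\lesssim(1-\eps)e^{-\Omega(B'^2)}$. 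Inside, the agreeing region carries no TV. The mass discrepancy is handled as in \Cref{lem:match_inside_interval} by adjusting $f_i$ just outside the interval within the sandwich. This makes the total TV $O((1-\eps)e^{-\Omega(L/\delta)^2})$, where $L/\delta$ changes only by constants. Checking that this rebalancing costs only a $(1-\eps)$-scaled tail is the step I expect to need the most care. Combining with Le Cam then gives \eqref{eq:lowerbound}.
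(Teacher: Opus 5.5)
Your core argument is the paper's: Le Cam, sub-densities that coincide with $\min(p_-,p_+)$ on the window where $\min\ge(1-\eps)\max$, and a Gaussian tail bound outside it. You bound the TV of the full laws on $\R\cup\{\perp\}$ and use subadditivity, while the paper couples the $\perp$-indicators and bounds the conditional laws; these are equivalent. You are also right that this construction only gives $\dtv\lesssim e^{-\Omega(L/\delta)^2}$, with no $\frac{1}{1-\eps}$. The paper's derivation does not supply that factor either: dividing by $\alpha\ge 1-\eps$ yields $\dtv(Q_1^{\otimes n},Q_2^{\otimes n})\le\frac{n}{1-\eps}e^{-\Omega(\cdot)}$, which via Le Cam gives only $n\gtrsim(1-\eps)e^{\Omega(\cdot)}$.

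The gap is your refinement, which is left unspecified exactly where the content lies. You never say what the common value on $[-B',B']$ is. Your one concrete pointer, \Cref{lem:match_inside_interval}, has window $\frac{1}{\delta}\log(1+\frac{\eps/2}{1-\eps/2})\le\frac{\log 2}{\delta}$, which stays bounded as $\eps\to1$ while $L\to\infty$. So for that choice, ``$L/\delta$ changes only by constants'' holds only when $\eps$ is bounded away from $1$, which is exactly where you said no refinement is needed. In addition, the planned rebalancing (moving $f_i$ inside the sandwich just outside the window) can create discrepancies of size $p_\pm$ rather than $(1-\eps)p_\pm$. That would destroy the very factor you are after, and you leave it unchecked.

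The fix is short. Keep the construction invariant under $x\mapsto -x$, which swaps the hypotheses; then $\int f_0=\int f_1$ and no rebalancing is needed. Put the common value at the floor: $f_0=f_1=(1-\eps)\max(p_-,p_+)$ on $[-B,B]$, and $f_0=(1-\eps)p_-$, $f_1=(1-\eps)p_+$ outside. The middle choice is feasible exactly because $\max/\min=e^{2\delta|x|}\le\frac{1}{1-\eps}$ there. Then
\[
\dtv(\tilde P_0,\tilde P_1)=\frac{1-\eps}{2}\int_{|x|>B}|p_--p_+|\,\d x\le 2(1-\eps)\Pr_{z\sim\cN(0,1)}[z>B-\delta],
\]
which gives \eqref{eq:lowerbound} with no loss in $L/\delta$. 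Your MCAR reading also works if the common value is $(1-\eps/2)\min(p_-,p_+)$: its window $\frac{1}{2\delta}\log\frac{1-\eps/2}{1-\eps}$ is at least $B/2$ by convexity of $\eps\mapsto\log\frac{1}{1-\eps}$. That computation is precisely what your proposal is missing.

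Two further remarks. First, if $\delta^2\le cL$ for a small constant $c$, then $\log\frac{1}{1-\eps}=L\le c(L/\delta)^2$ is absorbed into the exponent. In that case your first construction already proves the theorem, so the refinement matters only when $\delta^2=\Theta(L)$. Second, in that regime your claim that the hypothesis ``forces $B\gg\delta$'' does not hold in general. With separation $2\delta$ you need $\delta^2\le cL$ for some $c<1/2$ to get $B-\delta=\Omega(B)$; alternatively, test means $\delta$ apart at accuracy $\delta/2$, as the paper does.
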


\begin{remark}
Some remarks follow:
    \begin{itemize}
        \item The bound in \Cref{eq:lowerbound} agrees with the sample complexity upper and lower bound shown in \cite{MVB+2024} (Theorems 5 and 6 therein): Our model coincides with theirs when $\sigma = 1$ and $q = 1$. Solving for the second term (which is the dominant term) in Theorem~5 or 6 to be equal to $\delta^2$ yields (up to constant factors) the same expression as the right-hand side of \Cref{eq:lowerbound}.
        \item (Small $\eps$ regime) When $\eps \to 0$ the bound becomes $\exp(\Omega(\eps/\delta)^2)$.
        \item (Large $\eps$ regime) When $\eps \to 1$ the bound behaves like $\exp\left(\Omega(\tfrac{1}{\delta}\log(\tfrac{1}{1-\eps}))^2\right)$.
    \end{itemize}
\end{remark}

The argument for showing the theorem consists of showing that there exist two distributions in this contamination model that are close in total variation distance.

\begin{lemma}\label{lem:coupling}
    For every $\eps \in (0,1), \delta>0, n\in \Z_+$ the following holds. Consider the two Gaussians $P_1 = \cN(-\delta/2,1)$ and $P_2 = \cN(\delta/2,1)$. There exist distributions $Q_1,Q_2$ on $\R$ such that
    \begin{itemize}
        \item $Q_1$ is an $\eps$-corrupted version of $P_1$ according to \Cref{def:cont_model} and $Q_2$ is $\eps$-corrupted version of $P_2$.
        \item $\DTV(Q_1^{\otimes n},Q_2^{\otimes n}) \leq \frac{n}{1-\eps}  e^{-\Omega\left( \log(1+\eps/(1-\eps))/\delta \right)^2}$.
    \end{itemize}
\end{lemma}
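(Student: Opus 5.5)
The plan is to exhibit two corruptions whose visible densities \emph{coincide} on a large interval $[-B,B]$ and whose missing probabilities are equal. Then $Q_1$ and $Q_2$ differ only on the Gaussian tails, and the $n$-fold bound follows from subadditivity of total variation over product measures.

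Write $p_1(x)=\phi(x+\delta/2)$ and $p_2(x)=\phi(x-\delta/2)$, so that $p_2(x)/p_1(x)=e^{\delta x}$ and $p_1(x)=p_2(-x)$. Set $B:=\frac{1}{\delta}\log\left(1+\frac{\eps}{1-\eps}\right)=\frac1\delta\log\frac{1}{1-\eps}$.

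\textbf{Step 1 (construction).} A common value $h(x)$ is admissible for both corruptions iff $(1-\eps)\max(p_1,p_2)\le h\le \min(p_1,p_2)$. This interval is nonempty iff $e^{\delta|x|}\le 1/(1-\eps)$, i.e.\ iff $|x|\le B$. So I will define, for $i=1,2$,
\[
f_i(x):=\min(p_1(x),p_2(x))\ \text{ for } |x|\le B,\qquad f_i(x):=p_i(x)\ \text{ for } |x|>B .
\]
By the above, $(1-\eps)p_i\le f_i\le p_i$ everywhere, so $f_i$ defines an $\eps$-corruption $Q_i$ of $P_i$ in the sense of \Cref{def:cont_model}. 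Under the reflection $x\mapsto -x$, $\min(p_1,p_2)$ is invariant and $p_1\mathbb 1(|x|>B)$ maps to $p_2\mathbb 1(|x|>B)$. Hence $\int f_1=\int f_2$, so $Q_1$ and $Q_2$ put the same mass on $\perp$.

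\textbf{Step 2 (single-sample TV).} View $Q_i$ as a measure on $\R\cup\{\perp\}$ with density $f_i$ on $\R$ and atom $1-\int f_i$ at $\perp$. The atoms agree and the densities agree on $[-B,B]$, so
\[
\DTV(Q_1,Q_2)=\tfrac12\int_{|x|>B}|p_1-p_2|\le \Pr_{P_1}[|x|>B]+\Pr_{P_2}[|x|>B]\le 4\Pr_{Z\sim\cN(0,1)}[Z>B-\delta/2].
\]
When $B\ge\delta$ we have $B-\delta/2\ge B/2$, and the Gaussian tail bound gives $\DTV(Q_1,Q_2)\lesssim e^{-B^2/8}=e^{-\Omega(\log(1+\eps/(1-\eps))/\delta)^2}$. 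In the degenerate regime where $B$ is $O(1)$ (in particular if $B<\delta$), the right-hand side of the lemma is at least a constant once the constant hidden in the $\Omega(\cdot)$ is taken small enough. That bound holds because $\DTV\le 1$ and the extra factor $1/(1-\eps)\ge 1$ only helps. I expect this bookkeeping of constants and regimes to be the only slightly delicate point; otherwise the factor $1/(1-\eps)$ is pure slack.

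\textbf{Step 3 (tensorization).} Use $\DTV(Q_1^{\otimes n},Q_2^{\otimes n})\le n\,\DTV(Q_1,Q_2)$, which follows by a hybrid argument or by coupling coordinatewise via \Cref{fact:maximal}. This gives the claimed bound $\frac{n}{1-\eps}e^{-\Omega(\log(1+\eps/(1-\eps))/\delta)^2}$.

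The main conceptual step is Step 1: choosing the common function as the pointwise minimum on $[-B,B]$ and recognizing that $B$ is exactly where the likelihood ratio $e^{\delta x}$ hits $1/(1-\eps)$. Symmetry then gives the equal $\perp$-mass for free, which avoids the continuity argument used in \Cref{lem:match_inside_interval}.
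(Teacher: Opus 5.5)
Your proof follows the paper's. Both set the two visible densities equal to $\min(p_1,p_2)$ on $[-B,B]$ (the paper's $t$ is your $B$) and use the reflection $x\mapsto -x$ to equalize the $\perp$-mass. Both then bound the discrepancy by Gaussian tails beyond $B-\delta/2$ and pass to $n$ samples by a coordinatewise coupling and a union bound. Two of your choices are slightly cleaner than the paper's: taking $f_i=p_i$ off $[-B,B]$, and computing $\DTV(Q_1,Q_2)$ directly on $\R\cup\{\perp\}$. The paper instead goes through the conditional laws $\tilde Q_i$ and a Bernoulli-then-maximal coupling, which is where its factor $1/\alpha\le 1/(1-\eps)$ comes from.

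The step that does not work as written is the regime bookkeeping at the end of Step 2. \emph{The right-hand side is at least a constant} together with $\DTV\le 1$ does not give $\DTV\le$ RHS. For bounded $B$ and any fixed hidden constant $c>0$ you have $e^{-cB^2}<1$, while your bound $4\Pr[Z>B-\delta/2]$ can exceed $1$ when $B$ is small.

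Also, $B<\delta$ does not force $B=O(1)$ (let $\eps\to 1$). In that regime the factor $1/(1-\eps)$ is essential, not slack. For fixed $\eps$ and $\delta\to\infty$ one has $1-\DTV(Q_1,Q_2)=e^{-\Omega(\delta^2)}$, whereas $1-e^{-cB^2}=\Theta(\delta^{-2})$. So the bound $n e^{-cB^2}$ without that factor is false already at $n=1$.

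Both points are easy to repair:
\begin{itemize}
\item For $B\ge\delta$, use $0\le p_2-p_1\le p_2$ on $(B,\infty)$ and $0\le p_1-p_2\le p_1$ on $(-\infty,-B)$. This gives $\DTV(Q_1,Q_2)\le \Pr[Z>B-\delta/2]\le e^{-(B-\delta/2)^2/2}\le e^{-B^2/8}$, with no leading constant to absorb.
\item For $B<\delta$, note $\frac{1}{1-\eps}=e^{B\delta}\ge e^{B^2}$, so $\frac{n}{1-\eps}e^{-B^2/8}\ge 1$ and the claim is trivial.
\end{itemize}
With these two cases your argument proves the lemma for every $\delta>0$ with hidden constant $1/8$. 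The paper's proof, by contrast, only covers $B\ge\delta$: it tacitly uses the hypothesis $\delta^2\le\log(1+\eps/(1-\eps))$ of \Cref{thm:sample_compl}.
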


\noindent We first show how \Cref{thm:sample_compl} follows given the lemma

\begin{proof}[Proof of \Cref{thm:sample_compl}]
    Define the following hypothesis testing problem: With probability $1/2$ all samples come from $Q_1$ and with probability $1/2$ all samples come from $Q_2$. If a mean estimator existed that had accuracy $\delta/2$ with probability $0.9$ then we would be able to solve the testing problem with probability $0.9$. However by Le Cam's lemma (\Cref{lem:LeCam}), every testing algorithm has probability of failure at least $\frac{1}{2}\left( 1 -  \DTV(Q_1^{\otimes n},Q_2^{\otimes n})\right)$. In order for that probability of failure to be less than $0.1$ we need $n > \frac{1}{1-\eps}  e^{\Omega\left( \log(1+\eps/(1-\eps))/\delta \right)^2}$.
\end{proof}

\begin{proof}[Proof of \Cref{lem:coupling}]
    Fix the threshold $t := \frac{\log\left(1+\tfrac{\eps}{1-\eps}\right)}{\delta}$ through this proof. Let $p_1(x),p_2(x)$ denote the pdfs of $P_1,P_2$.
    We define the functions $q_1,q_2$ as shown below:
    \begin{align*}
    q_1(x) &=
    \begin{cases}
        p_1(x)         & x \in (0,\infty) \\
        p_2(x)         & x \in [-t,0] \\
        (1-\eps)p_1(x) & x \in (-\infty,-t)
    \end{cases}
    &\qquad
    q_2(x) &=
    \begin{cases}
        (1-\eps)p_2(x)         & x \in (t,\infty) \\
        p_1(x)         & x \in [0,t] \\
        p_2(x) & x \in (-\infty,0)
    \end{cases}
    \end{align*}

    \begin{claim}
        It holds $(1-\eps)p_1(x) \leq q_1(x) \leq p_1(x)$ and $(1-\eps)p_2(x) \leq q_2(x) \leq p_2(x)$ for all $x \in \R$.
    \end{claim}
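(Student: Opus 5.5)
The claim can be checked pointwise on each piece of the definitions of $q_1$ and $q_2$. Every piece is either $p_i$, $(1-\eps)p_i$, or the \emph{other} Gaussian density, so only the pieces of the third kind need an argument. The key tool is the explicit likelihood ratio of the two shifted Gaussians:
\begin{align*}
\frac{p_2(x)}{p_1(x)} = \exp\!\left(\frac{(x+\delta/2)^2-(x-\delta/2)^2}{2}\right) = e^{\delta x}.
\end{align*}

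For $q_1$, the pieces on $(0,\infty)$ (where $q_1=p_1$) and on $(-\infty,-t)$ (where $q_1=(1-\eps)p_1$) satisfy $(1-\eps)p_1\le q_1\le p_1$ trivially, since $\eps\in(0,1)$. On $[-t,0]$ we have $q_1=p_2=e^{\delta x}p_1$. Since $x\le 0$, this gives $e^{\delta x}\le 1$, which is the upper bound. For the lower bound, $x\ge -t$ gives
\begin{align*}
e^{\delta x}\ge e^{-\delta t} = \left(1+\tfrac{\eps}{1-\eps}\right)^{-1} = 1-\eps,
\end{align*}
using the choice $t=\log(1+\tfrac{\eps}{1-\eps})/\delta$ and the identity $1+\tfrac{\eps}{1-\eps}=\tfrac{1}{1-\eps}$.

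For $q_2$ the argument is symmetric. The pieces $(1-\eps)p_2$ on $(t,\infty)$ and $p_2$ on $(-\infty,0)$ are immediate. On $[0,t]$ we have $q_2=p_1=e^{-\delta x}p_2$ with $e^{-\delta x}\in[e^{-\delta t},1]=[1-\eps,1]$.

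The only step requiring any care is recognizing that $t$ was chosen exactly so that $e^{-\delta t}=1-\eps$; this is what makes the swapped pieces remain within the allowed multiplicative band $[1-\eps,1]$. The rest is bookkeeping, including that the interval endpoints $0$ and $\pm t$ are covered consistently by the case split.
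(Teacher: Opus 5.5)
Your proof is correct and takes the same approach as the paper: a piecewise check in which only the swapped-density pieces need an argument, handled via the explicit ratio $p_2(x)/p_1(x)=e^{\delta x}$ and the identity $e^{-\delta t}=1-\eps$. The paper writes out only the lower bound on $[-t,0]$ and calls the rest identical or trivial, whereas you also verify the upper bound $p_2\le p_1$ for $x\le 0$ and the symmetric case for $q_2$ on $[0,t]$.
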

    \begin{proof}
        We do the check for the first part of the claim involving $p_1$ and $q_1$. The check for the second part is identical.
        The only non-trivial part of check is showing that $p_2(x) \geq (1-\eps)p_1(x)$ for all $x \in [-t,0]$.

        Recall that $p_2$ is the pdf of $\cN(0,\delta/2)$ and $p_1$ is the pdf of $\cN(-\delta/2,1)$. We thus want to solve for $(1-\eps) p_1(x) \leq p_2(x)$. Plugging in the pdf of the two Gaussians
        \begin{align*}
            \exp\left( -\frac{(x-\delta/2)^2}{2} + \frac{(x+\delta/2)^2}{2} \right) \geq 1-\eps
        \end{align*}
        Solving the above yields $x \geq -\log(1 + \tfrac{\eps}{1-\eps})/\delta$. 
        Therefore, $p_2(x) \geq (1-\eps)p_1(x)$ for all $x \in [-t,0]$.
    \end{proof}

    Since $(1-\eps)p_1(x) \leq q_1(x) \leq p_1(x)$ the function $q_1$ induces a definition of an $\eps$-corruption of $P_1$ according to \Cref{def:cont_model}. That is, a sample from $Q_1$ is generated according to the following procedure: 
     With probability $\int_{\R} q_1(x) \d x$ the sample is drawn $q_1(x)/\int_{\R} q_1(x) \d x$, and with probability $1-\int_{\R} q_1(x) \d x$ the sample is set to the special symbol $\perp$.

    Similarly, $(1-\eps)p_2(x) \leq q_2(x) \leq p_2(x)$ and $q_2$ induces an $\eps$-corrupted version of $P_1$, that we denote by $Q_2$. Samples from $Q_2$ are generated as follows: with probability $\int_{\R} q_2(x) \d x$ the sample is drawn from $q_2(x)/\int_{\R} q_2(x) \d x$, and with probability $1-\int_{\R} q_2(x) \d x$ the sample is set to the special symbol $\perp$.

    By symmetry of our setup, the probability of the sample not being deleted (set to $\perp$) is the same $\int_{\R} q_1(x) \d x = \int_{\R} q_2(x) \d x$. Denote by $\alpha$ this probability. Also denote by $\tilde{Q}_1$ the conditional distribution of $Q_1$ conditioned on the sample not being $\perp$ and let $\tilde{Q}_2$ denote the corresponding conditional distribution for $Q_2$.

    In the following we will show that $\DTV( Q_1^{\otimes n},  Q_2^{\otimes n}) \leq \frac{n}{1-\eps}  e^{-\Omega\left( \log(1+\eps/(1-\eps))/\delta \right)^2}$. By \Cref{fact:maximal} it suffices to find a coupling $\Pi$ between the two joint distributions $Q_1^{\otimes n}, Q_2^{\otimes n}$ with probability of disagreement at most  $\frac{n}{1-\eps}  e^{-\Omega\left( \log(1+\eps/(1-\eps))/\delta \right)^2}$. That is, we need to define a joint distribution $\Pi$ on two sets of $n$ samples $((X_1,\ldots,X_n),(Y_1,\ldots,Y_n))$ such that (i)  the marginals are $(X_1,\ldots,X_n) \sim \tilde Q_1^{\otimes n}$ and $(Y_1,\ldots,Y_n) \sim \tilde Q_2^{\otimes n}$ respectively (i.e., it is a valid coupling) and (ii) the probability of disagreement is $\Pr_{\Pi}[(X_1,\ldots,X_n) \neq (Y_1,\ldots,Y_n)] \leq \frac{n}{1-\eps}  e^{-\Omega\left( \log(1+\eps/(1-\eps))/\delta \right)^2}$.

    We define the coupling by defining the data generation process for $((X_1,\ldots,X_n),(Y_1,\ldots,Y_n))$ bellow. In the construction bellow, we will assume that we already have a coupling $\Pi_0$ for the conditional distributions of single samples, i.e., a distribution $\Pi_0$ such that if $(X,Y) \sim \Pi_0$ it holds $X\sim \tilde Q_1, Y \sim \tilde Q_2$ (i.e., $\Pi_0$ is a coupling between $\tilde Q_1$ and $\tilde Q_2$)  and $\Pr_{(X,Y) \sim \Pi_0}[X \neq Y] \leq \frac{n}{1-\eps}  e^{-\Omega\left( \log(1+\eps/(1-\eps))/\delta \right)^2}$. We will show why $P_0$ exists at the end; for now we will conclude the construction of $\Pi$ using $\Pi_0$. We define the sample generation process for $\Pi$ as follows:
    \begin{enumerate}
        \item Draw $c_i \sim \Bernoulli(\alpha)$ for $i \in [n]$ (recall that $\alpha$ is the probability of ``missing'' samples).
        \item For each $i \in [n]$:
        \begin{enumerate}
            \item If $c_i = 1$, then draw $(X_i,Y_i) \sim \Pi_0$.
            \item Else, set $(X_i,Y_i)=(\perp,\perp)$.
        \end{enumerate}
    \end{enumerate}

    Note that in the above construction, each $X_i$ is distributed according to $Q_1$ and each $Y_i$ follows $Q_2$ thus the above defines a valid coupling between $Q_1^{\otimes n},Q_2^{\otimes n}$. For the probability of disagreement we have the following:

    \begin{align*}
        \Pr[(X_1,\ldots,X_n) \neq (Y_1,\ldots,Y_n)] 
        &\leq \sum_{i=1}^n \Pr[X_i \neq Y_i] \\
        &= \sum_{i=1}^n \Pr[X_i \neq Y_i | c_i = 1] \Pr[c_i=1] + \Pr[X_i \neq Y_i | c_i = 0] \Pr[c_i=0]\\
        &\leq \sum_{i=1}^n \Pr[X_i \neq Y_i | c_i = 1] \\
        &= \sum_{i=1}^n \Pr_{(X_i,Y_i) \sim \Pi_0}[X_i \neq Y_i] \\
        &\leq  \frac{n}{1-\eps}  \exp\left( -\Omega\left( \frac{\log\left(1+\tfrac{\eps}{1-\eps}\right)}{\delta} \right)^2  \right)\;.
    \end{align*}

    It suffices to show that the coupling $\Pi_0$ with $\Pr_{(X,Y) \sim \Pi_0}[X \neq Y] \leq  \frac{n}{1-\eps}  e^{-\Omega\left( \log(1+\eps/(1-\eps))/\delta \right)^2}$ exists. We show this by bounding the TV distance between the conditional distributions $\tilde{Q}_1,\tilde{Q}_2$ and defining $\Pi_0$ to be the maximal coupling (cf. \Cref{fact:maximal}).
    \begin{align}
        \DTV(\tilde{Q}_1,\tilde{Q}_2) 
        &= \frac{1}{2} \int_{-\infty}^{+\infty} \left| \frac{q_1(x)}{\int_{\R} q_1(x) \d x} - \frac{q_2(x)}{\int_{\R} q_2(x) \d x} \right| \\
        &= \frac{1}{2\alpha} \int_{-\infty}^{\infty} \left| q_1(x) - q_2(x) \right| \d x\\
        &= \frac{1}{2\alpha} \left(  \int_{-\infty}^{-t} \eps p_1(x)  \d x  + \int_{t}^{+\infty} \eps p_2(x)  \d x  \right) \tag{by definition of $q_1,q_2$}\\
        &= \frac{1}{2\alpha} \left(  \int_{-\infty}^{-t} \eps \phi(x + \delta/2)  \d x  + \int_{t}^{+\infty} \eps \phi(x + \delta/2)  \d x  \right) \tag{$\phi$ is the pdf of $\cN(0,1)$}\\
        &\leq \frac{1}{2\alpha} 2 e^{-\Omega\left(\frac{1}{\delta}\log(1+\tfrac{\eps}{1-\eps}) - \delta/2\right)^2} \tag{see below}\\
        &\leq \frac{1}{1-\eps}  e^{-\Omega\left(\frac{1}{\delta}\log(1+\tfrac{\eps}{1-\eps})\right)^2} . \tag{see below} 
    \end{align}
    Where the second to last line uses the standard Gaussian tail bound $\Pr_{z \sim \cN(0,1)}[z > r] \leq e^{-r^2/2}$ for every $r \geq 0$. We are using this with $r:= \frac{1}{\delta}\log(1+\tfrac{\eps}{1-\eps}) - \delta/2$. Note that this is non-negative because $\eps \in (0,1)$ and $\delta^2 \leq \log(1+\eps/(1-\eps))$. The final line uses $\alpha:=1-\eps$ and $\eps \in (0,1)$ and $\delta^2 \leq \log(1+\eps/(1-\eps))$.

\end{proof}

\section{Sample Complexity Upper Bound}\label{app:sample_ub}

We restate the main result for the sample complexity of one dimensional estimation below. We will then use this together with a cover argument to show a multi-variate estimator in \Cref{thm:brute_force}.

\begin{restatable}[Sample complexity upper bound]{theorem}{SAMPLEUB}\label{prop:one_d_estimator}
There exists a computationally efficient algorithm such that the following holds for any $\eps \in (0,1)$, $\delta \in (0,\log^{1/2}(1+\tfrac{\eps}{1-\eps}))$ and $\tau \in (0,1)$.
    The algorithm takes as input $\eps,\delta,\tau$, draws $n=\exp\left( O\left( \frac{\log(1+\tfrac{\eps}{1-\eps})}{\delta}  \right)^2 \right) \frac{\log(1/\tau)}{\eps^2(1-\eps)}$ samples from an $\eps$-corrupted version of $\cN(\mu,1)$ under the contamination model of \Cref{def:cont_model}, and it returns $\hat{\mu}$ such that it holds $|\hat{\mu} - \mu| \leq \delta$ with probability at least $1-\tau$.
\end{restatable}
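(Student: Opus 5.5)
We will design a one-dimensional estimator that compares empirical tail probabilities of the visible samples against what each candidate mean permits. The guiding fact, already used in \Cref{lem:match_inside_interval} and \Cref{lem:coupling}, is this: two unit-variance Gaussians with means $\mu'$ and $\mu$ at distance more than $\delta$ have density ratio outside $[1-\eps,\,1/(1-\eps)]$ once one moves about $t:=\tfrac{1}{\delta}\log(1+\tfrac{\eps}{1-\eps})$ (up to constants) into the appropriate tail. So a far-away candidate predicts tail masses that no $\eps$-corruption of the true Gaussian can produce. The estimator uses the full corrupted distribution $\tilde P$, with $\perp$ counted as a symbol. Then $\Pr_{\tilde P}[X>s]=\int_s^\infty f$ is sandwiched between $(1-\eps)\bar\Phi(s-\mu)$ and $\bar\Phi(s-\mu)$, where $\bar\Phi=1-\Phi$. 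This is the key invariant, and no normalization is needed.

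The first step is a rough localization. We take the empirical quantiles of the visible samples (or a median-type estimate on a constant-accuracy grid) to obtain $\mu_0$ with $|\mu_0-\mu|=O(\sqrt{\log(1/(1-\eps))}+1)$. This works because $\int_{-\infty}^s f$ lies between $(1-\eps)\Phi(s-\mu)$ and $\Phi(s-\mu)$, so suitable quantiles of $\tilde P$ pin $\mu$ down to that accuracy. We then form a grid $G$ of candidate means with spacing $\delta/4$ over this window. Its size is $\poly(1/\delta,\log\tfrac{1}{1-\eps})$, which is negligible.

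The second step is the test. For each candidate $\mu'\in G$ and each threshold $s$ in a finite set (for instance $s=\mu'\pm (t/2)$ and nearby grid points), we compare the empirical value $\hat F(s)=\frac1n\#\{i: x_i\neq\perp,\ x_i>s\}$ with the feasible band $[(1-\eps)\bar\Phi(s-\mu'),\,\bar\Phi(s-\mu')]$, enlarged by a slack. We accept $\mu'$ if every such comparison lies within its enlarged band, and we output any accepted candidate; the middle of the accepted set is a convenient choice. Completeness follows because the true $\mu$ (or its nearest grid point, after a slight enlargement of the band) always passes. For soundness, take $\mu'\ge\mu+\delta/2$. At $s=\mu'+t'$ with $t'\approx t/2$ we have $\bar\Phi(s-\mu)\ge \bar\Phi(s-\mu')\,e^{(\delta/2)t'-\delta^2/8}$ (up to the Mills-ratio correction, \Cref{facr:mills}). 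Once this ratio exceeds $1/(1-\eps)$ by a constant factor in $\log$ scale, the true tail mass, which is at least $(1-\eps)\bar\Phi(s-\mu)$, exceeds $\bar\Phi(s-\mu')$ by an additive gap of order $\eps\,\bar\Phi(s-\mu')\gtrsim \eps\, e^{-O(t^2)}$. The symmetric argument with the left tail handles $\mu'\le\mu-\delta/2$. By DKW (\Cref{fact:dkw}) applied to $\tilde P$, $n\gtrsim \log(1/\tau)/\text{gap}^2$ samples detect all gaps simultaneously, which gives $e^{O(t^2)}\log(1/\tau)/\eps^2$. The extra $1/(1-\eps)$ factor comes from a relative (Bernstein) bound in place of DKW, or simply absorbs into the stated budget.

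The main obstacle is the quantitative soundness step. We must choose $t'$ so that the exponential ratio $e^{\delta t'/2}$ beats $1/(1-\eps)$ with room to spare, which requires $t'\asymp t$ up to constants. At the same time the mismatch must exceed $\eps\bar\Phi$ in absolute terms rather than merely relative terms. We would handle this by writing the gap as $\bar\Phi(s-\mu')\big((1-\eps)e^{c\delta t'}-1\big)$ and checking that it is at least $\eps\bar\Phi(s-\mu')$ when $t'=Ct$ with $C$ a large constant. The hypothesis $\delta^2\le\log(1+\tfrac{\eps}{1-\eps})$ keeps the $\delta^2$ corrections and the Mills-ratio factor $1/(s-\mu')$ under control. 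The price of taking $C$ large is only a constant in the exponent $O(t^2)$, so the stated sample complexity survives.
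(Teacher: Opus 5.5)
Your proposal is correct in substance but takes a different route from the paper.

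\textbf{The paper's route.} The paper's estimator is a single quantile inversion. It learns the cdf $\hat F$ of the visible samples to additive accuracy $\eps\Phi(-t)$ by DKW, sets $u=\hat F^{-1}(\Phi(-t))$, and returns $u+t$. Correctness comes from \Cref{lem:structural_original}, which integrates the inverse Mills ratio. It shows that a multiplicative cdf window of size about $\frac{1}{1-\eps}$ at depth $|t|$ pins the mean to within $\log(\frac{1}{1-\eps})/|t|\approx\delta$. Because $\Phi(u-\mu)$ is monotone in $\mu$, one threshold already gives a two-sided interval for $\mu$. So the paper needs no localization, no candidate grid, and no discretization slack to balance against a rejection margin.

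\textbf{Your route.} You use the same core fact: at depth $t'$, a shift $\Delta$ changes the Gaussian tail by a factor of about $e^{\Delta t'}$. You package it as a feasibility test over a grid, using un-normalized tail masses with $\perp$ kept as a symbol. This gives you the exact band $[(1-\eps)\bar\Phi,\bar\Phi]$, whereas the conditional cdf the paper uses is only sandwiched within factors $1-\eps$ and $(1-\eps)^{-1}$. The test-based form is also modular. The price is the localization step and the completeness/soundness bookkeeping below.

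\textbf{Two places in your write-up need repair.}

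\emph{The soundness inequality points the wrong way.} For $\mu'\ge\mu+\delta/2$ and $s=\mu'+t'$ you have $s-\mu\ge(s-\mu')+\delta/2$, so $\bar\Phi(s-\mu)\le e^{-(\delta/2)t'}\bar\Phi(s-\mu')$. Rejection happens because the true mass falls below the \emph{lower} edge $(1-\eps)\bar\Phi(t')$, which holds once $e^{-\delta t'/2}<1-\eps$ with room to spare. Your displayed bound and the ``exceeds the upper edge'' argument are valid for the mirror pairings: $\mu'\le\mu-\delta/2$ with the right tail, or $\mu'\ge\mu+\delta/2$ with the left tail. Since you test both tails of every candidate, the algorithm is unaffected; only the pairing in your argument is swapped.

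\emph{The band enlargement for the grid point nearest $\mu$ is not ``slight''.} At depth $t'=Ct$, an offset of $\delta/8$ moves the tail by a factor $e^{\Theta(\delta t')}=(1-\eps)^{-\Theta(C)}$. You must check that the rejection exponent, about $(\delta/2)t'-\delta^2/8$, beats this slack plus $\log\frac{1}{1-\eps}$ with margin. That check succeeds for large $C$ when $t\gtrsim 1$.

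However, the hypothesis only gives $t>\sqrt{\log\frac{1}{1-\eps}}$, which can be $\ll 1$. In that regime the additive term in the hazard-rate upper bound ($\phi(x)/\bar\Phi(x)\le x+1$ for $x\ge 0$) contributes $\Theta(\delta)$ to the slack. The condition $\delta^2\le\log(1+\frac{\eps}{1-\eps})$ does not control this term, contrary to your remark. Either of the following fixes it, and the sample bound changes only by a constant factor:
\begin{itemize}
\item take $t'=\max(Ct,1)$, or
\item use the sharper lower bound $\phi(x)/\bar\Phi(x)\ge\sqrt{2/\pi}$ on $x\ge0$ in the rejection estimate.
\end{itemize}
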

 
\begin{remark}
    Some remarks follow:
    \begin{itemize}
        \item If $\eps$ is not known to the algorithm it can be easily estimated by taking the fraction of samples that are equal to $\perp$.
        \item The algorithm's sample complexity matches the lower bound of \Cref{thm:sample_compl} and the sample complexity upper bound of \cite{MVB+2024} up to the $\eps^{-2}$ factor.
    \end{itemize}
\end{remark}

We start with the claim that if the cdfs of two Gaussians are  multiplicatively close to each other, then the means of the Gaussians must also be appropriately close. The following structural lemma quantifies this.

\begin{lemma}\label{lem:structural_original}
Let $\xi,t,\eps$ be real numbers with $\xi/2>0$, $t<-\xi/2$ and $\eps \in (0,1)$.
    Let two Gaussians $P_{+}=\cN(\xi/2,1)$ and $P_{-}=\cN(-\xi/2,1)$ and denote by $F_{+}(x)$ and $F_{-}(x)$ their cumulative distribution functions (cdfs).
    If $t$ is a point for which $F_{+}(t) \geq (1-\eps)F_{-}(t)$, then $\xi \leq \frac{\log\left( 1 + \tfrac{\eps}{1-\eps} \right)}{|t|}$.
\end{lemma}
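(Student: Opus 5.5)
The plan is a direct likelihood-ratio argument. First note that $1+\tfrac{\eps}{1-\eps} = \tfrac{1}{1-\eps}$, so the target inequality is equivalent to $\xi |t| \leq \log\tfrac{1}{1-\eps}$, i.e., $e^{-\xi|t|} \geq 1-\eps$. Since $t<-\xi/2<0$, we have $|t|=-t$, and the goal becomes $e^{\xi t}\geq 1-\eps$.

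The key step is to compute the density ratio of the two Gaussians. Write $p_{\pm}$ for the densities of $P_{\pm}$. Then
\begin{align*}
\frac{p_{+}(x)}{p_{-}(x)} = \exp\left(-\tfrac{(x-\xi/2)^2}{2}+\tfrac{(x+\xi/2)^2}{2}\right) = e^{\xi x}.
\end{align*}
This ratio is increasing in $x$ because $\xi>0$. Hence for every $x\leq t$ we get $p_{+}(x) = e^{\xi x}p_{-}(x) \leq e^{\xi t} p_{-}(x)$. Integrating over $(-\infty,t]$ gives
\begin{align*}
F_{+}(t) \leq e^{\xi t} F_{-}(t).
\end{align*}
This is just the monotone likelihood ratio property transferred to cdfs.

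Combining this with the hypothesis $F_{+}(t)\geq(1-\eps)F_{-}(t)$ yields $(1-\eps)F_{-}(t)\leq e^{\xi t}F_{-}(t)$. Since $F_{-}(t)>0$ for any real $t$ (Gaussian cdfs are strictly positive), we may divide and obtain $1-\eps\leq e^{\xi t}=e^{-\xi|t|}$. Taking logarithms and dividing by $|t|>0$ gives $\xi\leq \tfrac{1}{|t|}\log\tfrac{1}{1-\eps}=\tfrac{1}{|t|}\log\left(1+\tfrac{\eps}{1-\eps}\right)$, which is the claim.

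I do not expect a real obstacle here. The only points needing care are the sign bookkeeping ($t<0$, so $e^{\xi t}<1$) and the use of $\eps<1$, which makes $\log\tfrac{1}{1-\eps}$ finite. The assumption $t<-\xi/2$ is used only to guarantee $t<0$; the argument would go through for any negative $t$.
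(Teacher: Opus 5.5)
Your proof is correct. It reaches the same key inequality $F_{+}(t)\le e^{\xi t}F_{-}(t)$, equivalently $\log\bigl(F_{+}(t)/F_{-}(t)\bigr)\le \xi t$, by a different route than the paper.

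\textbf{The paper's route.} It first reduces to the equality case $F_{+}(t)=(1-\eps)F_{-}(t)$. It then writes $\log(1-\eps)=-\int_{t-\xi/2}^{t+\xi/2}\phi(y)/\Phi(y)\,\mathrm{d}y$ and imports the Mills-ratio bound $\phi(y)/\Phi(y)\ge -y$ for $y<0$. This is where it invokes $t<-\xi/2$, so that the integration interval is negative. The conclusion is $\log(1-\eps)\le\int_{t-\xi/2}^{t+\xi/2}y\,\mathrm{d}y=\xi t$.

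\textbf{Your route.} You use the explicit likelihood ratio $p_{+}(x)/p_{-}(x)=e^{\xi x}$, its monotonicity, and a single integration over $(-\infty,t]$.

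\textbf{How they relate.} Both proofs establish the same fact. Your inequality $\Phi(t-\xi/2)\le e^{\xi t}\Phi(t+\xi/2)$ says exactly that $\Phi/\phi$ takes no larger value at $t-\xi/2$ than at $t+\xi/2$. That is the integrated form of the Mills-ratio inequality, since $\frac{\mathrm{d}}{\mathrm{d}y}\log(\Phi/\phi)=\phi/\Phi+y\ge 0$. So your integration step is in effect a self-contained proof of the external fact the paper cites.

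\textbf{What each buys.} Your argument needs no outside lemma and no reduction to the extreme case. It only needs $t<0$, as you note, and it carries over verbatim to any location family with a monotone likelihood ratio. The paper's inverse-Mills-ratio formulation has one mild advantage: the companion bound $\phi(y)/\Phi(y)\le -y-1/y$ immediately gives a reverse inequality as well. That reverse inequality is not needed for this lemma.
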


\begin{proof}

    It suffices to prove the claim for the extreme case, i.e., that $F_{+}(t) = (1-\eps)F_{-}(t)$ implies $\xi \leq \frac{\log\left( 1 + \tfrac{\eps}{1-\eps} \right)}{|t|}$.
    Let $\Phi(x)$ denote the cdf of $\cN(0,1)$. Then the equation $F_{+}(t) = (1-\eps)F_{-}(x)$ is equivalent to $\Phi(t-\xi/2)/\Phi(t+\xi/2) = 1-\eps$. Taking logarithms on both sides and doing some further rewriting, we have
    \begin{align}
        \log(1-\eps)
        &= \log\left( \frac{\Phi(t-\xi/2)}{\Phi(t+\xi/2)}  \right) 
        = \log( \Phi(t-\xi/2))- \log (\Phi(t+\xi/2)) \notag \\
        &=- \int_{t-\xi/2}^{t+\xi/2} \frac{\d}{\d y}\log \Phi(y) \d y
        = - \int_{t-\xi/2}^{t+\xi/2} \frac{\phi(y)}{\Phi(y)} \d y , \label{eq:integral}
    \end{align}
    where $\phi(y)$ denotes the pdf of $\cN(0,1)$.  
    Recall the Mills ratio inequality (cf. \Cref{facr:mills}):
    \begin{align*}
        x \leq \frac{\phi(x)}{1-\Phi(x)} \leq x + \frac{1}{x} \quad\quad\text{ $\forall x>0$}.  
    \end{align*}
    However, due to our assumption $t<-\xi/2$, the variable $y$ inside the integral in \Cref{eq:integral} is always negative.
    We can obtain a version of Mill's ratio inequality for negative reals by using the symmetry properties $\phi(x)=\phi(-x)$, $1-\Phi(x)=\Phi(-x)$:
    \begin{align*}
        -y \leq \frac{\phi(y)}{\Phi(y)} \leq -y - \frac{1}{y} \quad\quad\text{ $\forall y<0$}.
    \end{align*}
    Combining the left part of the above inequality with \Cref{eq:integral}, we obtain
    \begin{align*}
        \int_{t-\xi/2}^{t+\xi/2} y \d y \geq  - \int_{t-\xi/2}^{t+\xi/2} \frac{\phi(y)}{\Phi(y)} \d y = \log(1-\eps) .
    \end{align*}
    Using $\int_{t-\xi/2}^{t+\xi/2} y \d y = \frac{1}{2}((t+\xi/2)^2-(t-\xi/2)^2)=2t\xi/2=-2|t| \xi/2$ and rearranging the above inequality, we finally obtain
    \begin{align*}
        \xi \leq \frac{-\log(1-\eps)}{|t|} = \frac{\log\left( 1 + \tfrac{\eps}{1-\eps} \right)}{|t|} .
    \end{align*}
\end{proof}

We will use the contrapositive and shifted version of \Cref{lem:structural_original} that is stated below. This is contrapositive because it is saying that large difference in the mean of two Gaussians translates to large multiplicative gap of their cdfs, and it is shifted because it includes an arbitrary shift $\mu$ in the means of both Gaussians.

\begin{corollary}\label{lem:structural}
    Let $\mu,\xi,t,\eps$ be reals with, $\xi > 0$, $t + \xi/2 < 0$ and $\eps \in (0,1)$.
    If $P_1=\cN(\mu,1)$ and $P_2=\cN(\mu + \xi,1)$ are two Gaussians with $\xi > \frac{\log\left( 1 + \tfrac{\eps}{1-\eps} \right)}{|t|} $, then their cdfs $F_1,F_2$ satisfy $F_{2}(t+\mu+\xi/2) < (1-\eps)F_{1}(t+\mu+\xi/2)$.
\end{corollary}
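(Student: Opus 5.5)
This is the contrapositive of \Cref{lem:structural_original} after a translation, so I would reduce to that lemma rather than redo the Mills-ratio computation. The translation is $y \mapsto y - \mu - \xi/2$, which sends $P_1=\cN(\mu,1)$ to $P_-=\cN(-\xi/2,1)$ and $P_2=\cN(\mu+\xi,1)$ to $P_+=\cN(\xi/2,1)$. Under this map the CDFs transform as
\[
F_1(t+\mu+\xi/2)=F_-(t), \qquad F_2(t+\mu+\xi/2)=F_+(t),
\]
since $F_1(s)=\Phi(s-\mu)$ and $F_-(t)=\Phi(t+\xi/2)$ agree when $s=t+\mu+\xi/2$, and the analogous identity holds for $F_2$ and $F_+$. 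The claimed inequality is therefore equivalent to $F_+(t)<(1-\eps)F_-(t)$.

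Now suppose, for contradiction, that $F_+(t)\ge (1-\eps)F_-(t)$. The hypotheses of \Cref{lem:structural_original} are satisfied: $\xi>0$, $\eps\in(0,1)$, and $t<-\xi/2$, which is exactly the assumption $t+\xi/2<0$. The lemma then gives $\xi\le \log(1+\tfrac{\eps}{1-\eps})/|t|$, which contradicts the assumption $\xi>\log(1+\tfrac{\eps}{1-\eps})/|t|$. Hence $F_2(t+\mu+\xi/2)<(1-\eps)F_1(t+\mu+\xi/2)$, as claimed.

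The only step needing care is the bookkeeping of the shift: the evaluation point must be $t+\mu+\xi/2$, because $\mu+\xi/2$ is the midpoint of the two means, and $P_1$ must map to $P_-$ (not $P_+$) so that the factor $(1-\eps)$ multiplies the correct CDF. No analytic estimates beyond \Cref{lem:structural_original} are needed; note also that the lemma's argument only uses the Mills-ratio lower bound for negative arguments, which is valid throughout because $t<-\xi/2$ keeps the whole integration range $[t-\xi/2,t+\xi/2]$ negative.
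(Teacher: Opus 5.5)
Your argument is correct and follows the paper's route: the paper presents this corollary as the contrapositive of \Cref{lem:structural_original} after shifting both Gaussians by $\mu+\xi/2$, and your identities $F_1(t+\mu+\xi/2)=F_-(t)$ and $F_2(t+\mu+\xi/2)=F_+(t)$, together with $t<-\xi/2$, are exactly the bookkeeping needed. The proof environment printed right after the corollary in the paper actually belongs to the one-dimensional estimator of \Cref{prop:one_d_estimator}, so your short reduction is the complete justification.
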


\begin{proof}
    
    First, with $\frac{n}{1-\eps}$ samples one can learn an approximation $\hat{F}$ to the cumulative distribution function $F$ of the corrupted distribution of the non-missing samples (\Cref{fact:dkw}). That is, with probability at least $1-2e^{-2n \eta^2}$ we have
    \begin{align}
        \left| \hat{F}(x) - F(x)  \right| \leq \eta \label{eq:approx_guarantee}
    \end{align}
    for all $x \in \R$.
    For the remainder of the proof fix $t:=\frac{\log\left( 1 + \tfrac{\eps}{1-\eps} \right)}{\delta}$.
    We will use $\eta := \eps \Phi(-t)$ (where $\Phi$ denotes the cdf of $\cN(0,1)$) and we will set $n$ a sufficiently large multiple of $\eta^{-2} \log(1/\tau)$ so that the probability of failure is at most $\tau$.

    Let $u := \hat{F}^{-1}(\Phi(-t))$, i.e, the point for which it holds $\hat{F}(u) = \Phi(-t)$.
    Consider the Gaussian distribution $\cN(\mu_0,1)$ where $\mu_0 := u + t$. This is exactly the Gaussian whose cdf $F_0$ satisfies $F_0(u)=\Phi(-t)$.
    The algorithm then is this: We simply return $\hat{\mu} = \mu_0 = u + t$. 

    Given \Cref{lem:structural} it is easy to see why this has accuracy $O(\delta)$:
    Let $F^{*}$ be the cdf of the ground truth Gaussian (inlier distribution). By the fact that $\hat{F}(u) = \Phi(-t)$, \Cref{eq:approx_guarantee} and the definition of our contamination model, we have that $(1-O(\eps))\Phi(-t) \leq F^{*}(u) \leq (1+O(\eps))\Phi(-t)$.
    By \Cref{lem:structural}, if $\tilde F$ is the cdf of a unit-variance Gaussian with mean larger than $\mu_0 + \frac{C\log\left( 1+\tfrac{\eps}{1-\eps} \right)}{u-\mu_0-\delta/2}$, then it would hold $\tilde F(t) < (1-C\eps) \Phi(-t)$. Thus, this Gaussian could not be the ground truth one as its cdf evaluated on point $t$ falls ouside of the interval $[(1-O(\eps))\Phi(-t) , \leq (1+O(\eps))\Phi(-t)]$. Similarly we can rule out any Gaussian with mean smaller than $\mu_0 -\frac{C\log\left( 1 + \tfrac{\eps}{1-\eps} \right)}{|u-\mu_0-\delta/2|}$ from being the ground truth. This means that the point $\mu_0$ is within $\frac{2C\log\left( 1 + \tfrac{\eps}{1-\eps} \right)}{|u-\mu_0-\delta/2|}=\frac{2C\log\left( 1 + \tfrac{\eps}{1-\eps} \right)}{t+\delta/2}=O(\delta)$ from the ground truth Gaussian's mean, where in the last step we used that $t = \log\left( 1 + \tfrac{\eps}{1-\eps} \right)/\delta$ and $\delta^2 \leq \log(1+\tfrac{\eps}{1-\eps})$. By adjusting the constants we can turn this $O(\delta)$ into just $\delta$.
\end{proof}

We now show the extension of the algorithm to multiple dimensions.

\begin{restatable}[Multivariate estimator]{theorem}{BRUTEFORCE}\label{thm:brute_force}
    Let $d \in \Z_+$ denote the dimension, and $C$ be a sufficiently large absolute constant.
    Let $\eps \in (0,1)$, be a contamination parameter and $\delta \in (0,\log^{1/2}\left( 1 + \tfrac{\eps}{1-\eps} \right))$ be an accuracy parameter.
    Let  $\mu \in \R^d$ be an (unknown) vector.
    There exists an algorithm that takes as input $\eps$, $\delta$, draws $n = \exp\left( O\left( \frac{\log(1+\tfrac{\eps}{1-\eps})}{\delta}  \right)^2 \right) \frac{d+\log(1/\tau)}{\eps^2(1-\eps)}$ points from an $\eps$-corrupted version of  $\cN(\mu,  I_d)$ under the contamination model of \Cref{def:cont_model}, outputs a $\widehat{\mu}$ such that $\|\widehat{\mu}-\mu \|_2  \leq \delta$ with probability at least $1-\tau$. Moreover, it runs in time $2^{O(d)}\poly(n, d)$.
\end{restatable}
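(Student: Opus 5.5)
We reduce to the one-dimensional estimator of \Cref{prop:one_d_estimator} through a net argument. Fix a cover $\cC$ of the unit sphere from \Cref{fact:cover_cor_ver} with a small constant $\xi$ (say $\xi=1/4$), so that $|\cC| \le 9^d$. For every $v \in \cC$ the projected samples $v^\top x$ (keeping the symbol $\perp$ as $\perp$) are $\eps$-corrupted samples of $\cN(v^\top\mu,1)$ in the sense of \Cref{def:cont_model}. To see this, use the alternate form \Cref{def:cont_model_alt}: the deletion probability $r(x)=1-f(x)/p(x)$ lies in $[0,\eps]$. Its conditional expectation given $v^\top x=s$ also lies in $[0,\eps]$, and this conditional expectation defines a valid one-dimensional adversary $f_v(s)$ with $(1-\eps)\phi(s-v^\top\mu)\le f_v(s)\le \phi(s-v^\top\mu)$.

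We then run the one-dimensional algorithm on every direction $v\in\cC$ with the same sample set, accuracy $\delta'=c\delta$ for a small constant $c$, and failure probability $\tau' = \tau/9^d$. By \Cref{prop:one_d_estimator} this needs $\exp(O(\log(1+\tfrac{\eps}{1-\eps})/\delta)^2)\,\frac{d+\log(1/\tau)}{\eps^2(1-\eps)}$ samples, since $\log(1/\tau') = O(d+\log(1/\tau))$ and the constant $c$ is absorbed into the exponent. A union bound then gives, with probability $1-\tau$, estimates $m_v$ satisfying $|m_v - v^\top\mu|\le\delta'$ for all $v\in\cC$ simultaneously. Reusing one sample set is fine, because the guarantee for each $v$ holds marginally.

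Next we combine the directional estimates by solving the feasibility problem: find $\hat\mu$ with $|v^\top\hat\mu - m_v|\le\delta'$ for all $v\in\cC$. The true $\mu$ is feasible, so the problem is nonempty. It is a linear program with $2\cdot 9^d$ constraints in $d$ variables, solvable in $2^{O(d)}\poly(n,d)$ time. Computing the $9^d$ one-dimensional estimates, each a DKW/quantile computation, costs $2^{O(d)}\poly(n)$ as well.

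For any feasible $\hat\mu$ we have $|v^\top(\hat\mu-\mu)|\le 2\delta'$ for all $v\in\cC$. We convert this into an $\ell_2$ bound with the standard net argument. Let $w=(\hat\mu-\mu)/\|\hat\mu-\mu\|_2$ and pick $v\in\cC$ with $\|v-w\|_2\le\xi$. Then $\|\hat\mu-\mu\|_2 = w^\top(\hat\mu-\mu)\le 2\delta' + \xi\|\hat\mu-\mu\|_2$, which gives $\|\hat\mu-\mu\|_2\le 2\delta'/(1-\xi)\le\delta$ for $c$ small. One could instead use \Cref{fact:cover} directly with constant $2$.

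The main obstacle is the first step: checking that the projection of a realizable $\eps$-corruption is again a realizable $\eps$-corruption of the one-dimensional Gaussian. The conditional-expectation argument on the deletion probability handles it cleanly. Everything after that is a union bound and the net argument.
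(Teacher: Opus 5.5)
Your proposal is correct and follows essentially the same route as the paper: a constant-scale net of $2^{O(d)}$ directions, the one-dimensional estimator of \Cref{prop:one_d_estimator} run along each direction with failure probability $\tau/|\cC|$ plus a union bound, an LP feasibility step over the directional constraints, and the net inequality that turns directional errors into an $\ell_2$ bound. Your conditional-expectation argument, showing that projecting a realizable $\eps$-corruption of $\cN(\mu,I_d)$ onto a unit vector $v$ gives a realizable $\eps$-corruption of $\cN(v^\top\mu,1)$, supplies a step the paper only asserts; using a net of unit vectors also avoids the minor rescaling needed when the cover from \Cref{fact:cover} contains non-unit vectors.
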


\begin{proof}[Proof of \Cref{thm:brute_force}]
Denote by $T=\{x_i\}_{i=1}^n, x_i\in \mathbb{R}^d$ the points from the $\eps$-corrupted version of $\cN(\mu,  I)$ and denote by $\cC$ the the cover set of \Cref{fact:cover}.
    The algorithm is the following: First, using the algorithm from \Cref{prop:one_d_estimator},  calculate a $m_v$ for each $v \in \cC$  such that $\abs{m_v-v^\top \mu}\le \delta/8$ (see next paragraph for more details on this step). Then, output the solution of the following linear program (note that the program always has a solution, as it is satisfied by $\widehat{\mu}=\mu$):
    \begin{align*}
        &\text{ Find }\widehat{\mu}\in \mathbb{R}^d \text{ s.t.}\\
        &\abs{v^\top\widehat{\mu}-m_v}\le \delta/4, \forall v\in \mathcal{C} \;.
    \end{align*}
    The claim is that this solution $\widehat{\mu}$ is indeed close to the target $\mu$, since 
   \begin{align}
        \|\mu-\widehat{\mu}\|_2 &\leq 2\max_{v\in \mathcal{C}}\abs{v^\top(\mu-\widehat{\mu})}  \tag{using \Cref{fact:cover}}\\
        &\le 2\max_{v\in \mathcal{C}}(\abs{v^\top \mu -m_v}+\abs{m_v -v^\top\widehat{\mu} }) \notag\\
        &\le 2(\eps/8 + \eps/4) < \eps \;. \label{eq:temp4234}
    \end{align}
    We now explain how to obtain the approximations $m_v$ with the guarantee $\abs{m_v-v^\top \mu}\le \delta/8$.
    Fixing a direction $v \in \mathcal{C}$, we note that  $v^\top x\sim \cN(v^\top \mu,1)$ thus $\{v^\top x_i\}_{i=1}^m$ is a set of samples from an $\eps$-corrupted version of $ \cN(v^\top \mu,1)$. Thus, if we apply algorithm  from \Cref{prop:one_d_estimator} with probability of failure $\tau' = \tau/|\cC|$, the event $\abs{m_v-v^\top \mu}\le \delta/8$ will hold with probability at least $1-\tau/|\mathcal{C}|$. By union bound, the probability all the events for $v \in \cC$  hold simultaneously is at least $1-\tau$. The number of samples for this application of \Cref{prop:one_d_estimator} is $2^{O(\log(1+\eps/(1-\eps)/\delta)^2} \frac{\log(1/\tau')}{\eps^2(1-\eps)} = 2^{O(\log(1+\eps/(1-\eps)/\delta)^2} \frac{\log(|\cC|/\tau)}{\eps^2(1-\eps)} = 2^{O(\log(1+\eps/(1-\eps)/\delta)^2} \frac{d+ \log(1/\tau)}{\eps^2(1-\eps)}$. 
    
    We conclude with the runtime analysis.
    The runtime to find the $m_v$'s is $O(|\cC|\poly(nd)) = 2^{O(d)}\poly(nd)$ since for each fixed $v \in \cC$ we need $\poly(nd)$ time to calculate the projection $\{x_i^\top v\}$ of our dataset onto  $v$ and $\poly(n)$ time to run the one-dimensional estimator.
    The linear program can be solved using the ellipsoid algorithm. Consider the separation oracle that exhaustively checks all $2^{O(d)}$ constraints. We need $\poly(d)\log(\frac{R}{r})$ calls to that separation oracle, where $R,r$ are the radii of the bounding spheres of the feasible region. First, $R\leq \eps$, because we have already shown in \eqref{eq:temp4234}  that the feasible set belongs in a ball of radius $\eps$ around $\mu$. Regarding the upper bound $r$, note that all $\widehat{\mu}$ inside a ball of radius $\delta/8$ around $\mu$ are feasible since $\abs{v^\top\widehat{\mu}-m_v}\le \abs{v^\top\widehat{\mu}-v^\top \mu}+\abs{v^\top\mu-m_v}\le \|\widehat{\mu}-\mu\|+ \delta/8 \leq \eps/4$. This means that $r=\eps/4$. Hence the total runtime for solving the LP is $2^{O(d)} \poly(d)$ or simply $2^{O(d)}$.

\end{proof}

\section{Hardness in Other Restricted Models of Computation}\label{app:different_models_hardnesss}

We give a brief summary of known information-computation gaps for Non-Gaussian Component Analysis problem (\Cref{prob:generic_hypothesis_testing}) 
for different restricted models of computation. Since we have already shown that our problem is an instance of NGCA, we obtain immediately corollaries for Low-Degree Polynomials (\Cref{cor:LDP-hardness}) and PTFs (\Cref{cor:PTFs}). The result for SoS has a few mild but cumbersome to verify conditions, so we refer the readers directly to \cite{diakonikolas2024sum} for the formal statements.

\subsection{Hardness in the Low-Degree Polynomial Class of Algorithms}\label{sec:prior-work-ldlr}

 We start with the Low-Degree Polynomial (LDP) model, which we describe in more detail. We will consider tests that are thresholded polynomials of low-degree, i.e., output $H_1$ if the value of the polynomial exceeds a threshold and $H_0$ otherwise. We need the following notation and definitions.
For a distribution $D$ over $\cX$, we use $D^{\otimes n}$ to denote the joint distribution of $n$ i.i.d.\ samples from $D$.
For two functions $f:\cX \to \R$, $g: \cX \to R$ and a distribution $D$, we use $\langle f, g\rangle_{D}$ to denote the inner product $\E_{X \sim D}[f(X)g(X)]$.
We use $ \|f\|_{D}$ to denote $\sqrt{\langle f, f \rangle_{D} }$.
We say that a polynomial $f(x_1,\dots,x_n):\R^{n \times d} \to \R$ has sample-wise degree $(r,\ell )$ if each monomial uses at most $\ell$ different samples from $x_1,\dots,x_n$ and uses degree at most $r$ for each of them.
Let $\cC_{r,\ell}$ be linear space of all polynomials of sample-wise degree $(r,\ell)$ with respect to the inner product defined above.
For a function $f:\R^{n \times d} \to \R$, we use $f^{\leq r, \ell}$ to be the orthogonal projection onto $\cC_{r,\ell}$   with respect to the inner product $\langle \cdot , \cdot \rangle_{D_0^{\otimes n}}$.  Finally, for the null distribution $D_0$ and a distribution $P$, define the likelihood ratio $\overline{P}^{\otimes n}(x) := {P^{\otimes n}(x)}/{D_0^{\otimes n}(x)}$.

\begin{definition}[$n$-sample $\tau$-distinguisher]\label{def:distinguisher}
For the hypothesis testing problem between  $D_0$ (null distribution) and $D_1$ (alternate distribution) over $\cX$,
we say that a function $p : \cX^n \to \R$ is an $n$-sample $\tau$-distinguisher if $|\E_{X \sim  D_0^{\otimes n}}[p(X)] - \E_{X \sim D_1^{\otimes n}}[p(X)]| \geq \tau \sqrt{\Var_{X \sim D_0^{\otimes n}} [p(X)] }$. We call $\tau$ the \emph{advantage} of the polynomial $p$. 
\end{definition}
Note that if a function $p$ has advantage $\tau$, then the Chebyshev's inequality implies that one can furnish a test $p':\cX^n \to \{D_0,D_1\}$ by thresholding $p$ such that the probability of error under the null distribution is at most $O(1/\tau^2)$. 
We will think of the advantage $\tau$ as the proxy for the inverse of the probability of error and we will show that the advantage of all polynomials up to a certain degree is $O(1)$. 
It can be shown that for hypothesis testing problems of the form of   \Cref{prob:generic_hypothesis_testing}, 
 the best possible advantage among all polynomials in $\cC_{r,\ell}$ is captured by the low-degree likelihood ratio (see, e.g.,~\cite{BBH+2021,kunisky2022notes}):
\begin{align*}
    \left\| \E_{v \sim \cU(S)}\left[ \left( \overline{P}_{A,v}^{\otimes n}  \right)^{\leq r, \ell } \right]  - 1  \right\|_{D_0^{\otimes n}},
\end{align*}
where in our case $D_0 = \cN( 0,I)$.

It has been known by \cite{BBH+2021} that a lower bound for the SQ dimension translates to an upper bound for the low-degree likelihood ratio. Given this, one can obtain the corollary regarding the hardness of NGCA:

        \begin{theorem}[Information-computation gap for NGCA in LDP]\label{cor:low-deg-hardness-general-problem} 
        Let $c$ be a sufficiently small positive constant and consider the hypothesis testing problem of  \Cref{prob:generic_hypothesis_testing}  the distribution $A$ matches the first $t$  moments with $\cN(0,I)$. For any $d \in \Z_+$ with $d = t^{\Omega(1/c)}$, any $n \leq \Omega(d)^{(t+1)/10}/\chi^2(A,\normal( 0, I))$ and any even integer $\ell < d^{c}$, we have that
        \begin{align*}
            \left\| \E_{v \sim \cU(S)}\left[ \left( \overline{P}_{A,v}^{\otimes n}  \right)^{\leq \infty, \ell } \right]  - 1  \right\|_{D_0^{\otimes n}} \leq 1\;.
        \end{align*}
    \end{theorem}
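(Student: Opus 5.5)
This is the standard reduction from the SQ lower bound to the low-degree likelihood ratio, following \cite{BBH+2021}. We do not analyze the likelihood ratio directly. Instead we bound the pairwise correlations $\chi_{D_0}(P_{A,u},P_{A,v}) := \E_{x\sim D_0}[\bar P_{A,u}(x)\bar P_{A,v}(x)]-1$, and then use the known fact that small average pairwise correlation forces the low-degree norm to be small.

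\textbf{Step 1: pairwise correlation bound.} Expand the univariate ratio $A(x)/\phi(x)$ in the normalized Hermite basis, $A/\phi = \sum_i a_i h_i$. Moment matching up to degree $t$ gives $a_i=0$ for $1\le i\le t$. By the standard computation from \cite{DKS17}, which uses \Cref{fact:othor-tran} and Mehler's formula,
\[
1+\chi_{D_0}(P_{A,u},P_{A,v}) = \sum_{i\ge 0} a_i^2 \langle u,v\rangle^i .
\]
Hence
\[
|\chi_{D_0}(P_{A,u},P_{A,v})| \le |\langle u,v\rangle|^{t+1}\chi^2(A,\cN(0,1)).
\]

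\textbf{Step 2: averaging over the sphere.} Expanding the $n$-sample likelihood ratio, its projection onto polynomials that depend on at most $\ell$ samples satisfies
\[
\Bigl\|\E_{v}\bigl[(\bar P_{A,v}^{\otimes n})^{\le\infty,\ell}\bigr]-1\Bigr\|^2 = \E_{u,v\sim\cU(S)}\Bigl[\sum_{j=1}^{\ell}\binom{n}{j}\chi_{D_0}(P_{A,u},P_{A,v})^j\Bigr],
\]
where $u,v$ are independent. Set $\rho=\langle u,v\rangle$. The $j$-th term is then at most $n^j\chi^2(A)^j\,\E|\rho|^{j(t+1)}$. For $m\le d^{c}$ we have $\E|\rho|^{m}\le (Cm/d)^{m/2}$. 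With $j\le\ell<d^c$ this gives $(C\ell(t+1)/d)^{j(t+1)/2}\le d^{-j(t+1)(1-c)/2}$. The condition $n\le \Omega(d)^{(t+1)/10}/\chi^2$ then makes each term at most $d^{-j(t+1)(1/2-c/2-1/10)}$. Summing this geometric series over $j$ gives a total of at most $1$.

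\textbf{Main obstacle.} The hardest step is making the sphere-moment bound uniform over all $j\le \ell$. The exponent $j(t+1)$ can be as large as $d^{c}(t+1)$. This is exactly why the hypotheses require $\ell<d^c$ and $d=t^{\Omega(1/c)}$: together they keep $j(t+1)\ll d$, so that the bound $(Cm/d)^{m/2}$ stays valid. Alternatively, one can cite the general SQ-dimension-to-low-degree transfer theorem of \cite{BBH+2021} directly, feeding it the correlation bound from Step 1.
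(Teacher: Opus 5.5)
Your proof is correct, but it takes a more direct route than the paper. The paper does no computation for this statement. It invokes the transfer theorem of \cite{BBH+2021}: a lower bound on the statistical dimension of a testing problem yields an upper bound on its low-degree likelihood ratio. It applies this to the known SQ-dimension bound for NGCA. That bound comes from the same estimate $|\chi_{D_0}(P_{A,u},P_{A,v})|\le|\langle u,v\rangle|^{t+1}\chi^2(A,\cN(0,1))$ of \cite{DKS17} that you derive in Step~1.

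You bypass the statistical-dimension formalism altogether. Writing $\bar P_{A,v}=1+\Delta_v$ with $\E_{D_0}[\Delta_v]=0$, the product likelihood ratio splits orthogonally as $\sum_{S\subseteq[n]}\prod_{i\in S}\Delta_v(x_i)$. The projection onto functions of at most $\ell$ samples is exactly the truncation to $|S|\le\ell$: a term with $|S|>\ell$ contains a factor $\Delta_v(x_i)$ in a sample on which a given function of at most $\ell$ samples does not depend, so it integrates to zero. This gives your exact identity for the squared norm, which you then control through explicit moments of $\rho=\langle u,v\rangle$. It is essentially the mechanism inside the proof of \cite{BBH+2021}, specialized to NGCA.

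The paper's route is modular and reuses an off-the-shelf theorem. Yours is self-contained and shows exactly where each hypothesis enters. It never uses that $\ell$ is even, and it gives a stronger conclusion than required: the norm is $d^{-\Omega(t+1)}$, with slack in the exponent $1/10$.

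One point to clean up. The bound $\E|\rho|^m\le(Cm/d)^{m/2}$ holds for every $m$, since $\E[\rho^{2k}]=(2k-1)!!/\prod_{i=0}^{k-1}(d+2i)\le(2k/d)^k$, and odd $m$ follow by Jensen. So you should not restrict it to $m\le d^c$. As written, that restriction would not even cover the exponents $m=j(t+1)$ you need, which can reach $d^c(t+1)$.

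The hypotheses $\ell<d^c$ and $d=t^{\Omega(1/c)}$ (i.e.\ $t\le d^{O(c)}$) serve only to make the base $Cj(t+1)/d$ small. That base is at most $d^{-1+O(c)}$, not $d^{-(1-c)}$ as you wrote, because the factors $C$ and $t+1$ must also be absorbed. The margin $1/2-1/10$ easily accommodates this, so the conclusion is unaffected.
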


The interpretation of this result is that unless the number of samples used $n$ is greater than $\Omega(d)^{(t+1)/10}/\chi^2(A,\normal( 0, I))$, any polynomial of degree roughly up to $d^{c}$  fails to be a good test (note that any polynomial of degree $\ell$ has sample-wise degree at most $(\ell,\ell)$).

We now show the corollary for the robust mean estimation problem of this paper. Recall that the hypothesis testing problem of \Cref{thm:SQ} includes as a special case the NGCA problem with $A$ being the $\eps$-corrupted version of $N(\delta v, I)$ where $v$ is a unit vector and the corruption adversary from \Cref{def:cont_model} uses as $f(x)$ the function from \Cref{lem:correct_moments}. For this distribution $A$ we have that (i) it matches the first $\Omega(\gamma^2/\log \gamma)$ moments with $\cN(0,1)$, where $\gamma := \frac{1}{\delta}\log(1+\tfrac{\eps/2}{1-\eps/2})$ and (ii) $\chi^2(A,\cN(0,1) = O(\frac{1}{1-\eps})$; this part is not included in \Cref{lem:correct_moments} because it was not needed for the SQ lower bound, but it immediately follows by using that $f(x)/\int f(x) \d x \leq \frac{g(x) + p(x) \1(|x| \leq 1)}{1-\eps} \leq \tfrac{\phi(x-\delta)+\eps}{1-\eps}$ (in the proofs of \Cref{lem:match_inside_interval,lem:correct_moments} it can be seen that $g(x)$ bounded by a translated Gaussian and that the polynomial $p(x)$ has small absolute value in $[-1,1]$).

\begin{corollary}[Hardness of mean estimation against Low-Degree Polynomials]\label{cor:LDP-hardness}
    Consider the same hypothesis testing problem as in \Cref{thm:SQ} and let $m$ be defined as in \Cref{thm:SQ}. There is a way for the adversary of \Cref{def:cont_model} to corrupt the alternative hypothesis $P =\cN(\delta u,I)$ into a distribution $\tilde P$ such that the following holds. Let $c$ be a sufficiently small positive constant. For any $d \in \Z_+$ with $d = m^{\Omega(1/c)}$, any $n \leq \Omega(d)^{(m+1)/10}(1-\eps)$ and any even integer $\ell < d^{c}$, we have that
        \begin{align*}
            \left\| \E_{v \sim \cU(S)}\left[ \left( \tilde{P}^{\otimes n}  \right)^{\leq \infty, \ell } \right]  - 1  \right\|_{D_0^{\otimes n}} \leq 1\;.
        \end{align*}
\end{corollary}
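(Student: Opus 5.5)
The plan is to apply \Cref{cor:low-deg-hardness-general-problem} directly, with the one-dimensional distribution $A$ taken to be the conditional distribution on visible samples built in \Cref{prop:moment_matching}. Concretely, the adversary uses the function $f(x) := g(x) + p(x)\1(|x|\le 1)$ from \Cref{lem:correct_moments} along the hidden direction $u$. It leaves every orthogonal direction untouched, deleting with probability $1 - f(u^\top x)/\phi(u^\top x - \delta)$, which depends only on $u^\top x$. The distribution of visible samples is then exactly $P_{A,u}$ as in \Cref{def:hidden_dir_dist}. As in the reduction following \Cref{prop:moment_matching}, we may also delete points from the null hypothesis uniformly at random at the same rate. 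The testing problem on the non-missing samples is therefore precisely \Cref{prob:generic_hypothesis_testing} with null $\cN(0,I_d)$ and alternative $P_{A,v}$ for a uniformly random $v$; the randomness of $v$ is what the $\E_{v\sim\cU(S)}$ in the statement averages over.

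Two hypotheses of \Cref{cor:low-deg-hardness-general-problem} must be checked. The first is moment matching with $t = m$: by \Cref{prop:moment_matching}, $A$ matches the first $m = \lfloor c\gamma^2/\log\gamma\rfloor$ moments of $\cN(0,1)$, which is the same $m$ as in \Cref{thm:SQ}. The second is the bound $\chi^2(A,\cN(0,1)) = O(1/(1-\eps))$. For this I would use the pointwise bound on the density $a(x) = f(x)/\int f$. Since $\int f = 1-\eps/2$ (the correction $p$ integrates to zero on $[-1,1]$), and since $g \le \phi(\cdot-\delta)$ and $|p| \le c\eps$ on $[-1,1]$, we get
\begin{align*}
a(x) \le \frac{\phi(x-\delta) + c\eps\,\1(|x|\le 1)}{1-\eps}.
\end{align*}
Then
\begin{align*}
1+\chi^2 = \int \frac{a^2}{\phi} \lesssim \frac{1}{(1-\eps)^2}\left(\int \frac{\phi(x-\delta)^2}{\phi(x)}\,\d x + O(1)\right),
\end{align*}
and the Gaussian integral equals $e^{\delta^2} = O(1)$. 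This already gives $O(1/(1-\eps)^2)$. To sharpen it to $1/(1-\eps)$, I would use the lower bound $\int f \ge 1-\eps$ only once and bound $a \le \phi(\cdot - \delta)/(1-\eps) + O(\eps)$. Either way, the resulting polynomial loss in $1/(1-\eps)$ only changes the admissible $n$ by that factor, which matches the stated $n \le \Omega(d)^{(m+1)/10}(1-\eps)$ up to the constant conventions.

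Finally, substitute into \Cref{cor:low-deg-hardness-general-problem}: for $d = m^{\Omega(1/c)}$, $n \le \Omega(d)^{(m+1)/10}/\chi^2 = \Omega(d)^{(m+1)/10}(1-\eps)$, and even $\ell < d^c$, the low-degree likelihood ratio norm is at most $1$. This is the claim.

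The main obstacle is the bookkeeping around missing symbols. The LDP framework is stated for distributions on $\R^d$, while $\tilde P$ also outputs $\perp$. I would handle this by noting that the count of $\perp$'s has the same Binomial law under both hypotheses once the null is subsampled at rate $\eps/2$, so conditioning on the visible set reduces to the NGCA instance on $\R^d$. I would state the corollary for this reduced problem, where $\tilde P$ in the display is read as $P_{A,v}$.
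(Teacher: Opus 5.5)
Your route is the same as the paper's: instantiate \Cref{cor:low-deg-hardness-general-problem} with $A$ the visible-sample distribution from \Cref{prop:moment_matching} (so $t=m$), and bound $\chi^2(A,\cN(0,1))$. Your handling of $\perp$ by deleting null samples at rate $\eps/2$ is also the paper's. The one step that fails as written is the $\chi^2$ bookkeeping. Your proposed sharpening, $a \le \phi(\cdot-\delta)/(1-\eps)+O(\eps)$, still produces a factor $(1-\eps)^{-2}$ once you square it. The fallback claim that $(1-\eps)^{-2}$ ``matches the stated $n$ up to constant conventions'' is false. A bound $\chi^2 = O((1-\eps)^{-2})$ only certifies $n \le \Omega(d)^{(m+1)/10}(1-\eps)^2$. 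That range is smaller than the stated one by the factor $1-\eps$, which is not a constant as $\eps\to 1$. It also cannot be absorbed into $\Omega(d)^{(m+1)/10}$, since $m$ stays bounded as $\eps\to1$ for fixed $\delta$.

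The fix uses facts you already wrote down. You divided by $1-\eps$ even though you noted $\int f = 1-\eps/2 \ge 1/2$. Moreover, $f \le \phi(\cdot-\delta)$ pointwise is part of the conclusion of \Cref{lem:correct_moments}, so you need not treat $p$ separately. Hence $a = f/\int f \le 2\phi(\cdot-\delta)$, and
\[
1+\chi^2(A,\cN(0,1)) = \int_\R \frac{a(x)^2}{\phi(x)}\,\d x \le 4\int_\R \frac{\phi(x-\delta)^2}{\phi(x)}\,\d x = 4e^{\delta^2} = O(1).
\]
This is stronger than the $O(1/(1-\eps))$ the paper asserts; the paper's own one-line justification, which also divides by $1-\eps$, is loose in the same way. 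Since $1-\eps \le 1$, plugging $\chi^2 = O(1)$ into \Cref{cor:low-deg-hardness-general-problem} covers the stated range $n \le \Omega(d)^{(m+1)/10}(1-\eps)$.
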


This is interpreted as a tradeoff between $d^{\Omega(m)}(1-\eps)$ and super-polynomial runtime.

\subsection{Hardness against PTF Tests}
\label{sec:PTFs}

In the LDP class of the previous section, the goodness of a test is quantified by the advantage, defined in \Cref{def:distinguisher}. Hardness of a problem in this class are shown by ruling out existence of polynomials with small advantage. That definition is based on the idea that one can obtain a test by thresholding the polynomial in the midpoint of the expectations for the two distributions. Thus rulling out existence of polynomials with small advantage rulls out the construction of such tests. However it still leaves open the possibility of some other kind of thresholded polynomial might still succeed. As such, a more natural class of tests is the one consisting of all possible thresholded polynomials (i.e., with arbitrary thresholdings).

The information-computation gap for NGCA in this class is the one below.

\begin{theorem}[Information-computation gap for NGCA against PTF Tests (\cite{diakonikolas2025ptf})]
\label{thm:main}
There exists a sufficiently large absolute constant $C^*$ such that the following holds.
For any $c^* \in (0, 1/4)$, $d,k,n,m \in \mathbb Z_+$ such that (i) $m$ is even, (ii) $\max(k,m) < d^{c^*/C^*}$, and (iii) $n < d^{ (1/4 - c^*) m}$,
we have that 
if $p: \R^{n \times d} \mapsto \R$ is a degree-$k$ polynomial,  and $A$ is a distribution on $\R$ that matches the first $m$ moments with $\normal(0,1)$, then:
\begin{align}\label{eq:thm_conclusion}
\Bigg| \E_{  \substack{\vec v \sim \cU(S)\\ \multix \sim P_{A,v} }}
\left[  \sgn(p(\vec x^{(1)}, \cdots, \vec x^{(n)})) \right]
-  
\E_{ \multix \sim \normal(\vec 0, \vec I)}    
\left[  \sgn(p(\vec x^{(1)}, \cdots, \vec x^{(n)})) \right] \Bigg| 
\leq 0.11.
\end{align}
where $P_{A,v}$ denotes the hidden direction distribution from \Cref{def:hidden_dir_dist}, and 
$\sgn : \R \to \{0,1\}$ is the sign function with $\sgn(x) = 1$ if and only if $x \geq 0$.
\end{theorem}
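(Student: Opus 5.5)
Since this statement is imported from \cite{diakonikolas2025ptf}, the plan below is a reconstruction of the natural route rather than the argument of that paper. The idea is to reduce the PTF statement to the low-degree likelihood-ratio bound of \Cref{cor:low-deg-hardness-general-problem}, by replacing $\sgn(p)$ with a polynomial in $p$ of controlled degree.

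\textbf{Step 1: normalize and smooth.} Normalize $p$ so that $\E_{X\sim \normal(0,I)^{\otimes n}}[p(X)^2]=1$. Fix a small scale $\eta$, to be taken polynomially small in $1/k$ so that the anticoncentration error below is well under $0.11$. Let $\psi$ be a smooth step function that agrees with $\sgn$ outside $[-\eta,\eta]$. Then $\sgn(p)-\psi(p)$ is nonzero only on the event $|p|\le \eta$. Under the null this event has probability $O(k\eta^{1/k})$ by Carbery--Wright.

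\textbf{Step 2: polynomial approximation.} Replace $\psi$ on $[-R,R]$ by a univariate polynomial $q$ of degree $r=\tilde O(R^2/\eta^2)$ (a Jackson-type estimate), with uniform error $0.01$. Here $R$ is chosen so that $\Pr[|p|>R]$ is negligible. Under the null this follows from hypercontractivity: $\Pr[|p|>R]\le e^{-\Omega(R^{2/k})}$. Outside $[-R,R]$ the function $|q(p)|$ is controlled by a Cauchy--Schwarz tail estimate, again via hypercontractivity, since $q\circ p$ has degree $kr$.

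\textbf{Step 3: apply the low-degree bound.} The statistic $F:=q\circ p$ is a polynomial of total degree $kr$, hence of sample-wise degree at most $(kr,kr)$. Write $L^{\le}$ for the projection $\E_{v}[(\overline{P}_{A,v}^{\otimes n})^{\le\infty,kr}]$ of the averaged likelihood ratio onto polynomials of sample-wise degree $(\infty,kr)$. Then
\begin{align*}
\left|\E_{H_1}[F]-\E_{H_0}[F]\right| = \left|\langle L^{\le}-1, F\rangle_{D_0^{\otimes n}}\right| \le \left\|L^{\le}-1\right\|_{D_0^{\otimes n}}\,\|F\|_{D_0^{\otimes n}} .
\end{align*}
I would use the version of \Cref{cor:low-deg-hardness-general-problem} whose right-hand side is small, rather than $\le 1$. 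This is obtained from the same SQ-dimension argument of \cite{BBH+2021}: when $kr<d^{c}$ and $n<d^{(1/4-c^*)m}$, the norm $\|L^{\le}-1\|$ is at most $d^{-\Omega(c^* m)}$. The assumption $\max(k,m)<d^{c^*/C^*}$ keeps $kr$ below $d^{c}$. Finally, $\|F\|$ is at most $2^{O(kr)}$ by hypercontractivity, and this factor is absorbed by $d^{-\Omega(c^*m)}$.

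\textbf{Step 4: transfer the error terms to the alternative.} The remaining task is to show that the same two error events are small under the planted distribution: $\Pr[|p|\le\eta]$ and $\Pr[|p|>R]$ must be small when $v\sim\cU(S)$ and the samples are drawn from $P_{A,v}$. The plan is to express each indicator, up to an additional smoothing error, as a polynomial in $p$ of degree $O(kr)$ and apply the Step 3 comparison once more. This reduces each error probability to its null counterpart plus $d^{-\Omega(c^*m)}$.

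\textbf{Main obstacle.} I expect Step 4, anticoncentration under the alternative, to be the hard part. Carbery--Wright is not available for $P_{A,v}$, because $A$ is an arbitrary moment-matching distribution. Approximating the indicator of $\{|p|\le\eta\}$ by a polynomial again loses near the boundary, which makes the argument circular. To break the circularity I would use a sandwiching argument: take polynomials $q_-\le \1_{[-\eta,\eta]}\le q_+$ whose $\normal(0,1)$-expectations, evaluated along $p$, differ by $O(\eta^{1/k})$. Since $q_+\ge \1_{[-\eta,\eta]}$ pointwise, the upper bound under the alternative only needs $q_+$, whose planted expectation is then controlled by Step 3, so anticoncentration near $0$ is never needed under the alternative. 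If that route fails, the fallback is the more structural route through a Gaussian-like coupling of the hidden direction.
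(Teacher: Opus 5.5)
The paper does not prove this statement; it is imported from \cite{diakonikolas2025ptf}. Your reduction to the low-degree likelihood ratio has a genuine gap.

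\textbf{Uniformity in $A$.} The theorem holds for \emph{every} $A$ matching $m$ moments, with no assumption on $\chi^2(A,\normal(0,1))$ or on moments of order $>m$. Your Step 3, however, asserts $\|L^{\le}-1\|_{D_0^{\otimes n}}\le d^{-\Omega(c^*m)}$ with no dependence on $A$, and the SQ-dimension/LDLR computation does not give this. The pairwise correlations are $\sum_{i>m}\langle u,v\rangle^i(\E_A[h_i])^2$, so the bound carries a factor $\chi^2(A,\normal(0,1))$ (compare the $1/\chi^2$ in the sample bound of \Cref{cor:low-deg-hardness-general-problem}). If you truncate each sample to degree $kr$, it instead carries a factor $\sum_{i\le kr}(\E_A[h_i])^2$. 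Neither is controlled over the class:
\begin{itemize}
\item For discrete $A$ the likelihood ratio does not exist.
\item If $A$ has only $m$ finite moments, $\E_{H_1}[q(p)]$ need not be finite, even for linear $p$.
\item Place a tiny mass far out, then repair the first $m$ moments by a small polynomial correction on $[-1,1]$ as in \Cref{lem:correct_moments}. The moments stay matched, while $\E_A[h_{m+2}]$, and hence the LDLR, becomes arbitrarily large.
\end{itemize}
Step 4 inherits the same defect. Bounding $\E_{H_1}[|q(p)|\1\{|p|>R\}]$, or transferring $\E[q_+(p)]$ from $H_0$ to $H_1$, is again a comparison of expectations of unbounded high-degree polynomials. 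Sandwiching removes the need for anticoncentration under $H_1$, but not this comparison. What makes the theorem true uniformly in $A$ is that $\sgn(p)$ is bounded, and replacing it by an unbounded polynomial discards exactly that. The cited work instead handles the threshold function directly, building on techniques from pseudorandom generators for PTFs.

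\textbf{Degree and norm bookkeeping.} Independently of the above, Steps 1--3 fail in the stated parameter range. For Carbery--Wright to give error well below $0.11$ you need $\eta\le (c/k)^k$, i.e.\ $\eta=k^{-\Theta(k)}$, not polynomially small in $1/k$. Bernstein's inequality then forces any $q$ that is $O(1)$ on $[-R,R]$ and rises by about $1$ across $[-\eta,\eta]$ to have degree $r\gtrsim R/\eta\ge k^{\Theta(k)}$. Hence $\deg F=kr=k^{\Theta(k)}$, and the hypothesis $k<d^{c^*/C^*}$ does \emph{not} keep $kr$ below $d^{c}$ once $k\gg\log d/\log\log d$.

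The bound $\|F\|\le 2^{O(kr)}$ does not follow either. Off $[-R,R]$ the approximant is only controlled by $|q(y)|\lesssim(2|y|/R)^r$. With $\E_{H_0}|p|^{2r}\le(2r-1)^{kr}$, your Cauchy--Schwarz estimate gives $\|F\|\lesssim(2/R)^r(2r)^{kr/2}$, which is $r^{\Theta(kr)}$ for $R=2^{O(k)}$. That estimate makes the tail term small only when $R^{2/k}\gtrsim r$, which is incompatible with $r\gtrsim R/\eta$ for $k\ge 2$. Even granting $2^{O(kr)}$, a factor $d^{-\Omega(c^*m)}$ cannot absorb it when $m=2$ and $d$ is only about $k^{C^*/c^*}$.
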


The corollary for robust mean estimation is stated below.

\begin{corollary}[Hardness of mean estimation against PTFs]
\label{cor:PTFs}
Consider the same hypothesis testing problem as in \Cref{thm:SQ} and let $m$ be defined as in \Cref{thm:SQ}. There is a way for the adversary of \Cref{def:cont_model} to corrupt the alternative hypothesis $P =\cN(\delta u,I)$ into a distribution $\tilde P$ such that the following holds.
There exists a sufficiently large absolute constant $C^*$ such that the following holds.
For any $d,k,n,m \in \mathbb Z_+$ such that (i) $m$ is even, (ii) $\max(k,m) < d^{c^*/C^*}$, and (iii) $n < d^{ \Omega(m)}$,
we have that 
if $p: \R^{n \times d} \mapsto \R$ is a degree-$k$ polynomial,  then:
\begin{align}\label{eq:thm_conclusion}
\Bigg| \E_{  \substack{\vec v \sim \cU(S)\\ \multix \sim P_{A,v} }}
\left[  \sgn(p(\vec x^{(1)}, \cdots, \vec x^{(n)})) \right]
-  
\E_{ \multix \sim \normal(\vec 0, \vec I)}    
\left[  \sgn(p(\vec x^{(1)}, \cdots, \vec x^{(n)})) \right] \Bigg| 
\leq 0.11.
\end{align}
where $P_{A,v}$ denotes the hidden direction distribution from \Cref{def:hidden_dir_dist}, and 
$\sgn : \R \to \{0,1\}$ is the sign function with $\sgn(x) = 1$ if and only if $x \geq 0$.
\end{corollary}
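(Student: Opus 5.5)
The statement is \Cref{cor:PTFs}. I would derive it directly from \Cref{thm:main}, taking as the hidden-direction distribution $A$ the conditional distribution on visible samples built in \Cref{prop:moment_matching}. The proof has three steps.

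\textbf{Step 1 (the corruption).} The adversary uses the function $f(x)=g(x)+p(x)\1(|x|\le 1)$ from \Cref{lem:correct_moments} along the direction $u$, and leaves every orthogonal direction untouched. In the multivariate picture this means $f_d(x)=f(u^\top x)\,\phi_{\perp u}(x)$. This is a valid choice, because the sandwich $(1-\eps)\phi(u^\top x-\delta)\le f(u^\top x)\le \phi(u^\top x-\delta)$ becomes the required multivariate sandwich once it is multiplied by the Gaussian density on $u^\perp$. The conditional law of the visible samples is then exactly $P_{A,u}$ in the sense of \Cref{def:hidden_dir_dist}, and $A$ matches the first $m$ moments of $\cN(0,1)$.

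\textbf{Step 2 (the missing symbols).} As in the reduction preceding \Cref{prop:moment_matching}, I would handle $\perp$ by deleting samples from the null hypothesis at the same rate $1-\int f$. The number of visible samples then has the same binomial law under both hypotheses. Conditioned on that number $n'\le n$, a PTF test on the full sample restricts to a PTF of degree at most $k$ in the $n'$ visible points. Averaging the bound over $n'$ therefore reduces the claim to the statement with no missing samples.

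\textbf{Step 3 (parity and parameters).} \Cref{thm:main} requires $m$ to be even, so if needed I would replace $m$ by $m-1$ or $2\lfloor m/2\rfloor$. Moment matching only becomes weaker under this change, and the loss is absorbed into the $\Omega(\cdot)$. I would then fix $c^*$, say $c^*=1/8$, so that condition (iii) $n<d^{(1/4-c^*)m}$ reads $n<d^{\Omega(m)}$. With these choices \Cref{thm:main} applies verbatim with hidden direction $v=u$, averaged over a uniformly random $u$, and gives the bound $0.11$.

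\textbf{Main obstacle.} Steps 1 and 3 are bookkeeping. The only delicate point is Step 2: checking that the $\perp$-augmented samples do not break the PTF structure. A test may treat $\perp$ entries arbitrarily, so in the conditioning I would encode each $\perp$ as a fixed value. For each fixed visibility pattern the test is then a degree-$k$ PTF in the visible coordinates, and \Cref{thm:main}, applied with $n'$ samples, bounds its advantage uniformly over all patterns.
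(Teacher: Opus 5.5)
Your proposal is correct and is essentially the paper's argument: take $A$ to be the visible-sample distribution built in \Cref{lem:correct_moments}, note that the corrupted alternative restricted to its visible samples is exactly $P_{A,u}$, and invoke \Cref{thm:main}; your parity and $c^*$ bookkeeping supplies details the paper leaves implicit. Your Step 2 is not strictly needed for the statement as written, since its conclusion already compares $P_{A,v}$ with $\cN(0,I)$ directly, but it correctly formalizes the paper's earlier remark that one can randomly delete an $\eps/2$-fraction of null samples so that the $\perp$ patterns have the same law under both hypotheses.
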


For an arbitrary polynomial $p$, 
the runtime for this computation 
is on the order of $\poly((nd)^k)$ and thus, \Cref{thm:main} implies an inherent trade-off between the 
exponential runtime $(nd)^{d^{\Omega(1)}}$, and 
the sample complexity $d^{\Omega(m)}$ for the family of PTF tests.

\end{document}